\newcommand{\mR}{\mathbb{R}} 
\newcommand{\mN}{\mathbb{N}}
\newcommand{\mZ}{\mathbb{Z}}
\newcommand{\mC}{\mathbb{C}}
\newcommand{\E}{\mathbb{E}}
\newcommand{\var}{\mathrm{Var}}
\newcommand{\degprof}{\mathrm{DegP}}
\begin{document}

\title{Generalization on the Unseen, Logic Reasoning and Degree Curriculum}

\author{\name Emmanuel Abbe \email emmanuel.abbe@epfl.ch \\
       \addr EPFL, Apple,\\
        Lausanne, VD, Switzerland
       \AND
       \name Samy Bengio \email bengio@apple.com \\
       \addr Apple,\\
       Cupertino, CA, USA
       \AND
       \name Aryo Lotfi \email aryo.lotfi@epfl.ch \\
       \addr 
       EPFL,\\
       Lausanne, VD, Switzerland
       \AND
       \name Kevin Rizk \email kevin.rizk@epfl.ch \\
       \addr EPFL,\\
       Lausanne, VD, Switzerland}

\editor{Kilian Weinberger}

\maketitle

\begin{abstract}
This paper considers the learning of logical (Boolean) functions with a focus on the {\it generalization on the unseen (GOTU)} setting, a strong case of out-of-distribution generalization. This is motivated by the fact that the rich combinatorial nature of data in certain reasoning tasks (e.g., arithmetic/logic) makes representative data sampling challenging, and 
learning successfully under GOTU gives a first vignette of an `extrapolating' or `reasoning' learner. 
We study how different network architectures trained by (S)GD perform under GOTU and provide both theoretical and experimental evidence that for sparse functions and a class of network models including instances of Transformers, random features models, and linear networks, a {\it min-degree-interpolator} is learned on the unseen. More specifically, this means an interpolator of the training data that has minimal Fourier mass on the higher degree basis elements. 
These findings lead to two implications: (1) we provide an explanation to the {\it length generalization problem} for Boolean functions (e.g., Anil et al.\ 2022); (2) we introduce a  curriculum learning algorithm called {\it Degree-Curriculum} that learns monomials more efficiently by incrementing supports. Finally, we discuss extensions to other models or non-sparse regimes where the min-degree bias may still occur or fade, as well as how it can be potentially corrected when undesirable. 
\end{abstract}

\begin{keywords}
  reasoning, out-of-distribution generalization, implicit bias, length generalization, curriculum learning
\end{keywords}

\section{Introduction}

Neural networks trained by stochastic gradient descent (SGD) have proved to be a powerful learning paradigm when there is enough representative data about the distribution to be learned, specifically in applications involving images or text where there is also a good understanding of the relevant architectures.

There is now an increasing interest in tackling tasks involving more `reasoning' components, which depart from classical perception tasks of images and texts. While such tasks remain vaguely defined, a list that we consider here under this class is given by: 
(1) arithmetic and algebra \citep{saxton2019analysing, lewkowycz2022solving-minerva}, (2) synthetic tasks such as PVR \citep{Zhang2021PointerVR} and LEGO \citep{zhang2022unveiling}, (3) visual reasoning such as CLEVR \citep{johnson2017clevr}, (4) physical reasoning such as Phyre \citep{bakhtin2019phyre}, (5) algorithmic data such as CLRS \citep{velivckovic2022clrs} and reasoning on graphs \citep{mahdavi2022better}.

One common trademark of these tasks is that the input space is usually of a discrete and combinatorial nature, and consequently, the data may not necessarily lay on a low-dimensional manifold that is well sampled. In various cases, the input space may even have a variable length. This combinatorial nature is already present in text, but it is further amplified in, say, arithmetic since most symbol combinations could a priori represent a valid input (in contrast to text). Further, the target function in such tasks may rely on a large composition of logical steps or mathematical operations that require to be jointly learned. Therefore, in such reasoning tasks, the setting with abundant representative data seems less prominent. This motivates us to focus on a strong out-of-distribution (OOD) generalization setting.  

For instance, when learning arithmetic or logic functions on a training set with a bounded length or a limited number of truth assignments, how would the neural network generalize on more general input assignments? (This is a case of length generalization.) 
When training a neural network to learn a Boolean formula, such as a voting scheme on data from a polarized cohort of voters, how does the network generalize to an unpolarized cohort? 

We thus consider the problem of learning functions with a holdout domain where part of the distribution support is barely/never seen at training, and with target functions that are Boolean to capture the discrete and combinatorial nature of various reasoning tasks (e.g., arithmetic, decision trees, logical circuits). Learning successfully under holdout gives a first vignette that the learner is operating with a certain amount of `reasoning' or `extrapolation' since memorization is voided on the unseen domain.

\subsection{Our Main Contributions}
\begin{enumerate}
\item We lay down some basic principles of stronger generalization requirements that rely on the `generalization on the unseen (GOTU)' performance metric, defined as a strong case of OOD generalization to measure `extrapolating' or `reasoning' properties of models on considered tasks.
\item We study how standard neural network architectures trained by (S)GD perform on the GOTU metric, in particular, which solutions are learned on the unseen domain for such architectures: \\
(i) we prove three theoretical results showing that for a class of network models including random features model (Theorem \ref{thm:random-features}), deep diagonal linear networks (Theorem \ref{thm:diagonal}), and 2-layer fully connected linear neural networks (Theorem \ref{thm:2-linear}), a {\it min-degree-interpolator (MD interpolator)} is learned on the unseen; 
\\(ii) we show experimental results (Section \ref{sec:exps}) supporting that encoder-only Transformers tend to also have the {\it min-degree bias (MD bias)} towards MD interpolators.

The MD interpolator is defined as the interpolator of minimal {\it degree-profile}, i.e., the Boolean function interpolating the training data and having a Fourier-Walsh transform whose energy concentrates on basis elements of lowest possible degree. Connections to algebraic geometry are given in Appendix \ref{app:vanishing-ideals} in order to characterize how MD interpolators can be constructed from the `vanishing ideal' of the 
seen data. We also point out that very large learning rates or other architectures (such as mean-field networks) can exhibit leaky MD bias (i.e., assigning larger mass on higher-degree monomials); see Appendix \ref{app:learning-rate-sensitivity}.\footnote
{Our code is available at \url{https://github.com/aryol/GOTU}}
\item Using these, we obtain two additional results:\\
(i) we provide a formal explanation (Theorem \ref{thm:length-gen}) to the `length generalization problem' discussed in the work of \citet{anil2022exploring-length} (for the case of bounded weight vectors, also related to the work of \citealp{zhang2022unveiling});\\ (ii) we turn the min-degree bias into an asset to accelerate learning by introducing a curriculum learning algorithm called `Degree-Curriculum' (\cref{alg:degree-curriculum}), which successively increases the input complexity with respect to Hamming weights in order to incrementally learn the monomials support (see Section \ref{curr}). 
\item Finally, we provide experimental results on the role of the sparsity condition (\cref{sec:dimension-sensitivity}) and the role of the causal attention mask in Transformers (\cref{sec:causal-transformer}). We show that the min-degree bias may still be present or fade depending on these conditions. We also demonstrate how the min-degree bias can be mitigated when undesirable using symmetry-based regularizers in \cref{sec:conclusions}.

\end{enumerate}

This work is an extension of earlier work \citep{icml-version}. In this extension, we have added the theoretical analysis for 2-layer fully connected linear neural networks (Theorem \ref{thm:2-linear}) along with experimental results (in \cref{app:exps}) for the general case conjectured in Conjecture \ref{conj:linear}. We also investigated relaxing the sparsity condition in \cref{sec:dimension-sensitivity} and using Transformers with causal attention masking in \cref{sec:causal-transformer}. We have also included a preliminary attempt to tame the min-degree bias when undesirable using a symmetry-based regularization term in \cref{sec:conclusions}.
\section{Generalization on the Unseen}\label{sec:gotu}
The classical setting of statistical learning theory requires the control of three error pillars for the generalization of a learning model: (1) the approximation error (depending on the properties/richness of the model class), (2) the estimation error (depending on the properties/richness of the training set), (3) the optimization error (depending on the properties/richness of the training algorithm). 

In some of the recent deep learning applications for computer vision and natural language processing, the richness of the training set, the size of the model and its alignment with the data, as well as the computational power, make the three pillars well controlled. The recent success of large language models (LLM) and scaling laws are perfect examples of this phenomenon \citep{alabdulmohsin2022revisiting}.

As mentioned in the introduction, the type of data occurring in reasoning tasks is slightly different due to the richness and combinatorial nature of the data. To better cope with this challenge, we propose in this paper to depart from the  classical generalization objectives described with the three pillars. We focus instead upfront on distribution shift and, more specifically, a strong case of OOD generalization where part of the distribution domain is almost/completely unseen at training but used at testing (in particular, prohibiting any memorization scheme). 

Of course, on the unseen domain, all bets are off for generalization: one cannot hope for an algorithm trained on a given data domain to perform well on a larger data domain without any incentive to do so. Yet various algorithms will have various implicit biases on the unseen and thus produce various solutions on the unseen. Understanding this `bias on the unseen' for different network architectures and Boolean target functions is the objective of this paper.

We start by redefining the generalization error when the train and test distribution are not necessarily the same. 
\begin{definition}
Let $X_1, \ldots, X_m$ be drawn i.i.d.\ under $\mu_1$ and labeled by a target function $f$, and let $\tilde{f}$ be the function learned by a learning algorithm. The algorithm has $(\mu_1,\mu_2,m,\epsilon)$-generalization (for loss $\ell$) if $$\mathbb{E}_{X_1, \ldots, X_m \sim \mu_1, X_{m+1} \sim \mu_2} \left[\ell\left(\tilde{f}_{X_1, \ldots, X_m}\left(X_{m+1}\right) , f\left(X_{m+1}\right)\right)\right] \le \epsilon.$$ In other words, the algorithm is trained under distribution $\mu_1$ and tested under distribution $\mu_2$, producing $\epsilon$-test-loss with sample complexity $m$.
\end{definition}
Now we focus on a special case of interest, a strong case of OOD generalization where we essentially see all the data on some part of the domain but miss another part. Naturally, we will next study a `soft version' of this metric, where both in-distribution and out-of-distribution generalization are considered, but this strong case is already rich and insightful. 
\begin{definition}[Generalization on the Unseen]
    Consider a given sample domain $\Omega$. During training, part of $\Omega$ is not sampled, and we call this the unseen domain (or the holdout set) $\mathcal{U}$. At testing, however, we sample from the full set $\Omega$. This represents a special case of the previous definition where $\mu_1=\mu|_{\Omega \setminus \mathcal{U}}$ and $\mu_2=\mu|_{\Omega}$ for some $\mu$. 
    
    We now further specify the setting: we assume that the training error is 0 on the training domain $\Omega \setminus \mathcal{U}$, e.g., by seeing all the samples in $\Omega \setminus \mathcal{U}$, and define the generalization on the unseen (GOTU) for an algorithm $\tilde{f}$ and target function $f$ as 
    \begin{align}
        GOTU( f, \tilde{f}, \mathcal{U}) = \mathbb{E}_{X \sim_U \mathcal{U}}[\ell(\tilde{f}_{\Omega \setminus \mathcal{U}}(X) , f(X))],
    \end{align}
    where $\sim_U \mathcal{U}$ indicates uniform sampling from $\mathcal{U}$.
    Notice we only sample on $\mathcal{U}$ at testing because we assumed zero training error and considered the whole $\Omega \setminus \mathcal{U}$ as the training set.  
\end{definition}
A few remarks are in order:
\begin{itemize}
    \item GOTU is a special case of OOD and distribution shift setting that is extremal in the sense that it completely gives access to part of the distribution domain and completely omits the complement. Since we consider rich enough models to interpolate the data, the `statistical' and `approximation' pillars of the learning problem are removed (there may still be randomness used by the learning algorithm, thus statistical analysis may still be relevant). The problem thus turns into a pure optimization problem where the central object of study is the implicit bias of the learning algorithm on the unseen. Note that this is not exactly the same implicit bias as studied in the setting of overparametrized models \citep{soudry2017implicit, gunasekar2017implicit, gunasekar2018implicit, arora2019implicit, razin2020implicit, chizat2020implicit, moroshko2020implicit} as here we have the distribution shift and investigate the behavior of the equivalence class of interpolators on the unseen $\mathcal{U}$. 
    \item In some experiments, we  replace the `perfect' training data on the seen domain with a `large' sampling on the seen domain. We  defined the GOTU in the extreme case to simplify the number of parameters to track and to allow for cleaner theorem statements, but there could also be a sampling rate on $\Omega \setminus \mathcal{U}$; this is left for future research. Also, we assume a uniform prior here because this is a natural first case for arithmetic/logic tasks, but this could also be relaxed. 
    \item We will consider different subsets $\mathcal{U}$ in the applications. We are sometimes interested in $\mathcal{U}$s for which the data invariances or equivariances could give hope to learn. This is further specified with the next definition.    
    \end{itemize}

\begin{definition}
A function $f\colon \Omega \to \mathbb{R}$ is (1) $G$-invariant, or invariant under the group action $G$ on $\Omega$, if $f(gx)=f(x)$ for all $g \in G$, $x \in \Omega$; (2) $G_{i, o}$-equivariant, or equivariant under $G_{i,o}$, if $f(g_ix)=g_o f(x)$ for all $(g_i, g_o) \in G_{i, o}$ and $x\in \Omega$.\footnote{See the works of \citet{dummit2004abstract, ravanbakhsh2017equivariance, zhou2020meta} for more details on group actions and the definitions of invariance and equivariance.}
\end{definition}
As stated earlier, we cannot expect algorithms to generalize on the unseen domain by themselves. However, we can hope that certain training algorithms will catch invariances/equivariances and thus extrapolate. 
For example, consider the parity function on $d$ bits $x_1, \ldots, x_d \in \{\pm 1\}$ defined as $f(x_1, \ldots, x_d) = x_1x_2 \cdots x_d$. This function is permutation-invariant (group $G=S_d$). In particular, if one uses a model favoring permutation symmetries, one may not have to see all inputs that are permutation equivalent. There has been a series of works designing layers/architectures that are equivalent under a prespecified family of actions (e.g., all permutations) \citep[see][]{ravanbakhsh2017equivariance, zaheer2017deep, hartford2018deep}. More recently, \citet{zhou2020meta} propose a method to learn invariances in a multi-task setting using meta-learning. 
An example of an equivariant Boolean function would be the majority function on $\{+1,-1\}^d$, $d$ odd, with the action of global bit flipping on the input and the output (since the majority is reversed if all the bits are flipped). Thus a holdout on vectors of dual-weight could again be handled by a model having such an equivariance.   
Note that we are also interested in cases where these equi/in-variances are not  present in the target, to understand what solutions neural networks favor on the unseen.

\section{Results}
We consider $f \colon \Omega \to \mathbb{R}$ with $\Omega = \{\pm 1\}^d$. We use $[d]$ to denote $\{1, 2, \ldots, d\}$. We introduce some preliminary material on Boolean functions in the next part and then state our results. 
\subsection{Preliminaries}
\paragraph{Fourier-Walsh Transform} Any function $f\colon \{\pm 1\}^d \longrightarrow \mathbb{R}$ can be expressed as $f(x) = \sum_{T \subseteq [d]} \hat{f}(T)\chi_T(x)$ where $\chi_T(x) = \prod_{i \in T} x_i$ and $\hat{f}(T) = \mathbb{E}_{X \sim_U \{\pm 1\}^d}[\chi_T(X)f(X)]$ are the monomials and the coefficients respectively \citep{o'donnell_2014}. For example, the majority function on 3 bits can be written as $\mathrm{Maj}(x_1, x_2, x_3) = \frac{1}{2}(x_1+x_2+x_3 - x_1x_2x_3)$.

\paragraph{Unseen Domain and Vanishing Ideals} We now introduce the unseen domain $\mathcal{U}$. First, consider the canonical holdout \citep{abbe2022learning}, when a bit is frozen during training, e.g., $x_i=1$ and  $\mathcal{U}=\{x \in \{\pm 1\}^d: x_i=-1\}$. In this case, one can see that any function of the form $f(x) + (1-x_i)\Delta(x)$ ($\Delta(x)$ is arbitrary) is an equivalent interpolator on the training data. 
For general unseen domain $\mathcal{U} \subseteq \Omega = \{\pm 1\}^d$,  there exist polynomials $v_1(x), \ldots, v_k(x)$ such that $x\in \Omega \setminus \mathcal{U} \iff v_1(x)=\ldots=v_k(x)=0$ (see Appendix \ref{app:vanishing-ideals}). Consequently, all solutions of the form 
$f(x) + \Delta_1(x)v_1(x) + \cdots +\Delta_k(x)v_k(x)$ are equivalent at training. This is the quotient space of $f$ under the vanishing ideal defined by $\Omega \setminus \mathcal{U}$. We refer to Appendix \ref{app:vanishing-ideals} for more details on this relation to algebraic geometry.     

We now define measures of complexity relevant to us. 
\begin{definition}[Degree]
For a function $f\colon \{\pm 1\}^d \to \mR$, the degree $\deg(f)$ refers to the maximum degree of the monomials present in the Fourier-Walsh transform of $f$. For example, the degree of $\mathrm{Maj}(x_1, x_2, x_3) = \frac{1}{2}(x_1+x_2+x_3 - x_1x_2x_3)$ is 3.
\end{definition}
\begin{definition}[Degree profile]\label{def:degree-prof}
    For $f\colon \{\pm 1\}^d \to \mR$, we define the degree-profile of $f$, $\degprof(f) \in \mR^{d+1}$ such that $\degprof(f)_i = \sum_{T \subseteq [d], |T| = d+1-i} \hat{f}(T)^2$ for $1 \leq i \leq d+1$. Furthermore, we consider lexicographic ordering on these vectors, i.e., $\degprof(f) < \degprof(g)$ iff $\exists i\; \degprof(f)_i < \degprof(g)_i$ and $\degprof(f)_j = \degprof(g)_j \; 1\leq j < i$. For instance, the degree-profile of $\mathrm{Maj}(x_1, x_2, x_3) = \frac{1}{2}(x_1+x_2+x_3 - x_1x_2x_3)$ is $(1/4, 0, 3/4, 0)$.
\end{definition}
Intuitively, $\degprof(f)_i$ represents the Fourier mass on degree-$(d+1-i)$ monomials (for $1 \leq i \leq d+1$) and $\degprof(f)$ captures the spectrum of $f$. Note that the degree-profile is a stronger notion than the degree, i.e., $\deg(f) < \deg(g) \implies \degprof(f) < \degprof(g)$.

\begin{definition}[Min-degree interpolators]
Consider a target function $f$ and unseen domain $\mathcal{U}$. The set of interpolators is defined as $\mathcal{F}_{\mathrm{int}} (f,\mathcal{U}) = \{ g\colon\{\pm 1\}^d \to \mR \mid g(x) = f(x), \forall x \in \mathcal{U}^c\},$ where $\mathcal{U}^c \coloneqq \Omega \setminus \mathcal{U}$ is the seen domain. 
We call an interpolator a {\it min-degree interpolator (MD interpolator)} of $(f,\mathcal{U})$ (or of $\{x,f(x)\}_{x \in \mathcal{U}^c}$) if it is an element of $\mathcal{F}_{\mathrm{int}} (f,\mathcal{U})$ that minimizes the degree-profile with respect to the lexicographic order. This means that no part of the Fourier-Walsh expansion of the interpolator could be replaced with a lower-degree alternative and still interpolate. 

\end{definition}

For example, consider the case of `canonical holdout' where we always have $x_1=1$ at training, i.e., $\mathcal{U}=\{x \in \{\pm 1\}^d : x_1=-1\}$, and target function $x_1x_2 + x_1x_3x_4$. Here, both $x_1x_2 + x_3x_4$ and $x_2+x_3x_4$ are of degree $2$ but only $x_2+x_3x_4$ is an MD interpolator since $x_1x_2$ in the first function is replaceable with the lower-degree $x_2$. 
Further, note that there may be multiple interpolators having minimal   max-degree rather than degree-profile. For example, consider the unseen domain induced by $x_i = x_j$ and target  $f(x) = x_i + x_j$. Then $2x_i$ and $x_i + x_j$ are both interpolators with minimal max-degree, but only $x_i + x_j$ is an interpolator with a minimal degree-profile. In fact, the MD interpolator is always unique (if $f_1$ and $f_2$ are interpolators with the same degree-profile, then $\frac{f_1+f_2}{2}$ is an interpolator with a smaller degree-profile unless $f_1=f_2$.) 


\subsection{Main Theoretical Results}
We show that certain models have a min-degree implicit bias on the unseen. We start by giving another example.

\subsubsection{Result Preview from an Example}
Consider trying to learn the majority target function on $3$ voters $x_1,x_2,x_3$ having the following data distribution: voters $1$ and $2$ never vote both negatively, i.e., $(x_1,x_2)$ is never $(-1,-1)$ in the training data. Now train a neural network to learn the target on such a training data distribution (with only 3 variables, one will quickly see all sequences satisfying the required condition; this is to simplify the example, in our results, we consider higher dimensional versions of such examples). Since we always have $(x_1,x_2) \ne (-1,-1)$, it must be the case that $(1-x_1)(1-x_2)=0$ (this ensures that either $x_1$ or $x_2$ must be equal to $1$). Thus, the functions $f(x)$ or $f(x)+\Delta(x) (1-x_1)(1-x_2)$ (for any arbitrary $\Delta$) are equivalent on the training data. One can thus wonder which $\Delta$ function will a neural network trained by (S)GD converge to. There is no reason to expect that it will converge to $\Delta=0$; so can we characterize which $\Delta$ will occur? 

Our main results show that ---(i) provably for random features model or diagonal/2-layer linear networks in the linear case (three architectures that we can analyze rigorously), and (ii) empirically for encoder-only Transformers --- (S)GD will converge to a $\Delta$ that makes  $f(x)+\Delta(x) (1-x_1)(1-x_2)$ having the lowest `degree-profile' (see Definition \ref{def:degree-prof}), which in the above majority example is obtained as follows: first expand the target in the basis of multivariate monomials, $\mathrm{Maj}(x_1,x_2,x_3)=(x_1+x_2+x_3-x_1x_2x_3)/2$, then find $\Delta(x)$ that makes $(x_1+x_2+x_3-x_1x_2x_3)/2+\Delta(x) (1-x_1)(1-x_2)$ having the least $\ell_2$ mass on the highest degree monomials, i.e., in this case, $\Delta(x)=x_3/2$, giving $(x_1+x_2+x_3-x_1x_2x_3)/2+\Delta(x) (1-x_1)(1-x_2)=(x_1+x_2+2x_3-x_1x_3-x_2x_3)/2$ which is degree 2 rather than 3 (see Figure \ref{fig:f4-all} for numerical experiments). This paper describes what are the general mathematical concepts behind this specific example: (i) Fourier-Walsh Boolean analysis, (ii) the notion of vanishing ideal, and (iii) minimal degree-profile interpolators and the implicit bias of neural networks towards them. To begin with, we formalize the concept of unseen domains using the following definition.

\begin{definition}
\label{def:sparse-setting}
We consider a $P$-dimensional latent function $h\colon\{\pm 1\}^P \to \mR$ embedded in ambient dimension $d$. More precisely, we consider learning $f\colon\{\pm 1\}^d \to \mR$ such that $f(x) = h(x_{i_1}, \ldots, x_{i_P})$. We further denote $I = \{i_1, \ldots, i_P\}$ and $x_I = (x_{i_1}, \ldots, x_{i_P})$. We also assume that some specific combinations of $x_I$ are not present in the training samples, i.e., $x_I \notin \mathcal{U}^{*} \subset \{\pm 1\}^P$ and define the unseen domain as $ \mathcal{U}=\{x \in \{\pm 1\}^d\mid x_I \in  \mathcal{U}^{*}\}$.
\end{definition}
Note that considering sparse functions enables us to define the unseen domain properly and differentiate between the unseen domain (where there are minimal structures) and unseen data (for example when there is uniform sampling). In the next section, we present our results on learning sparse Boolean functions with the random features model.

\subsubsection{Results for Random Features Model}
Our first result is for the random features (RF) model \citep{rahimi2007random}. The RF model was initially introduced to approximate kernels and enhance the time complexity of kernel methods \citep{rahimi2007random}. RF models can also be viewed as approximations of neural networks in the NTK regime  \citep{jacot2018neural,ghorbani2019limitations, mei2022generalization}. In this paper, we take the latter view on them as well, with the following formulation.
\begin{definition}[Random features model]\label{def:random-features-model}
Consider $x \in \mathbb{R}^d$ as the input; we define random features model with $N$ random features as 
\begin{equation}
    {f}_{\mathrm{RF}}(x;a, w, b) = \frac{1}{\sqrt{N}}\sum_{i=1}^{N} a_i\sigma(\langle w_i, x \rangle + b_i), 
\end{equation}
where $a_i \in \mathbb{R}$ are the trainable parameters, $\sigma$ is the activation function, and $w_i, b_i \sim \mathcal{N}(0, \frac{1}{d})^{\otimes d} \otimes \mathcal{N}(0, \frac{1}{d})$ are the random weights and biases. We use $\phi_{i}(x) \coloneqq \sigma(\langle w_i, x \rangle + b_i)$ as a shorthand notation for the $i$-th feature.
\end{definition}
The following activation property strengthens the condition presented in the work of \citet{AbbeINAL}.  
\begin{definition}[Strongly expressive]\label{def:strongly-expressive} We call a continuous activation function $\sigma\colon\mathbb{R} \to \mathbb{R}$ strongly expressive up to $P$ if (A1) $\sigma$ satisfies upper bound $\E_{g\sim \mathcal{N}(0,2)}[\sigma(g)^4] < \infty$; and (A2) $\forall T \subseteq [d], |T| \leq P\;  \E_{w, b}[\hat\phi_{w,b}(T)^2] = \Omega_d(d^{-|T|})$, where $\hat\phi_{w,b}(T) \coloneqq \E_{x}[\sigma(\langle w, x \rangle + b)\chi_T(x)]$ is the Fourier coefficient of $T$ in the random feature created by $w, b$. 
\end{definition}
As will be proven in Lemma \ref{lemma:random-low-deg}, property (A1) implies $\E[\hat\phi_{w,b}(T)^2] = O(d^{-|T|})$ for $|T|=O_d(1)$. Therefore, the second condition (A2) is ensuring that the model is able to strongly express degree $k \leq P$ monomials.

We note that $\hat\phi_{w,b}(T)^2$
has been studied in the work of \citet{AbbeINAL} as the initial alignment (INAL) between monomial $\chi_T(x)$ and $\phi_{w, b}(x)$. Indeed, based on Lemma A.2. of \citet{AbbeINAL}, the following conditions give us a family of strongly expressive activation functions. 
\begin{lemma}\label{lemma:hermite-nonzero}
    Any continuous polynomially-bounded function $\sigma$ such that its first $P$ coefficients in the Hermite expansion are non-zero is strongly expressive up to $P$.
\end{lemma}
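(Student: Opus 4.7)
The plan is to combine a routine moment estimate for (A1) with a Hermite-expansion calculation of the initial alignment $\E_{w,b}[\hat\phi_{w,b}(T)^2]$ for (A2); the second part is essentially the content of Lemma A.2 of \citet{AbbeINAL}, whose argument I sketch below. Condition (A1) is immediate: polynomial boundedness gives $|\sigma(z)| \le C(1+|z|)^r$ for some constants $C, r$, so $\E_{g\sim \mathcal{N}(0,2)}[\sigma(g)^4] \le C^4\,\E[(1+|g|)^{4r}] < \infty$ by the standard Gaussian moment formula.

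For (A2), expand $\sigma = \sum_{k\ge 0} c_k h_k$ in the orthonormal probabilist's Hermite basis; polynomial boundedness puts $\sigma$ in $L^2$ of any Gaussian measure, so $\sum_k c_k^2 < \infty$, and by hypothesis $c_k \neq 0$ for every $k \le P$. For $x \in \{\pm 1\}^d$ the variable $Z_x := \langle w, x\rangle + b$ is centered Gaussian with variance $v := (d+1)/d$, and for a pair $(x,x')$ the standardized pair $(Z_x,Z_{x'})/\sqrt{v}$ is jointly standard Gaussian with correlation $\rho(x,x') = (\langle x, x'\rangle+1)/(d+1)$. Setting $\tilde\sigma(z) := \sigma(\sqrt v\, z) = \sum_k \tilde c_k h_k(z)$, the Hermite orthogonality formula $\E[h_k(U) h_j(U')] = \delta_{jk}\rho^k$ for jointly standard Gaussians $(U,U')$ with correlation $\rho$ yields $\E_{w,b}[\sigma(Z_x)\sigma(Z_{x'})] = \sum_{k\ge 0} \tilde c_k^2\, \rho(x,x')^k$, and for each fixed $k$ the rescaled coefficient satisfies $\tilde c_k = c_k + O_d(1/d)$ since $v = 1 + 1/d$. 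Therefore
\begin{equation*}
\E_{w,b}\bigl[\hat\phi_{w,b}(T)^2\bigr] \;=\; 2^{-2d}\sum_{x,x'\in\{\pm 1\}^d}\chi_T(x)\chi_T(x')\sum_{k\ge 0} \tilde c_k^2\,\rho(x,x')^k.
\end{equation*}

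To extract the leading order, expand $(d+1)^k\rho(x,x')^k = (1+\sum_i x_ix'_i)^k$ multinomially and average $\chi_T(x)\chi_T(x')\prod_i(x_ix'_i)^{m_i}$ over uniform $x, x' \in \{\pm 1\}^d$: the average is non-zero only when $m_i$ is odd for $i \in T$ and even for $i \notin T$, which in particular forces $k \ge |T|$. For $k = |T|$ only the multi-index with $m_i = 1$ on $T$ survives, contributing $\tilde c_{|T|}^2\cdot |T|!/(d+1)^{|T|} = \Theta(d^{-|T|})$ since $\tilde c_{|T|} \to c_{|T|}\neq 0$. For each $k > |T|$ the surviving multi-indices have at most $\lfloor(k-|T|)/2\rfloor$ extra non-zero coordinates outside $T$, producing a contribution of order $d^{-|T|-1}$; crucially, every contribution is non-negative because it comes from the diagonal quadratic form $\sum_k \tilde c_k^2 \rho^k$, so there is no sign cancellation. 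Summing then yields $\E_{w,b}[\hat\phi_{w,b}(T)^2] = c_{|T|}^2 \cdot |T|!/(d+1)^{|T|} + O(d^{-|T|-1}) = \Omega_d(d^{-|T|})$, which is (A2). The main technical obstacle I anticipate is the tail bound $\sum_{k > |T|}\tilde c_k^2\,\rho^k$: the individual $k$-contributions carry combinatorial factors (multinomial coefficients and double factorials) that grow with $k$, and one must show they are absolutely dominated by the leading term uniformly in $d$ using only $\sum_k c_k^2 < \infty$ and the $1/d^m$ suppression from the multinomial count; this bookkeeping is precisely the substance of Lemma A.2 of \citet{AbbeINAL}.
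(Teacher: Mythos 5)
Your proof is correct, and it follows the route the paper itself delegates to Lemma A.2 of \citet{AbbeINAL} rather than proving directly: the paper gives no proof of this lemma, only the pointer. You have reconstructed the Hermite-expansion computation of the initial alignment, and all the key steps check out: the variance $v=(d+1)/d$ and correlation $\rho(x,x')=(\langle x,x'\rangle+1)/(d+1)$ are correct; the identity $\E_{w,b}[\sigma(Z_x)\sigma(Z_{x'})]=\sum_k\tilde c_k^2\rho(x,x')^k$ via Mehler/Hermite orthogonality is correct; and the multinomial parity count correctly isolates the $k=|T|$ term $\tilde c_{|T|}^2\,|T|!\,(d+1)^{-|T|}=\Theta(d^{-|T|})$ with $\tilde c_{|T|}\to c_{|T|}\neq 0$ by $L^2$-continuity of the Gaussian dilation. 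The non-negativity argument is also valid, and it is worth stating explicitly why: $\alpha_k(T):=\E_{x,x'}[\chi_T(x)\chi_T(x')\rho^k]=\E_{w,b}\big[\E_x[\chi_T(x)h_k(Z_x/\sqrt v)]^2\big]\ge 0$, since $h_k$-correlation of jointly Gaussian variables is $\rho^k$.

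One small refinement would make the argument cleaner and eliminate the technical obstacle you flag at the end. Condition (A2) only requires a lower bound $\Omega(d^{-|T|})$. Because every term $\tilde c_k^2\alpha_k(T)$ in the sum is non-negative (by the display above), you immediately get $\E_{w,b}[\hat\phi_{w,b}(T)^2]\ge\tilde c_{|T|}^2\,|T|!\,(d+1)^{-|T|}=\Omega(d^{-|T|})$ with no tail estimate whatsoever. The bookkeeping you anticipate (dominating $\sum_{k>|T|}\tilde c_k^2\alpha_k(T)$ uniformly in $d$ using only $\sum_k c_k^2<\infty$) is needed to show the \emph{matching upper bound} $O(d^{-|T|})$, i.e.\ that the sum is $\Theta(d^{-|T|})$ and not larger. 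That upper bound is true and is used elsewhere in the paper (property (A3) in Lemma \ref{lemma:random-low-deg}), but there it is derived differently, from the fourth-moment condition (A1) via Parseval and a symmetry-counting argument, which sidesteps the tail-sum bookkeeping altogether. So both directions of the $\Theta$ are available without the delicate Hermite tail estimate; for the lemma as stated, only the one-line non-negativity observation is needed.
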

For example, polynomial activation functions such as $(1+x)^k$ are  strongly expressive up to $k$. 
\begin{theorem}
\label{thm:random-features}
    

Let $f\colon\{\pm 1\}^d \to \mathbb{R}$ be a $P=O_d(1)$-sparse function to be learned in the GOTU setting (Definition \ref{def:sparse-setting}) by a random features model with parameters $(N,\sigma,a,b,w)$ (Definition \ref{def:random-features-model}) with a strongly expressive activation function. As $N$ diverges, the random features model can interpolate the training data with high probability. Furthermore, defining ${f}_{\mathrm{RF}}^{d, N} (\mathcal{U})$ to be the interpolating solution minimizing $\|a\|_2$ (i.e., the solution reached by gradient descent/flow starting from $a=0$ under $\ell_2$ loss), we have w.h.p.\ 
\begin{align}
  {f}_{\mathrm{RF}}^{d, N} (\mathcal{U}) \stackrel{N \to \infty}{\longrightarrow} \mathrm{MinDegInterp}(f,\mathcal{U})  + \epsilon_{d} 
\end{align}
where $\mathrm{MinDegInterp}(f,\mathcal{U})$ is the min-degree interpolator on the training data $\{x,f(x)\}_{x \in \mathcal{U}^c}$ and $\epsilon_{d}$ is a function on $P$ variables that tends pointwise to 0 as $d$ diverges. (We refer to the above as a `min-degree bias' or `MD bias'.) 
\end{theorem}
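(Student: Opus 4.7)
My plan is to take the infinite-width limit, reduce the RF implicit bias to a kernel ridgeless regression, diagonalize the kernel in the Fourier--Walsh basis, and then show that the resulting weighted $\ell_2$ minimization converges to the lex-min degree-profile interpolator as $d\to\infty$. For Step 1 (infinite-width limit), since $a$ enters linearly in $f_{\mathrm{RF}}$ and gradient flow from $a=0$ under $\ell_2$ loss converges to the minimum-$\|a\|_2$ interpolator, the learned predictor at a test point $x$ is $\phi(x)^{\top}\Phi^{+}y$, where $\Phi\in\mR^{|\mathcal{U}^c|\times N}$ is the feature matrix on the seen data and $y$ is the label vector. As $N\to\infty$, the empirical Gram $\tfrac{1}{N}\Phi\Phi^{\top}$ concentrates (using the fourth-moment bound (A1)) around the population kernel $K(x,x')=\E_{w,b}[\phi_{w,b}(x)\phi_{w,b}(x')]$, and standard random-features arguments show that the RF min-$\ell_2$ interpolator converges to the minimum-RKHS-norm interpolator of $K$.

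For Step 2 (diagonalization and reduction to $x_I$), I exploit the involution $(x,w)\mapsto(\tau_S x,\tau_S w)$ that flips the $S$-coordinates of both, which preserves $\sigma(\langle w,x\rangle+b)$ and is a distributional symmetry of the isotropic Gaussian prior on $w$. A change of variable yields $\hat\phi_{w,b}(T)=(-1)^{|T\cap S|}\hat\phi_{\tau_S w,b}(T)$, so the cross-expectation $\E_{w,b}[\hat\phi_{w,b}(T)\hat\phi_{w,b}(T')]$ picks up the sign $(-1)^{|T\cap S|+|T'\cap S|}$; taking $S=\{j\}$ for any $j\in T\triangle T'$ forces this to zero. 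Hence $K$ is diagonal in the Fourier--Walsh basis with eigenvalues $\lambda_T=\E[\hat\phi_{w,b}(T)^{2}]$, and by coordinate-permutation symmetry $\lambda_T$ depends only on $k=|T|$; call it $\lambda_k$. The RKHS norm reads $\|g\|_K^{2}=\sum_T\hat g(T)^2/\lambda_{|T|}$. Lemma~\ref{lemma:random-low-deg} together with (A1) gives $\lambda_k=O(d^{-k})$, while (A2) gives $\lambda_k=\Omega(d^{-k})$ for $k\le P$, hence $\lambda_k=\Theta(d^{-k})$ on the relevant range. Since $f$ and the unseen-domain constraint depend only on $x_I$, the projection $g^{*}(x):=\E_{x_{[d]\setminus I}}[g(x)]$ is still an interpolator and has strictly smaller RKHS norm than $g$ unless $\hat g(T)=0$ for all $T\not\subseteq I$; so the min-norm kernel interpolator is supported on $T\subseteq I$.

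For Step 3 (weighted $\ell_2$ to lex-min degree profile), the problem reduces to
\begin{equation*}
\min_{g\in\mathcal{F}_{\mathrm{int}}(f,\mathcal{U})}\ \sum_{k=0}^{P}\lambda_k^{-1}D_k(g),\qquad D_k(g):=\sum_{T\subseteq I,\,|T|=k}\hat g(T)^{2},
\end{equation*}
a quadratic over the fixed finite-dimensional affine set $\mathcal{F}_{\mathrm{int}}(f,\mathcal{U})$ viewed as a subset of $\mR^{\{\pm1\}^P}$, independent of $d$. Because $\lambda_k^{-1}/\lambda_{k-1}^{-1}=\Theta(d)$ diverges, a standard penalty-method argument kicks in: an inductive peel from the top degree $P$ downward shows that any excess mass at degree $k$ over that of $\mathrm{MinDegInterp}(f,\mathcal{U})$ costs $\Theta(d^{k})$ in the objective, while rearrangements at strictly lower degrees can save only $O(d^{k-1})$; so as $d\to\infty$ the minimizer matches the MD interpolator at degree $k$, and the remainder $\epsilon_d$ is a function of $x_I$ that vanishes pointwise.

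The main obstacle will be Step 3: converting the quantitative separation $\lambda_k^{-1}/\lambda_{k-1}^{-1}=\Theta(d)$ into pointwise convergence to the MD interpolator. This requires $d$-uniform control of the hidden constants in the $\Theta$'s from (A2), together with a careful inductive argument over the $P+1$ degree classes that exploits the uniqueness of the MD interpolator (the averaging argument noted in the excerpt). A secondary technical hurdle is the finite-$N$ control in Step 1, where one must bound below the smallest singular value of $\Phi$ restricted to the at-most $2^{P}$ effective input directions; strong expressiveness guarantees nondegeneracy in the $N\to\infty$ kernel and classical random-features concentration delivers the quantitative finite-$N$ statement.
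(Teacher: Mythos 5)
Your proposal follows essentially the same path as the paper's proof: both expand in the Fourier--Walsh basis, establish diagonality of the feature second-moment matrix via the sign-flip symmetry of the isotropic weight distribution (your involution argument is exactly the paper's proof of property (A4) in Lemma \ref{lemma:random-low-deg}), reduce the min-$\|a\|_2$ objective to the weighted $\ell_2$ quantity $\sum_T \hat g(T)^2\,\E[\hat\phi_{w,b}(T)^2]^{-1}$ via convergence of $FF^T$ (equivalently, of the empirical kernel), prune non-latent coordinates by the averaging/projection trick, and then invoke the $\Theta(d^{-k})$ eigenvalue scaling from strong expressiveness to extract the lex-min degree-profile interpolator plus a vanishing $\epsilon_d$. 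The only cosmetic difference is that you package Step 1 in kernel-ridgeless-regression/RKHS language while the paper works directly with $F^\dagger$; both accounts are equally terse in converting the $\Theta(d)$ scale separation between adjacent degree levels into pointwise convergence to the MD interpolator, so your identified "main obstacle" is genuinely the step where the most care is needed.
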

\emph{Proof Sketch.} In Lemma \ref{lemma:random-low-deg}, we show that random features generated by a strongly expressive $\sigma$ have in general a decaying degree-profile with 
$\E_{w,b}[\hat\phi_{w,b}(T)^2] = \Theta(d^{-|T|})$ for $|T| \leq P$. We then investigate the interpolators in the Fourier-Walsh basis and show that the minimality condition of $\|a\|_2$ is equivalent to learning the minimal degree-profile interpolator since high-degree monomials are less expressed in the features and consequently larger $\|a\|$'s are required to capture them. The full proof relies on concentration results and Boolean Fourier analysis and is given in Appendix \ref{app:proofs}.
\begin{corollary}\label{cor:coef-bound}
    For a random features model with an activation function that satisfies property (A1) in Definition \ref{def:strongly-expressive} and with diverging number of neurons ($N \to \infty$), we have $\E_x[f_{\mathrm{RF}}(x;a,w,b)\chi_T(x)]^2 = O_d(\frac{\|a\|^2}{d^{|T|}})$ for $|T| = O_d(1)$. Put simply, the coefficient of degree $k=O_d(1)$ monomials is bounded by $O(\frac{\|a\|}{\sqrt{d}^{k}}).$
\end{corollary}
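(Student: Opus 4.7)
The plan is to expand the Fourier--Walsh coefficient of $f_{\mathrm{RF}}$ as a linear combination of the individual features' Fourier coefficients, apply Cauchy--Schwarz to factor out $\|a\|_2^2$, and then invoke Lemma~\ref{lemma:random-low-deg} (which gives $\E_{w,b}[\hat\phi_{w,b}(T)^2] = O(d^{-|T|})$ under property (A1)) together with a law-of-large-numbers step to control the remaining empirical average.

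First, by linearity of expectation and Definition~\ref{def:random-features-model},
\begin{equation*}
\E_x[f_{\mathrm{RF}}(x;a,w,b)\,\chi_T(x)] \;=\; \frac{1}{\sqrt{N}}\sum_{i=1}^{N} a_i\,\hat\phi_{w_i,b_i}(T).
\end{equation*}
Squaring and applying Cauchy--Schwarz yields
\begin{equation*}
\bigl(\E_x[f_{\mathrm{RF}}(x;a,w,b)\,\chi_T(x)]\bigr)^2 \;\le\; \|a\|_2^2 \cdot \frac{1}{N}\sum_{i=1}^{N} \hat\phi_{w_i,b_i}(T)^2.
\end{equation*}
It therefore suffices to show that, with high probability as $N \to \infty$, the empirical average on the right is $O(d^{-|T|})$.

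For this concentration step, I would observe that $|\chi_T(x)|=1$, so Jensen's inequality gives $\hat\phi_{w,b}(T)^4 \le \E_x[\sigma(\langle w,x\rangle+b)^4]$. Since $\langle w,x\rangle + b$ is Gaussian with variance $1+1/d \le 2$ for every $x \in \{\pm 1\}^d$, property (A1) ensures $\E_{w,b}[\hat\phi_{w,b}(T)^4] < \infty$. The i.i.d.\ variables $\hat\phi_{w_i,b_i}(T)^2$ thus have finite variance, and the weak law of large numbers gives $\frac{1}{N}\sum_i \hat\phi_{w_i,b_i}(T)^2 \to \E_{w,b}[\hat\phi_{w,b}(T)^2]$ in probability; the limit is $O(d^{-|T|})$ by Lemma~\ref{lemma:random-low-deg}. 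Combining with the Cauchy--Schwarz step delivers the stated bound, and the $O(\|a\|/\sqrt{d}^{\,k})$ reformulation follows immediately by taking square roots.

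The main (and essentially only) obstacle is verifying the concentration, i.e.\ ensuring the empirical mean of $\hat\phi_{w_i,b_i}(T)^2$ stays of order $d^{-|T|}$ uniformly over the finitely many $T$ with $|T| = O_d(1)$ as $N$ grows. This cleanly reduces to a finite fourth-moment bound on $\sigma$ under a bounded-variance Gaussian argument, which is exactly what (A1) provides; the union bound over the $O(d^{|T|})$ such $T$'s is absorbed into the $O_d(\cdot)$ notation. Beyond this one probabilistic step, the proof is a direct Fourier--Walsh computation plus a single application of Cauchy--Schwarz.
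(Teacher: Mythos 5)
Your proof is correct, but it takes a genuinely different — and more elementary — route than the paper's. The paper derives the corollary by reusing the pseudoinverse machinery from the proof of Theorem~\ref{thm:random-features}: writing $\hat f_{\mathrm{RF}} = Fa$, using the projection bound $\|a\|^2 \geq \|F^\dagger F a\|^2 = \hat f_{\mathrm{RF}}^T(FF^T)^\dagger\hat f_{\mathrm{RF}}$, invoking $FF^T \to \Phi$ (diagonal) as $N\to\infty$, and then reading off the single diagonal entry $\Phi_T = O(d^{-|T|})$. Your argument bypasses all of that: you expand $\hat f_{\mathrm{RF}}(T) = \frac{1}{\sqrt{N}}\sum_i a_i\hat\phi_i(T)$, apply Cauchy--Schwarz directly to peel off $\|a\|^2$, and then use the law of large numbers to send the empirical average $\frac{1}{N}\sum_i\hat\phi_i(T)^2$ to its mean, which is $O(d^{-|T|})$ by the (A1)-derived part of Lemma~\ref{lemma:random-low-deg}. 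Both land on exactly the same bound $\hat f_{\mathrm{RF}}(T)^2 \leq \|a\|^2\,\E_{w,b}[\hat\phi_{w,b}(T)^2]$. Your version is more self-contained and arguably clearer as a standalone argument; the paper's version gets the corollary essentially for free once the $F^\dagger$/$\Phi$ framework is already in place, and its quadratic-form bound $\hat f_{\mathrm{RF}}^T\Phi^\dagger\hat f_{\mathrm{RF}}\leq\|a\|^2$ is a strictly stronger statement (summing over all $T$ at once), though that extra strength is not needed for this corollary. One minor remark: your union-bound aside over the $O(d^{|T|})$ subsets $T$ is unnecessary, since the corollary is a per-$T$ statement with $N\to\infty$ at fixed $d$; a single application of the weak law suffices.
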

This corollary is also proved in Appendix \ref{app:proofs}. As a result, to learn a solution of degree $k=O_d(1)$, the squared norm of the model's weights, $\|a\|^2$, must be at least $\Omega(d^k)$.

\begin{remark}[Other activation functions]
Note that Theorem \ref{thm:random-features} does not hold for any arbitrary activation function. For example, if the activation function is $\sigma(z) = z^2$, one can easily see that 
$\E_{w,b}[\hat\phi_{w,b}(x)(\{i\})^2], \E_{w,b}[\hat\phi_{w,b}(x)(\{i, j\})^2] \in \Theta_d(d^{-2})$, and hence degree-1 monomials have no priority over degree-2 monomials. 
 An important case is the ReLU activation. Results of \citet{AbbeINAL} show that for the ReLU activation and $|T| =O_d(1)$, we have
\begin{equation}
\E_{w,b}[\hat\phi_{w,b}(T)^2] =
  \begin{cases}
     \Omega(d^{-|T|}) & |T| \;\mathrm{even\;or\;} |T|=1\\
     \Omega(d^{-|T|-1}) & \mathrm{otherwise}
\end{cases}.  
\end{equation}
Consequently, the min-degree bias still exists, but in a weaker form. For further discussion and experiments on the ReLU activation refer to Appendix \ref{app:proofs}. 
\end{remark}

In the experiments, we show that having the sparsity assumption may not be necessary in some cases, and the min-degree bias can be observed for small values of $d$ and $N$ as well. Furthermore, we show that the min-degree bias goes beyond the random features and NTK models; see Section \ref{sec:exps}. 

We next focus on linear neural networks where we will be able to analyze non-linear dynamics for gradient flow.
\subsubsection{Results for Linear Neural Networks}
In this section, we study the min-degree bias in linear neural networks. First, note that in the case of linear functions, replacing a degree-1 variable $x_k$ with the degree-0 variable $1$ is the only case of lower degree bias. In other words, we consider the case that unseen data is $\mathcal{U}=\{x\mid x_k = -1\}$ (referred to canonical holdout in work of \citealp{abbe2022learning}). We conjecture that linear neural networks trained with gradient flow learn the min-degree interpolator with a leakage factor (coefficient of the frozen variable in the learned function) that vanishes as their initialization scale goes to $0$. We prove this conjecture formally for diagonal linear neural networks and two-layer fully connected networks. Further, we discuss how the proof can be generalized and provide experiments to support the conjecture. 

We start with the simpler case of diagonal linear neural networks with bias. We define them as follows.
\begin{definition}[Diagonal linear neural network with bias]
	We define a diagonal linear neural network (DLNN) with bias as an extension of diagonal neural networks, where there is only one parameter for bias at the last layer. I.e., 
	\begin{align*}
	\theta &= (b, w_1^{(1)}, \ldots, w_d^{(1)}, \ldots, w_1^{(L)}, \ldots, w_d^{(L)}), \\
	f_{\mathrm{NN}}&(x_1, \ldots, x_d; \theta) = b + \sum_{i=1}^d \left( \prod_{l=1}^{L} w_{i}^{(l)} \right) x_i,
\end{align*}
where $\theta$, $d$, and $L$ represent the model's parameters, input dimension, and depth, respectively. 
\end{definition}
\begin{theorem} \label{thm:diagonal}
Let $f\colon \{\pm 1\}^d \to  \mathbb{R}$ be a linear function $f(x_1, \ldots, x_d) = \hat f(\emptyset) + \sum_{i=1}^d \hat f(\{i\})x_i$. Consider learning this function using gradient flow on a diagonal neural network (where depth $L \geq 2$) while the $k$-th component is frozen at training (the canonical holdout setting with $\mathcal{U}=\{x \in \{\pm 1\}^d \mid x_k=-1\}$). For any $\epsilon > 0$, there exists an $\alpha_{max}$ (increasing with $L$) such that if all the model's parameters are initialized i.i.d.\ under the uniform distribution $U(-\alpha, \alpha)$ for any $0 < \alpha \leq \min\{\alpha_{\max}, \frac 12\}$, then, with probability 1, the training loss converges to 0, and the coefficient of the learned function ${f}_{\mathrm{NN}}$ on the high-degree monomial $x_k$ is less than $\epsilon$, i.e., $\hat{f}_{\mathrm{NN}}(\{k\}) \leq \epsilon$.
\end{theorem}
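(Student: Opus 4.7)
The plan is to exploit the decoupling of the loss across coordinates and reduce the argument to the bivariate dynamics of the bias $b$ together with the depth-$L$ product $u_k = \prod_{l=1}^L w_k^{(l)}$. Under the canonical holdout $x_k = 1$ at training (with the remaining coordinates uniform on $\{\pm 1\}$), the population square loss splits as $(b+u_k-c)^2 + \sum_{i\neq k}(u_i-\hat f(\{i\}))^2$, where $u_i = \prod_l w_i^{(l)}$ and $c = \hat f(\emptyset)+\hat f(\{k\})$. Consequently gradient flow decomposes into $d$ independent subsystems. For each $i \neq k$ the subsystem is a standard depth-$L$ diagonal linear network fitting a single scalar target, for which classical implicit-bias analyses (Gunasekar et al.\ 2017; Woodworth et al.\ 2020) yield $u_i \to \hat f(\{i\})$ almost surely over the uniform initialization. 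Everything therefore hinges on the subsystem involving $b$ and $(w_k^{(1)},\ldots,w_k^{(L)})$.

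In this subsystem, setting $u=u_k$, $w_l=w_k^{(l)}$, and $r=b+u-c$, the gradient-flow equations read $\dot b = -2r$ and $\dot u = -2r S$ with $S = \sum_l (\prod_{j\neq l} w_j)^2 \geq 0$, from which $\dot r = -2r(1+S)$. Hence $r$ preserves its initial sign and decays monotonically to $0$, so the training loss converges to $0$. The classical diagonal-network conservation $\frac{d}{dt}(w_l^2 - w_{l'}^2) = 0$ forces $w_l(t)^2 = \alpha_l^2 + C(t)$ for a single function $C(t)$ (with $\alpha_l = w_l(0)$, $|\alpha_l|\leq\alpha$), and thus $u(t)^2 = \prod_l(\alpha_l^2+C(t))$. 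Along trajectories, $du/db = S$, and since $b$ is monotone the total variation of $u$ is bounded by $\sup_t S(t) \cdot |b(\infty) - b(0)|$.

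I then run a bootstrap on $|u|$. Assume $|u(t)|\leq K\alpha^L$ for $t$ in some initial interval; the conservation law forces $\max_l w_l^2 \leq |u|^{2/L} + \alpha^2 \leq (K^{2/L}+1)\alpha^2$, hence $S \leq L(K^{2/L}+1)^{L-1}\alpha^{2(L-1)}$. Together with the crude estimate $|b(\infty)-b(0)| = O(|c|+1)$ (from $|b|\leq |c|+|u|+|r(0)|$ and $|r(0)|\leq |c|+\alpha+\alpha^L$), integrating $du = S\,db$ yields $|u(\infty)| \leq \alpha^L + L(K^{2/L}+1)^{L-1}(|c|+1)\,\alpha^{2(L-1)}$. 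For $L \geq 3$ the second term is $o(\alpha^L)$ because $2(L-1) > L$, so the bootstrap closes with any $K > 1$ once $\alpha \leq \alpha_{\max}(L, f, \epsilon)$, giving $|u(\infty)| = O(\alpha^L) < \epsilon$. The exponent gap $L - 2$ widens with $L$, which is why the admissible $\alpha_{\max}$ grows with $L$ as asserted.

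The main obstacle is the $L = 2$ case, where $2(L-1) = L$ and the crude bound on $S$ does not improve on the initial scale, so the polynomial bootstrap cannot close for large $|c|$ (the implicit inequality $1 + 2(K+1)(|c|+1) \leq K$ has no positive solution). I would handle this case separately by noting that the $(b, u)$ subsystem is then planar and explicitly integrable: $du/db = S = w_1^2 + w_2^2 = \sqrt{4u^2 + (\alpha_1^2-\alpha_2^2)^2}$, whose $\sinh^{-1}$ primitive gives a closed-form relation between $u(\infty)$ and $b(\infty)-b(0)$. In the balanced case this collapses to $u(t) = w_1(0)^2 \exp(2(b(t)-b(0)))$, yielding the sharp asymptotic $|u(\infty)| \lesssim \alpha^2 \exp(O(|c|))$, which still vanishes as $\alpha \to 0$ for any fixed $f$. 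Combining this with the three preceding steps, $\hat f_{\mathrm{NN}}(\{k\}) = u(\infty) \leq \epsilon$ for all $\alpha \leq \min\{\alpha_{\max}(L, f, \epsilon), 1/2\}$, completing the argument.
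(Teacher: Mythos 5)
Your decomposition via Parseval, the conservation law for the diagonal weights, and the observation that $r = b + u_k - c$ never changes sign are all exactly the paper's starting points. Where you depart is the core estimate on $u_k = \prod_l w_k^{(l)}$. The paper runs a time-based two-phase argument: it picks an explicit time $T_\epsilon = \log(8R/\epsilon)$ at which $|r|$ has decayed to $\epsilon/8$, bounds the growth of $w_k^{(m)}(t)^2$ (with $m$ the largest initial coordinate) up to $T_\epsilon$ via a Gr\"onwall-type differential inequality (the genuine Gr\"onwall lemma for $L=2$, a separable extension of it for $L>2$), and then uses the monotonicity of $b$ and $u_k$ after $T_\epsilon$ to argue nothing moves by more than $O(\epsilon)$ thereafter. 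You instead work directly in the $(b, u)$ phase plane via the time-free identity $du/db = S$, integrate the total variation of $u$ against the bounded total variation of $b$, and close a polynomial bootstrap $|u| \leq K\alpha^L$ using $S = O(\alpha^{2(L-1)})$ under the conservation law. For $L \geq 3$ this is arguably cleaner: the exponent gap $2(L-1)-L = L-2 > 0$ closes the bootstrap immediately, with no need to track time at all. Your diagnosis that the bootstrap fails at $L=2$ (the inequality $1 + 2(K+1)(|c|+2) \leq K$ has no positive root) is correct, and your fix — the planar subsystem is explicitly integrable, $du/db = \sqrt{\delta^2 + 4u^2}$ with $\delta = w_1(0)^2 - w_2(0)^2$, yielding $|u(\infty)| = O(\alpha^2 e^{O(|c|)})$ — is a legitimate and in fact sharper route than the paper's exponential-in-$T_\epsilon$ Gr\"onwall bound. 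The trade-off is that the paper's time-based estimate treats all $L \geq 2$ in one framework and yields an explicit $\alpha_{\max}(L,\epsilon)$ whose monotonicity in $L$ is then proved separately (Remark A.2); your argument splits into two cases and establishes the growth of $\alpha_{\max}$ with $L$ only heuristically through the widening exponent gap. Both approaches are sound; yours is a genuine alternative, more geometric in flavor and tighter for large depth, at the price of a special case at $L=2$.
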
 
\textit{Proof Sketch.} We prove this theorem by analyzing the trajectory of gradient flow on the parameters. Primarily, we show the convergence of the model. Note that $\hat{f}_{\mathrm{NN}}(\{k\}) \leq \epsilon$ is equivalent to $x_k$ being ignored by the neural network, i.e., the frozen variable $x_k$  not contributing to the bias learned by the neural network. We pursue the proof in two steps. As the first step, we show there exists a time $T_\epsilon$ such that the bias is almost learned by the bias parameter and the rest of the parameters and the role of $x_k=1$ are still small (note that this point is close to a saddle). For the second step, we show that the contribution of $x_k=1$ to the bias will not change much throughout the training process.

Next, we focus on the general case of fully connected linear neural networks. We parameterize them with the definition below. 
\begin{definition}\label{def:fclnn}
    We define a fully connected linear neural network  with depth $L$ as follows
    \begin{align*}
        \theta &= (b_1, b_2, \ldots, b_{L}, W_1, W_2, \ldots, W_{L-1}, w_L), \\
        f_{\mathrm{NN}}&(x_1, \ldots, x_d;\theta) = w_L^T(W_{L-1}^T(\cdots (W_2^T(W_1^T x + b_1) + b_2)\cdots) + b_{L-1}) + b_L,
    \end{align*}
    where $b_{i} \in \mathbb{R}^{d_i}, W_i \in \mathbb{R}^{d_{i-1} \times d_i}$ are the weights and biases of the $i$-th layer. Note that $d_0 = d$ is the input dimension and $d_L=1$ is the output dimension. Also, sometimes we slightly abuse the notation by referring to the last layer's weight vector by $W_L = w_L$.
\end{definition}
Now again assume that $x_k$ is frozen to $1$ during training. Denote the first layer's weights connected to $x_k$ by $W_{1, k}$. One can easily see that the weights connected to the frozen coordinate act exactly similar to the biases of the first layer and they follow the same dynamics. More precisely, $\nabla_{W_{1, k}} L(t) = \nabla_{b_1} L(t)$, where $L(t)$ is the loss function. As a result, they have the same updates in gradient descent or gradient flow. Particularly, the weights incident to $x_k$ at time $t$ satisfy $W_{1, k}(t) = b_1(t) + (W_{1, k}(0)-b_1(0))$. As a result, to show that the weight of $x_k$ in the function learned by the linear neural network is negligible, it is enough to show that the role of the first layer's bias is negligible. Accordingly, we propose the more general conjecture below. We will also formalize this equivalence in Remark \ref{remark:equivalence-linear}.

\begin{conjecture}\label{conj:linear}
Consider training a depth $L \geq 2$ fully connected linear neural network defined in Definition \ref{def:fclnn}  with the $\ell_2$ loss function 
$$L(\theta) = \frac{1}{2}\left(\|W_1W_2\cdots W_{L-1}w_{L} - w^*\|^2 + \left(b_L + w_L^T b_{L-1} + \cdots  + w_L^TW_{L-1}^T \cdots W_2^Tb_1 - b^* \right)^2\right)$$
with gradient flow, i.e., $\dot \theta = \frac{d\theta}{dt} = -\nabla_\theta L(\theta)$. Further, assume the neural network is initialized with $W_i(0) = \alpha \overline{W_i}$ and $b_i(0) = \alpha \overline{b_i}$ for $1\leq i\leq L$ where $\alpha$ is the initialization scale, and $\overline{W_i}$,$\overline{b_i}$ are initial directions independent from $\alpha$. For technical reasons, we also allow the first layer's bias to have a different speed, i.e., $\dot b_1 = -\gamma \nabla_{b_1} L(\theta)$ where $\gamma > 0$ is a constant. We conjecture that there exists $0 < \epsilon = o_\alpha(1)$ such that if the model is initialized with scale $\alpha$ then the contribution of first layer's bias would be smaller than $\epsilon$, i.e., $\|b_1\|, |w_L^TW_{L-1}^T \cdots W_2^Tb_1| \leq \epsilon$. Moreover, $\|W_i\|_F$ remains $O_{\alpha}(1)$ for all layers during training.
\end{conjecture}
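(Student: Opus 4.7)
The plan is to mirror the two-phase strategy used for Theorem \ref{thm:diagonal}: first show that in a short initial window the scalar output bias $b_L$ absorbs essentially all of the target constant $b^*$ while every other parameter is still $O(\alpha)$, then show that in the subsequent training the bias residual remains small enough that $b_1$ cannot drift substantially away from its $O(\alpha)$ initialization.

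Explicitly, let $r(t) = b_L + w_L^T b_{L-1} + w_L^T W_{L-1}^T b_{L-2} + \cdots + w_L^T W_{L-1}^T \cdots W_2^T b_1 - b^*$ denote the bias residual and $R(t) = W_1 W_2 \cdots W_{L-1} w_L - w^*$ the weight residual. Gradient flow gives $\dot b_L = -r$, $\dot b_j = -(W_{j+1} W_{j+2} \cdots W_{L-1} w_L)\, r$ for $2 \leq j \leq L-1$, $\dot b_1 = -\gamma (W_2 W_3 \cdots W_{L-1} w_L)\, r$, together with contributions from both $R$ and $r$ for each $W_j$ and for $w_L$. The crucial asymmetry is that among all parameters affecting $r$, only $b_L$ has no weight factor in its velocity, whereas every other bias and every weight has its $r$-contribution multiplied by a product of initial weights or biases, each $O(\alpha)$. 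Thus near initialization the dominant dynamics is $\dot b_L \approx -(b_L - b^*)$, which drives $r$ to $O(\alpha)$ in time $T_1 = \Theta(\log(1/\alpha))$; a Gronwall-type argument then shows that during $[0, T_1]$ every other parameter is perturbed by at most a vanishing amount.

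For the post-$T_1$ phase I would invoke the standard conservation laws of gradient flow on deep linear networks, namely that $W_{l+1} W_{l+1}^T - W_l^T W_l$ stays constant, applied to the near-balanced $O(\alpha)$ initialization to derive that the effective product $W_1 \cdots W_{L-1} w_L$ converges to $w^*$ with each individual factor $\|W_l\|_F = O_\alpha(1)$. Simultaneously, one must establish a Lyapunov-type invariant on the tuple $(r, b_1, \ldots, b_{L-1})$ showing (i) that $b_L$ continuously compensates for any drift in the intermediate biases so that $|r(t)| = O(\alpha)$ for all $t \geq T_1$, and (ii) that since $\|\dot b_1\| \leq \gamma \,\|W_2 W_3 \cdots W_{L-1} w_L\|\, |r|$, the change $\|b_1(t) - b_1(0)\|$ integrated over the whole trajectory stays $o_\alpha(1)$. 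Combined with $\|b_1(0)\| = O(\alpha)$ and the $O_\alpha(1)$ bound on the weight product, this yields $|w_L^T W_{L-1}^T \cdots W_2^T b_1| = o_\alpha(1)$.

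The main obstacle is closing this feedback loop in Phase 2: once $W_2 W_3 \cdots W_{L-1} w_L$ has grown to $O(1)$ in order to represent $w^*$, the coefficient coupling $r$ to $\dot b_1, \ldots, \dot b_{L-1}$ is no longer small, so an a priori bound on $|r|$ is needed in order to conclude that the $b_j$'s stay small; but the bound on $|r|$ itself relies on the $b_j$'s staying small, since they enter $r$ multiplied by those same $O(1)$ weight products. Disentangling this chicken-and-egg coupling is the real difficulty and would likely require a joint Lyapunov function covering $r$ and the intermediate biases simultaneously, or a bootstrap exploiting the freedom in $\gamma$ stated in the conjecture. The $L = 2$ case treated in Theorem \ref{thm:2-linear} is the cleanest instance where the weight product reduces to $w_L$ alone and the coupling can be handled directly; the difficulty for general $L \geq 3$ is controlling arbitrarily long matrix products while maintaining the near-balancedness needed to keep $\|W_l\|_F = O_\alpha(1)$ throughout.
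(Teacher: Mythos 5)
The statement you are attempting is flagged in the paper as a \emph{conjecture}, not a theorem: the authors prove it only for $L=2$ under the extra initialization condition $\|\overline{w_2}\|^2 > \|\overline{W_1}\|^2 + \gamma^{-1}\|\overline{b_1}\|^2$ (Theorem \ref{thm:2-linear}), and explicitly state that ``Phase 2 $\ldots$ may require more technical work for deeper networks.'' Your sketch follows exactly the same high-level plan: a first window in which the scalar velocity $\dot b_L = -r$ dominates all other $O(\alpha)$-size velocities, driving the bias residual down while the rest of the network stays near initialization; followed by a trajectory-control argument built on the gradient-flow conservation law. You also correctly identify the Phase-2 coupling as the genuinely open difficulty for $L \geq 3$ — this is precisely the obstruction the authors name.

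Two refinements relative to how the paper actually closes the $L=2$ case are worth recording. First, the conservation law the paper uses is not the classical $W_{l+1}W_{l+1}^T - W_l^T W_l = \text{const}$ between consecutive weight layers, but a version incorporating the first-layer bias, $w_2 w_2^T - W_1^T W_1 - \gamma^{-1} b_1 b_1^T = \text{const}$; the bias term is essential because it is what lets the authors bound $\|w_2\|$, $\|W_1\|_F$, and $\|b_1\|$ jointly at the end of Phase 2. Second, the paper does not break the chicken-and-egg cycle you describe by maintaining $|r_b| = O(\alpha)$ throughout Phase 2 via a compensating Lyapunov invariant, as you propose. Instead it bounds the \emph{duration} of Phase 2: once $\|w_2\|$ reaches a threshold, $\|r_w\|$ decays at a rate proportional to $\|w_2\|^2$, forcing $t'_2 - t_2 \lesssim \log(1/\alpha_1)$, and the cumulative drift of $\|b_1\|$ and $|r_b|$ over that window is then controlled by Gr\"onwall combined with the Cauchy--Schwarz bound $\int \|w_2\|\,dt \leq \bigl((\int \|w_2\|^2\,dt)\,(t'_2 - t_2)\bigr)^{1/2}$. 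The bias residual is allowed to wander during Phase 2; only its endpoint and the cumulative motion of $b_1$ are shown to vanish. Extending this duration-and-integral bound to arbitrarily long products $W_2 \cdots W_{L-1} w_L$ is exactly the missing step for $L \geq 3$, as both you and the authors acknowledge, so your proposal is a faithful restatement of the state of the problem rather than a proof.
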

The intuition for this conjecture is that the updates for the last layer's bias are of constant order at initialization, i.e.,  $\dot b_{L} = \theta_{\alpha}(1)$. On the other hand, the update of the rest of the parameters scale with $\alpha$ at initialization, i.e., $\dot W_i = O(\alpha)$ and $\dot b_i = O(\alpha)$ for $i \neq L$. As a result, the bias of the neural network is first learned by the bias of the last layer. We conjecture that this picture will not change much as training continues. We prove this formally for a two-layer linear neural network with appropriate initialization in Theorem \ref{thm:2-linear}. We also provide experimental evidence for this conjecture in Appendix \ref{app:exps}.

\begin{theorem} \label{thm:2-linear}
     Consider Conjecture \ref{conj:linear} for two-layer neural networks. We prove that for $w^*\neq 0$ if the initialization satisfies $\|\overline{w_2}\|^2 > \|\overline{W_1}\|^2 + \gamma^{-1}\|\overline{b_1}\|^2$ then the statement of the conjecture holds. I.e., we prove an upper bound for $\|b_1\|$ and $ |w_2^Tb_1|$ which vanishes as the initialization scale $\alpha$ goes to zero while $\|W_1\|_F, \|w_2\|$ remain $O_\alpha(1)$ during training.
\end{theorem}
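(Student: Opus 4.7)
The proof rests on a layer-balance conservation law plus a direct bootstrap bound on $\|b_1\|$. Introduce residuals $r = W_1 w_2 - w^* \in \mR^d$ and $s = b_2 + w_2^T b_1 - b^* \in \mR$, so that $L = \tfrac{1}{2}\|r\|^2 + \tfrac{1}{2}s^2$ and the gradient flow reads
\begin{equation*}
\dot W_1 = -r w_2^T,\qquad \dot w_2 = -W_1^T r - s b_1,\qquad \dot b_1 = -\gamma\, s\, w_2,\qquad \dot b_2 = -s.
\end{equation*}
A direct computation shows that the ``cross term'' $-2 s\, w_2^T b_1$ appearing in $\tfrac{d}{dt}(\|w_2\|^2-\|W_1\|_F^2)$ exactly cancels the one in $\tfrac{d}{dt}(\gamma^{-1}\|b_1\|^2)$, giving the conservation law
\[
\|w_2(t)\|^2 - \|W_1(t)\|_F^2 - \gamma^{-1}\|b_1(t)\|^2 \;=\; c\alpha^2 \quad\text{for all } t,
\]
with $c := \|\overline{w_2}\|^2 - \|\overline{W_1}\|^2 - \gamma^{-1}\|\overline{b_1}\|^2 > 0$ by hypothesis. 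Equivalently, $\|b_1(t)\|^2 = \gamma\bigl(\|w_2\|^2 - \|W_1\|_F^2 - c\alpha^2\bigr)$, so controlling $\|b_1\|$ is equivalent to controlling the layer imbalance.

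Integrating $\dot b_1 = -\gamma s w_2$ yields $\|b_1(t)\|\le \alpha\|\overline{b_1}\| + \gamma\int_0^t \|w_2\|\,|s|\,d\tau$, reducing the task to showing $\int_0^\infty \|w_2\|\,|s|\,d\tau = o_\alpha(1)$. Differentiating $s$ gives
\[
\dot s = -s\bigl(1+\|b_1\|^2+\gamma\|w_2\|^2\bigr) - r^T W_1 b_1,
\]
a dissipative ODE with decay rate $\ge 1$ and small forcing as long as $\|b_1\|$ is small. Set up a bootstrap on $[0,T^*)$ of the form $\|b_1\|\le C\alpha^\delta$ and $\|w_2\|,\|W_1\|_F\le M$, and split the integral at the saddle-escape time $T_{\mathrm{esc}} = \Theta(\log(1/\alpha))$ for the $(W_1,w_2)$ subsystem. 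Since $w^*\neq 0$, linearizing near the origin gives $\dot w_2\approx W_1^T w^*$ and $\dot W_1\approx w^* w_2^T$, producing the balanced exponential growth $\|w_2(t)\|,\|W_1(t)\|_F\lesssim \alpha\, e^{\|w^*\|t}$, while Duhamel on the $s$-equation gives $|s(t)|\lesssim e^{-t}|s(0)|+O(\alpha^\delta)$. Hence
\[
\int_0^{T_{\mathrm{esc}}}\|w_2\|\,|s|\,d\tau \;\lesssim\; \alpha\!\int_0^{T_{\mathrm{esc}}}\! e^{(\|w^*\|-1)\tau}\,d\tau \;=\; o_\alpha(1),
\]
with an extra polylog factor when $\|w^*\|=1$; after $T_{\mathrm{esc}}$, $|s(T_{\mathrm{esc}})|=o_\alpha(1)$ continues to decay exponentially against bounded $\|w_2\|$, contributing another $o_\alpha(1)$. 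This closes the bootstrap on $\|b_1\|$, and standard spectral analysis of $\dot r = -(\|w_2\|^2 I + W_1W_1^T)r - s W_1 b_1$ yields $r\to 0$; the conservation law then pins $\|w_2\|^2-\|W_1\|_F^2 = c\alpha^2 + o_\alpha(1)$, closing the remaining bootstraps. Therefore $|w_2^T b_1|\le \|w_2\|\|b_1\| = O_\alpha(1)\cdot o_\alpha(1) = o_\alpha(1)$, as claimed.

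The main obstacle is the saddle-escape analysis of the \emph{coupled} (rather than decoupled $b_1\equiv 0$) system: one must verify that the perturbation $-s b_1$ in $\dot w_2$ does not destabilize the balanced exponential growth of $(W_1,w_2)$, and that the forcing $-r^T W_1 b_1$ in $\dot s$ does not delay the decay of $s$ long enough for $b_1$ to leave its $o_\alpha(1)$ window. This is exactly where the strict inequality $\|\overline{w_2}\|^2 > \|\overline{W_1}\|^2 + \gamma^{-1}\|\overline{b_1}\|^2$ (equivalently $c>0$) is used essentially: it gives $\|w_2(t)\|^2\ge c\alpha^2$ uniformly in time, so $w_2$ never vanishes, and it provides a strict ``head start'' of $w_2$ over $W_1$ in the conservation law that is preserved by the dynamics and allows the bootstrap on $\|b_1\|$ to close.
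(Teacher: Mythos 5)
Your high-level strategy matches the paper's: you use the same conservation law (the paper derives it in matrix form, $w_2w_2^T - W_1^TW_1 - \gamma^{-1}b_1b_1^T = \mathrm{const}$, and takes traces), you exploit the same observation that $\|w_2\|\,|s|$ is small because $\|w_2\|$ is tiny while $|s|$ is $O(1)$ near the saddle and vice versa afterward, and you invoke $c>0$ for the same reason (to keep a strict head start of $\|w_2\|^2$ over $\|W_1\|_F^2 + \gamma^{-1}\|b_1\|^2$). The paper organizes this into three explicit phases with concrete time thresholds $t_1, t_2, t_2'$ and auxiliary scales $\alpha_1, \alpha_2$, whereas you fold phases 1 and 2 into a single pre-$T_{\mathrm{esc}}$ estimate. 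Up to that point the proposal is a reasonable compression of the same argument.

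The genuine gap is the claim $\|W_1\|_F, \|w_2\| = O_\alpha(1)$, which is part of the theorem's conclusion and is also load-bearing in your own estimates (the post-$T_{\mathrm{esc}}$ integral $\int_{T_{\mathrm{esc}}}^\infty \|w_2\|\,|s|\,d\tau$ and the final bound $|w_2^T b_1| \le \|w_2\|\,\|b_1\|$ both need it). You appeal to ``the conservation law then pins $\|w_2\|^2 - \|W_1\|_F^2 = c\alpha^2 + o_\alpha(1)$,'' but the trace conservation law only controls the \emph{difference} of the squared norms; both $\|w_2\|$ and $\|W_1\|_F$ could in principle grow without bound while their difference stays fixed, and neither $r\to 0$ nor bounded loss rules this out by itself (bounded loss gives $\|W_1 w_2\|=O(1)$, not boundedness of the factors). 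The paper closes this by using the full matrix conservation law contracted against $w_2$: from $w_2^T(w_2 w_2^T - W_1^T W_1 - \gamma^{-1}b_1 b_1^T)w_2 = O(\alpha^2)\|w_2\|^2$ one gets $\|w_2\|^4 - \|W_1 w_2\|^2 - \gamma^{-1}(b_1^T w_2)^2 = O(\alpha^2)\|w_2\|^2$, and then $\|W_1 w_2\| = O(1)$ and $|b_1^T w_2|\le \|b_1\|\,\|w_2\|$ give $\|w_2\|^2 = O(1)$. You would need to either state and use the matrix version of the conservation law, or supply an alternative boundedness argument, to make the proposal complete. Secondary but related gaps: ``standard spectral analysis of $\dot r$'' is asserted but requires exactly the bounded-coefficients input you have not yet established, and the bootstrap is described as closing without a precise choice of $\delta$, constants, or a maximal-interval argument.
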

\textit{Proof Sketch.} We prove this theorem by analyzing the trajectory of gradient flow on the parameters in three phases. In the first phase, we prove that the bias of the last layer learns the total bias of the target while all other parameters remain small as the updates for the bias of the last layer are of the constant order compared to the updates of other parameters which are of order $O(\alpha)$. In the second phase, we prove that $w_2$ and $W_1$ will reconstruct the weight term $w^*$ while $\|b_1\|$ remains small. At this point, both the loss function and $\|b_1\|$ are small. For the last phase, we prove that the parameters would not change much after this point. So the bias parameter and its contribution remain negligible. 
The full proof is presented in Appendix \ref{app:proofs}. The condition $\|\overline{w_2}\|^2 > \|\overline{W_1}\|^2 + \gamma^{-1}\|\overline{b_1}\|^2$ on the initialization is only needed for the analysis of the trajectory in the second phase of our proof and is not uncommon (e.g., see a similar one in the results of \citealp{yun2020unifying}).

One can easily see that phase 1 of the proof can be generalized to any neural network in Conjecture \ref{conj:linear}. Also, it is possible to generalize phase 3 to deeper neural networks (assuming that the neural network reaches the starting point of this phase). Phase 2, on the other hand, may require more technical work for deeper networks. Particularly, one may need to break this phase into several steps and analyze the trajectory in each of these steps. 

\begin{remark}
    Note that even for a constant initialization scale, as depth $L$ grows, there are $L$ terms that reconstruct the bias. One could thus expect that the role of the first layer's bias to decrease in this reconstruction as $L$ grows. This is an independent phenomenon that we do not capture in Conjecture \ref{conj:linear} where we have the vanishing initialization scale. 
\end{remark}
\begin{remark}\label{remark:equivalence-linear}
    Again, consider the original problem in which $x_k=1$ is frozen. We noted that the weights incident to $x_k$, $W_{1,k}$ work exactly in the same manner as the bias parameter of the first layer $b_1$. Indeed, one can define an equivalent bias parameter $\tilde b_1 = b_1 + W_{1,k}$ and remove $x_k$ from the coordinates. The only caveat is that this parameter would have a speed of $\gamma=2$ for the updates (this is also the reason that we introduced $\gamma$ in Conjecture \ref{conj:linear}). We can simply check that if Conjecture \ref{conj:linear} is satisfied (e.g., as in Theorem \ref{thm:2-linear}), then the frozen bit will be ignored showing the min-degree bias. We prove this more formally in Appendix \ref{app:proofs} as well.   
\end{remark}
\begin{remark}
Note that with the assumptions of Theorem \ref{thm:diagonal} or Conjecture \ref{conj:linear}, the generalization error of the model becomes\footnote{The factor 4 is removed if we consider the half-quadratic loss and GOTU on the full space.} $$GOTU(f, f_{\mathrm{NN}}, \mathcal{U}=\{x: x_k =-1\}) = 4\mathrm{Inf}_k(f) + O(\epsilon),$$ where $\mathrm{Inf}_k(f) = \hat{f}(\{k\})^2$ is the Boolean influence of the $k$-th bit \citep{o'donnell_2014}. This confirms the empirical observations of \citet{abbe2022learning} on fully connected linear neural networks.
\end{remark}
\section{Experiments}\label{sec:exps}
In this section, we present our experimental results on the min-degree bias of neural networks.\footnote{Code: \url{https://github.com/aryol/GOTU}} We have used two architectures for our experiments in this part: a random features model (Definition \ref{def:random-features-model}) and an encoder-only Transformer \citep{vaswani2017attention-transformer}. We show that both these architectures have a very strong min-degree bias and would learn min-degree interpolators. We will also consider a multi-layer perceptron (MLP) with 4 hidden layers and a 2-layer neural network with mean-field parametrization \citep{mei2018mean} in Section \ref{sec:other-archs}. We will show that these architectures also follow the min-degree bias although in a weaker way and possibly with a leakage. By doing this, we consider a spectrum of models covering lazy regimes, active/feature learning regimes, and models of practical interest. For the Transformer, we use an encoder-only architecture with bidirectional attention, absolute learnable positional embeddings, and a classification token. Also, $\pm 1$ bits are first encoded using an encoding layer and then passed to the Transformer; while for the rest of the architectures, binary vectors are directly used as the input. Note that the input in our tasks is always of fixed size and does not contain any causal structures. Also, the output is continuous and 1-dimensional. This makes encoder-only Transformers the most natural choice in the Transformers family. Nonetheless, we explore the properties of Transformers with causal attention in Section \ref{sec:causal-transformer}.


For each experiment, we generate all binary sequences in $\mathcal{U}^{c} = \{\pm 1\}^d \setminus \mathcal{U}$ for training.\footnote{In practice, one can generate a large enough number of samples so that the function is learned well on the training distribution.} We then train models under the $\ell_2$ loss.
We employ Adam \citep{kingma2014adam} optimizer for the Transformer model and mini-batch SGD for the rest of the architectures. We also use moderate learning rates as learning rate can affect the results (refer to Appendix \ref{app:learning-rate-sensitivity}).  During training, we evaluate the coefficients of the function learned by the neural network using $\hat{f}_{\mathrm{NN}}(T) = \mathbb{E}_{x \sim_U \{\pm 1\}^d}[\chi_T(x)f_{\mathrm{NN}}(x)]$ to understand which interpolating solution has been learned by the model.  Moreover, each experiment is repeated 10 times and averaged results are reported. For more information on the setup of experiments, hyperparameter sensitivity analysis, and additional experiments refer to Appendix~\ref{app:exps}.

Here, we consider the following 3 functions and unseen domains on input dimension $15$. Dimension $15$ is used as a large dimension where the training data can be generated explicitly but has otherwise no specific meaning (Appendix~\ref{app:exps} provides other instances). The first function is an example of degree-2 where the unseen domain induces a degree-1 MD interpolator. The second example is the classic degree-2 parity or XOR function. 
The third example is such that the function is symmetric under cyclic permutations while its MD interpolator is not, in order to test whether certain models would favor symmetric interpolators. 
We consider other examples such as the majority function in Appendix~\ref{app:exps}.
Let:
\begin{enumerate}
    \item $f_1(x) = x_0x_1 - 1.25x_1x_2+1.5x_2x_0$ and $\mathcal{U}_1 = \{x_0x_1x_2 = -1\}$. In this case, we have $x_0x_1 = x_2$, $x_1x_2=x_0$, and $x_2x_0=x_1$ at training, hence the MD interpolator is $\tilde{f_1}(x) = x_2 - 1.25x_0 + 1.5x_1$.
     \item $f_2(x) = x_0x_1$ and $\mathcal{U}_2 = \{(x_0, x_1) = (-1, -1)\}$. Note that the MD interpolator is $\tilde{f_2}(x)= x_1 + x_0 -1$ for the seen domain. 
    \item $f_3(x) = x_0x_1x_2 + x_1x_2x_3 + \cdots + x_{13}x_{14}x_{0} + x_{14}x_0x_1$ and unseen domain $\mathcal{U}_3 = \{(x_0, x_1, x_2) = (-1, -1, -1)\}$. In this case, the MD interpolator is given by $\tilde{f_3}(x) = (x_0x_1+x_1x_2+x_2x_0 - x_0-x_1-x_2 + 1) + x_1x_2x_3 + \cdots + x_{13}x_{14}x_{0} + x_{14}x_0x_1$.
\end{enumerate}
We generally obtain that the encoder-only Transformer exhibits a strong MD bias similar to the random features model.\footnote{Note that the RF model in Figure~\ref{fig:transformers} has a small leakage, simply caused by the ambient dimension being $d=15$ and not diverging as in Theorem \ref{thm:random-features}.} The solutions learned by the Transformer and the RF model for $f_1$, $f_2$, $f_3$ are shown in Figure~\ref{fig:transformers}. It can be seen that these are very close to the MD interpolator in all cases.  
We will try learning the same examples with an MLP and a mean-field model in Section \ref{sec:other-archs}.

\begin{figure*}[ht]
     \centering
     \begin{subfigure}[b]{0.29\textwidth}
         \centering
         \includegraphics[width=\textwidth]{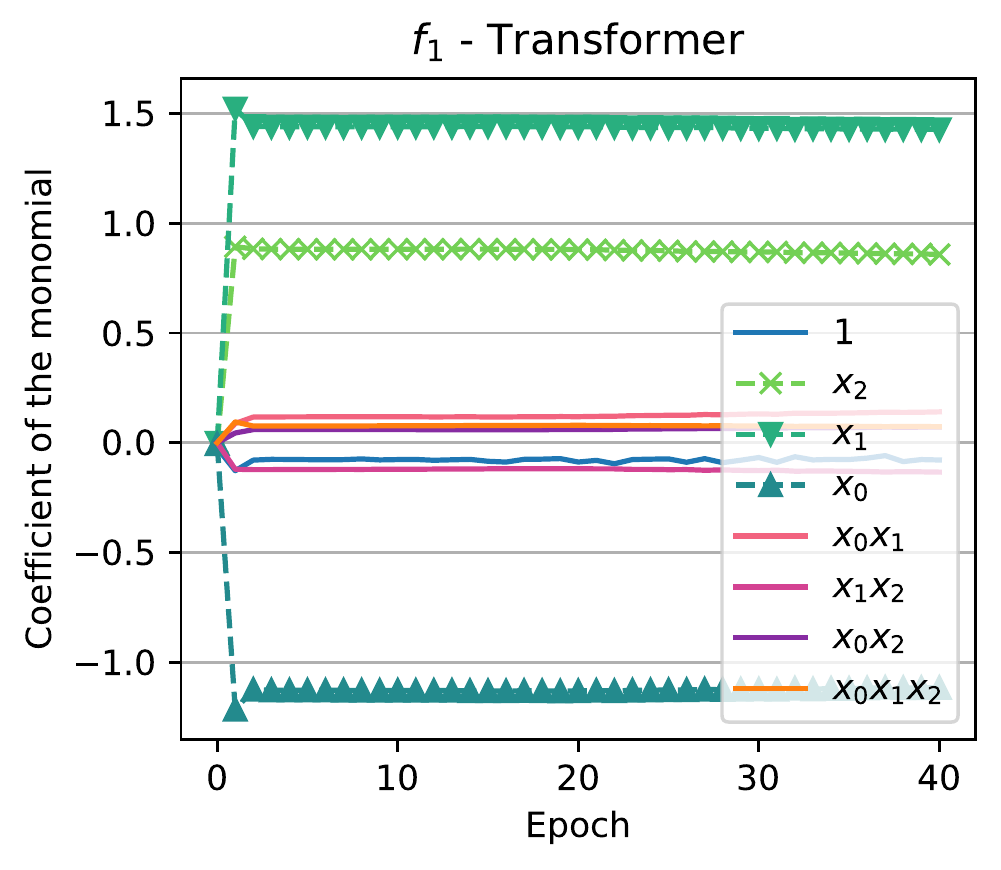}
     \end{subfigure}
     \hfill
     \begin{subfigure}[b]{0.29\textwidth}
         \centering
         \includegraphics[width=\textwidth]{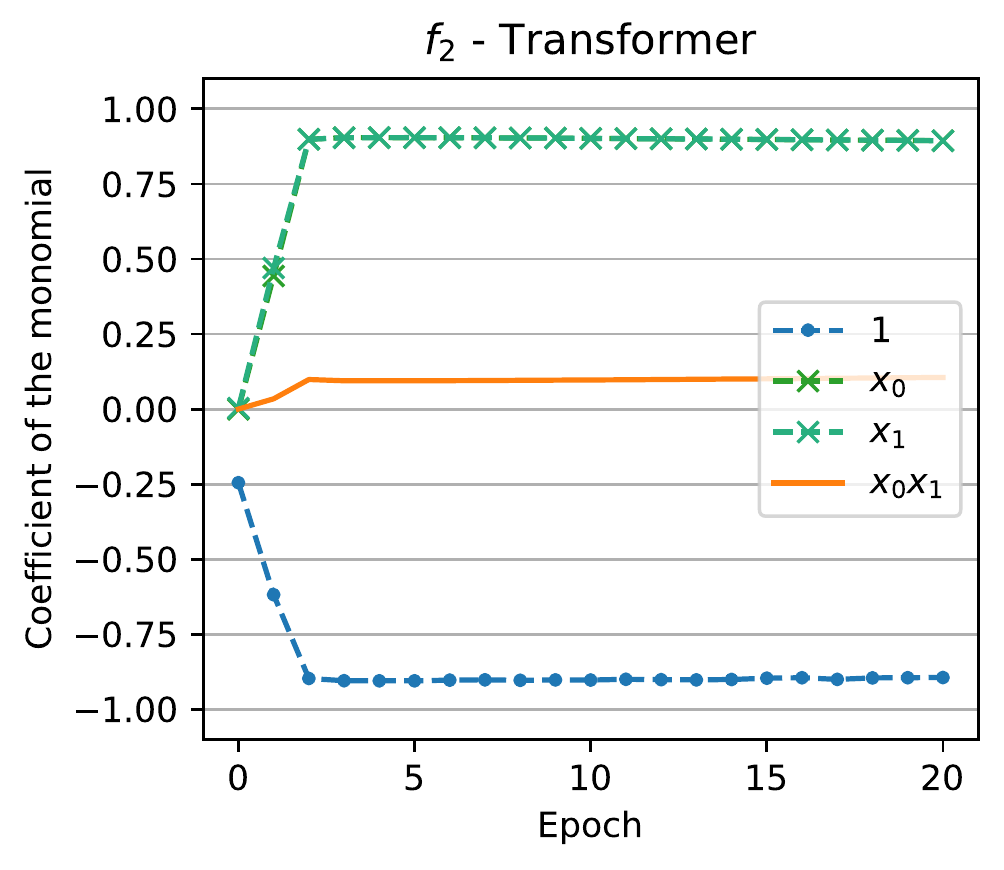}
     \end{subfigure}
     \hfill
     \begin{subfigure}[b]{0.29\textwidth}
         \centering
         \includegraphics[width=\textwidth]{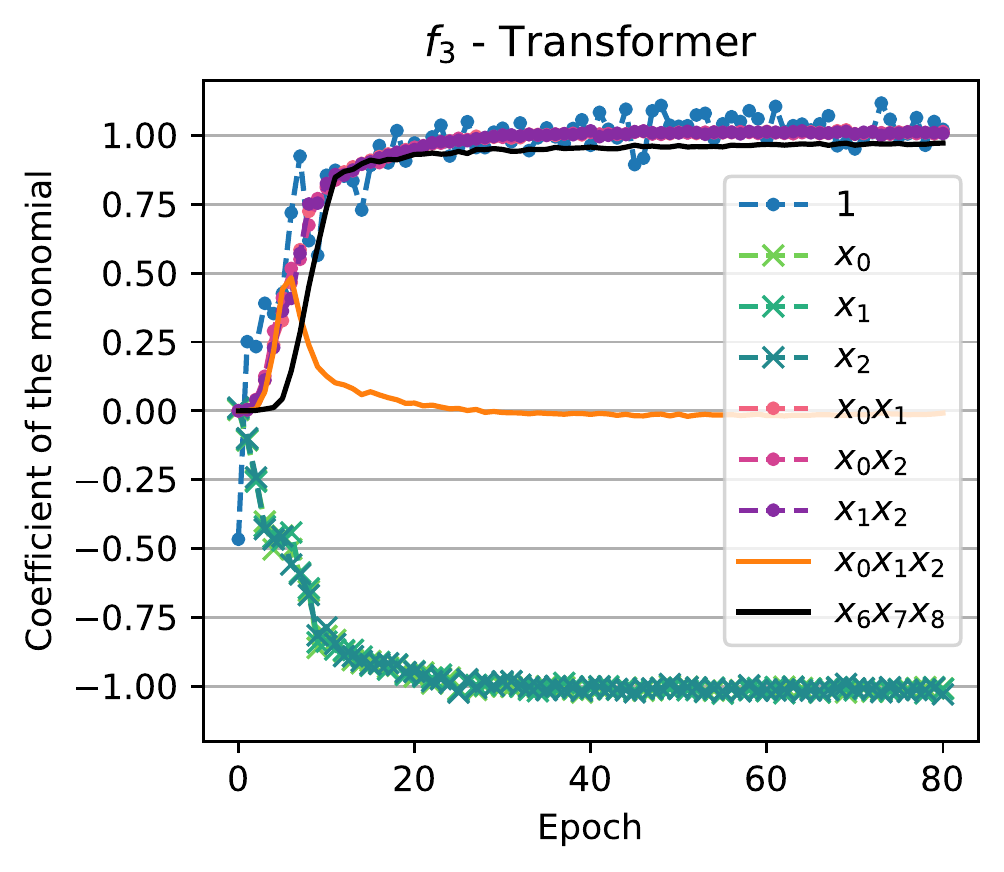}
     \end{subfigure}
      \begin{subfigure}[b]{0.29\textwidth}
         \centering
         \includegraphics[width=\textwidth]{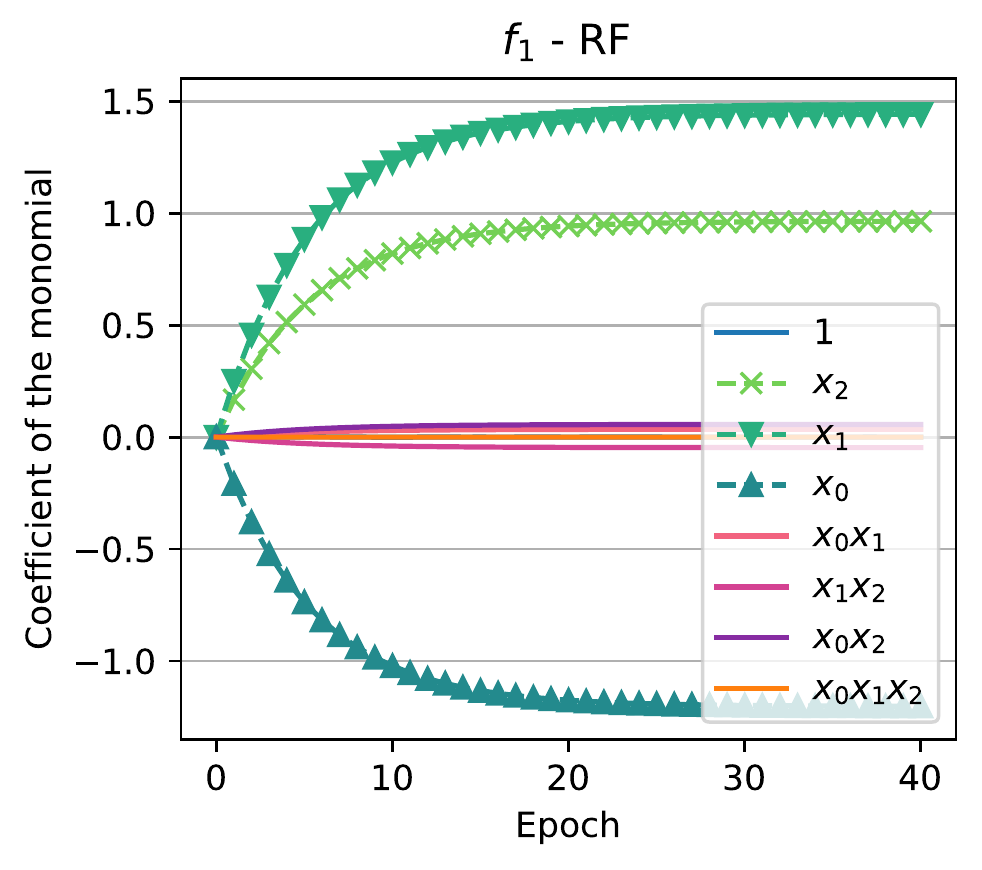}
     \end{subfigure}
     \hfill
     \begin{subfigure}[b]{0.29\textwidth}
         \centering
         \includegraphics[width=\textwidth]{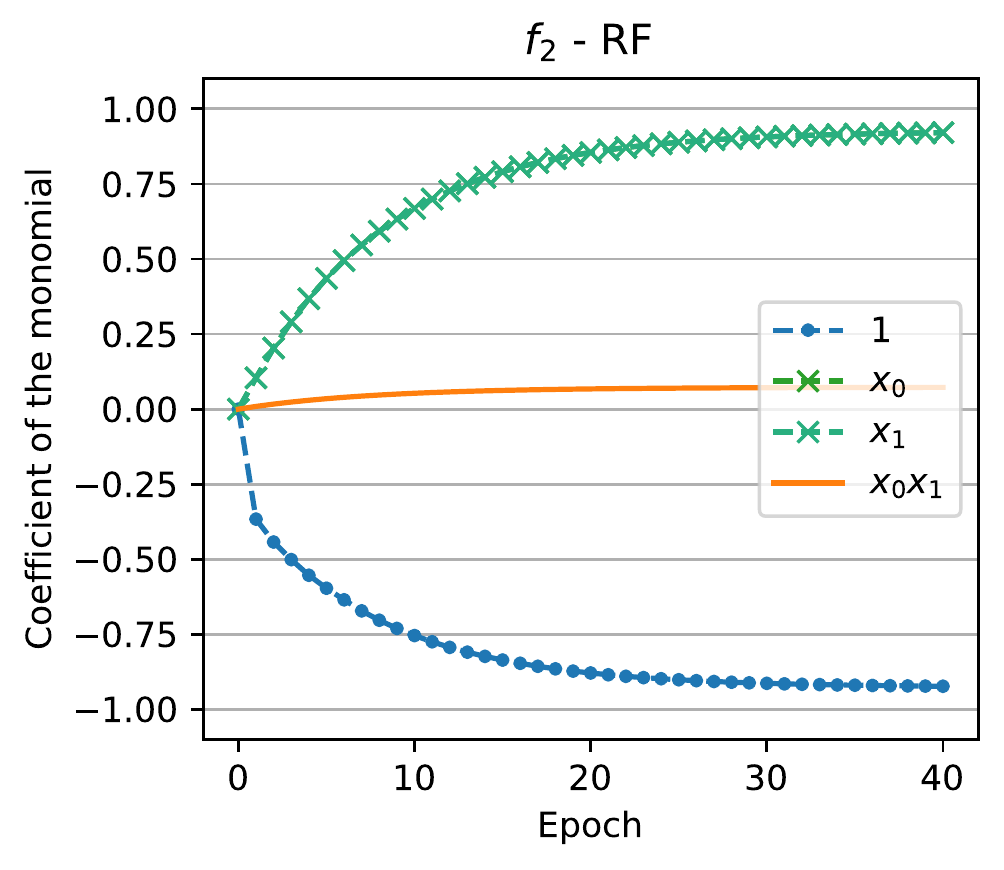}
     \end{subfigure}
     \hfill
     \begin{subfigure}[b]{0.29\textwidth}
         \centering
         \includegraphics[width=\textwidth]{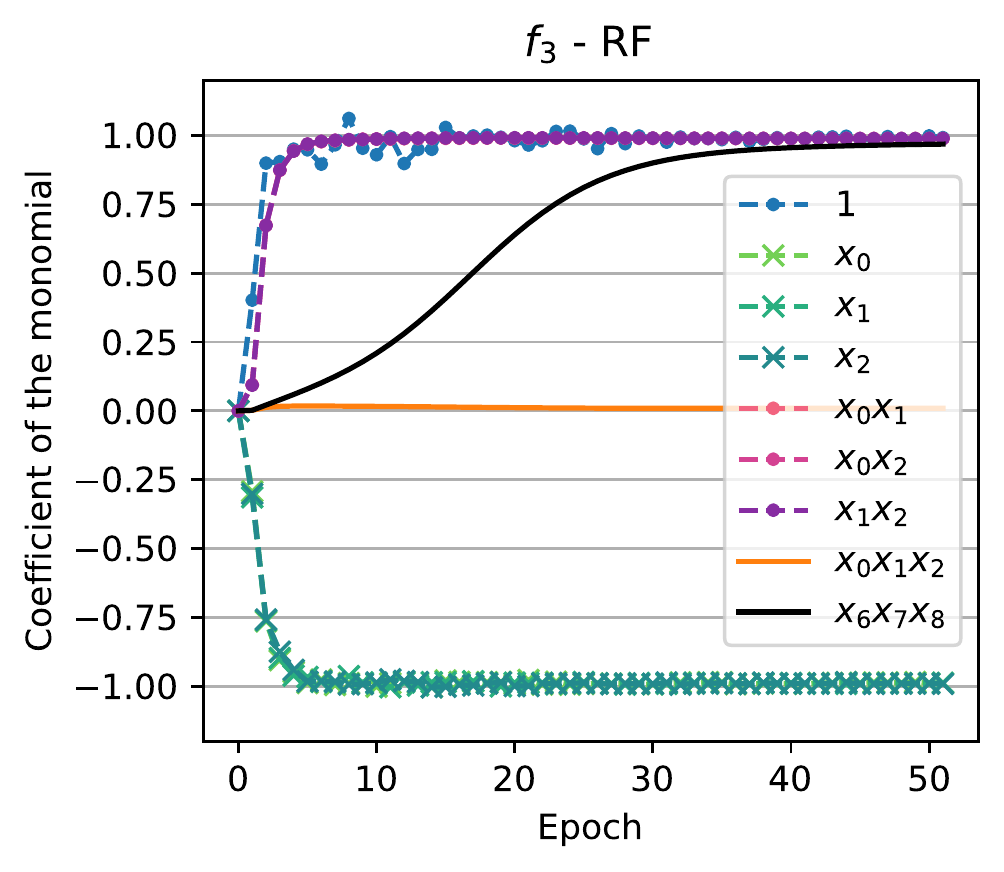}
     \end{subfigure}

        \caption{Target functions $f_1$ (left), $f_2$ (middle), and $f_3$ (right) learned by the encoder-only Transformer (top row) and the RF model (bottom row). Note that in all of the cases, the Transformer and the RF model learn a solution very close to the min-degree interpolator. More precisely, the coefficients of $x_0x_1, x_1x_2, x_2x_0$ in the left plot ($f_1$), the coefficient of $x_0x_1$ in the middle plot ($f_2$), and the coefficient of $x_0x_1x_2$ in the right plot ($f_3$) are close to zero.}
        \label{fig:transformers}
\end{figure*}

\section{Length Generalization}
Several recent works on the reasoning of neural networks evaluate whether neural networks are able to generalize when the length of the problem is increased, and it is often found that neural networks struggle with length generalization \citep{zhang2022unveiling, anil2022exploring-length}. 
For example, consider learning the parity problem
$\mathrm{parity}(x_1, \ldots, x_{d}) = x_1x_2\cdots x_{d}$ on $x_i = \pm 1$. Two variants of this task can be considered: (1) the number of bits, $d$, is increased during test, and (2) $d$ is the same during training and test; however, during training, only samples with a bounded number of $-1$'s are observed, i.e., the radius $r$ Hamming ball $B_r \coloneqq \{x\in\{\pm 1\}^d \mid \#_{-1}(x) \leq r\}$ (note that $+1$ is the identity element in this setting). \citet{anil2022exploring-length} show that both of these variants capture the notion and difficulty of length generalization.\footnote{We train our model directly on the parity function; whereas \citet{anil2022exploring-length} use large language models and fine-tune parity tasks on them. In this sense, our approach is closer to \citet{zhang2022unveiling} who also train models on their synthetic task from scratch.}
Here, we focus on the latter variant which falls under our GOTU setting. 
\begin{theorem}
\label{thm:length-gen}
Consider a Boolean function $f\colon\{\pm 1\}^d \to \mathbb{R}$. Then (i) there exists a unique function $f_r\colon\{\pm 1\}^d \to \mathbb{R}$ such that $\forall x \in B_r, f_r(x) = f(x)$ and $\deg(f_r) \leq r$; (ii) when $f$ is a parity function (monomial) of degree $k \leq d$, the $\ell_2$-test-loss of the MD interpolator is larger than ${\binom{k-1}{r}}^{2}$.
\end{theorem}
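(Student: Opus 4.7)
The plan is to establish (i) via a dimension count, then deduce (ii) by identifying the MD interpolator with $f_r$, reducing to a one-variable symmetric calculation, and extracting a lower bound from a single Fourier coefficient. For (i), I would use that the space of multilinear polynomials on $\{\pm 1\}^d$ of degree at most $r$ has dimension $\sum_{i=0}^r \binom{d}{i} = |B_r|$, so it suffices to show the evaluation map $p \mapsto (p(x))_{x \in B_r}$ is injective. Assuming $p = \sum_{|T|\leq r}\alpha_T \chi_T$ vanishes on $B_r$, evaluating at each $x_S \in B_r$ (the vector with $-1$'s exactly on $S$, $|S|\leq r$) gives a triangular linear system that one solves inductively in $|T|$ via Möbius inversion on the Boolean lattice, forcing $\alpha_T = 0$.

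For (ii), I would first argue that $f_r$ from (i) is the MD interpolator: any interpolator with a strictly smaller degree-profile would have zero Fourier mass above degree $r$ (since $f_r$ has zero mass there and the lexicographic order forces ties at all higher-degree slots), hence degree $\leq r$, and by uniqueness in (i) must coincide with $f_r$. Next, I would show $f_r$ depends only on $x_1, \ldots, x_k$: letting $g$ be the unique degree-$\leq r$ interpolator of $\chi_{[k]}$ on $B_r^{(k)} \subset \{\pm 1\}^k$, the lift $\tilde g(x) := g(x_1,\ldots,x_k)$ is a degree-$\leq r$ polynomial on $\{\pm 1\}^d$ that agrees with $\chi_{[k]}$ everywhere on $B_r$, so by uniqueness $f_r = \tilde g$. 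The $S_k$-symmetry of $\chi_{[k]}$ and $B_r^{(k)}$ forces $g$ to be symmetric, so I can write $f_r(x) = h(w)$ with $w := \#_{-1}(x_1,\ldots,x_k)$ and $h$ a polynomial of degree $\leq r$ in $w$ satisfying $h(w) = (-1)^w$ for $w = 0, 1, \ldots, r$.

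The crux is the exact computation of $\hat f_r(\emptyset)$. Newton's forward-difference interpolation gives $h(w) = \sum_{i=0}^r (-2)^i \binom{w}{i}$, since the $i$-th forward difference of $(-1)^w$ at $0$ equals $(-2)^i$. Under uniform sampling on $\{\pm 1\}^d$, $w \sim \mathrm{Bin}(k, 1/2)$, so combining $\mathbb{E}_{\mathrm{Bin}(k,1/2)}[\binom{w}{i}] = \binom{k}{i}/2^i$ with the alternating partial-sum identity $\sum_{i=0}^r(-1)^i\binom{k}{i} = (-1)^r\binom{k-1}{r}$ yields $\hat f_r(\emptyset) = \sum_{i=0}^r (-1)^i \binom{k}{i} = (-1)^r \binom{k-1}{r}$.

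To finish, I would apply Parseval to $E := f_r - \chi_{[k]}$. Since $\deg f_r \leq r < k$, $\hat E([k]) = -1$ and $\hat E(\emptyset) = (-1)^r\binom{k-1}{r}$, so $\mathbb{E}_{X \sim_U \{\pm 1\}^d}[E(X)^2] \geq \hat E(\emptyset)^2 + \hat E([k])^2 = 1 + \binom{k-1}{r}^2$. Because $E$ vanishes on $B_r = \mathcal{U}^c$, it is supported on $\mathcal{U}$, so $\mathbb{E}_{X \sim_U \mathcal{U}}[E^2] = (2^d/|\mathcal{U}|)\, \mathbb{E}_{X \sim_U \{\pm 1\}^d}[E^2] \geq 1 + \binom{k-1}{r}^2 > \binom{k-1}{r}^2$. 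The hard part is establishing $\hat f_r(\emptyset) = (-1)^r \binom{k-1}{r}$ via the binomial-moment and alternating-sum identities; once it is in hand, the Parseval step and the $2^d/|\mathcal U|$ reweighting are routine bookkeeping.
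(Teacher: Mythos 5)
Your proof is correct, and it reaches the same bound as the paper's by a genuinely different route at most steps, so a comparison is worthwhile. For part (i), the paper first establishes \emph{existence} by observing that on $B_r$ any product $(x_{i_1}-1)\cdots(x_{i_{r+1}}-1)$ vanishes, so every monomial of degree $>r$ can be reduced; uniqueness then follows from a dimension count because the evaluation map from degree-$\le r$ polynomials to $\mathbb{R}^{|B_r|}$ is surjective and the dimensions match. You instead go straight for \emph{injectivity} of the evaluation map (implicitly passing to the $\prod_{i\in T}\frac{1-x_i}{2}$ basis, where evaluation at $x_S$ becomes the zeta transform and M\"obius inversion finishes), which gives both existence and uniqueness at once. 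Both are fine; the paper's is slightly more self-contained since it does not need the change of basis. For part (ii), the paper writes down $f_r$ explicitly as $\sum_{i=0}^{r}\sum_{|T|=i,\,T\subseteq[k]}\prod_{j\in T}(x_j-1)$, verifies it equals $(-1)^{s(x)}$ on $B_r$, and reads off $\hat f_r(\emptyset)=\sum_{i=0}^{r}(-1)^i\binom{k}{i}=(-1)^r\binom{k-1}{r}$ by expanding and by the same telescoping identity you invoke. Your derivation via uniqueness $\Rightarrow$ lifting $\Rightarrow$ $S_k$-symmetry $\Rightarrow$ Newton's forward-difference formula $h(w)=\sum_{i=0}^r(-2)^i\binom{w}{i}$ produces the identical univariate expression (since $\sum_{|T|=i}\prod_{j\in T}(x_j-1)=(-2)^i\binom{s(x)}{i}$), so the two are the same polynomial arrived at by a structural argument rather than by hand. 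Your evaluation of $\hat f_r(\emptyset)$ as a binomial moment $\mathbb{E}_{w\sim\mathrm{Bin}(k,1/2)}[h(w)]$ is a clean alternative to the paper's direct expansion. Two small points in your favor: you justify explicitly that $f_r$ \emph{is} the MD interpolator (the paper assumes this silently), and you handle the bookkeeping between the uniform loss on $\{\pm1\}^d$ and the test loss on $\mathcal{U}$ via the $2^d/|\mathcal{U}|\ge1$ reweighting, a step the paper leaves implicit. One point to tighten: the ``triangular system / M\"obius inversion'' claim should state that you first rewrite the polynomial in the indicator basis, since the raw Walsh evaluations $\chi_T(x_S)=(-1)^{|T\cap S|}$ are not triangular in the subset order.
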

\begin{figure}
    \centering
    \includegraphics[width=0.5\textwidth]{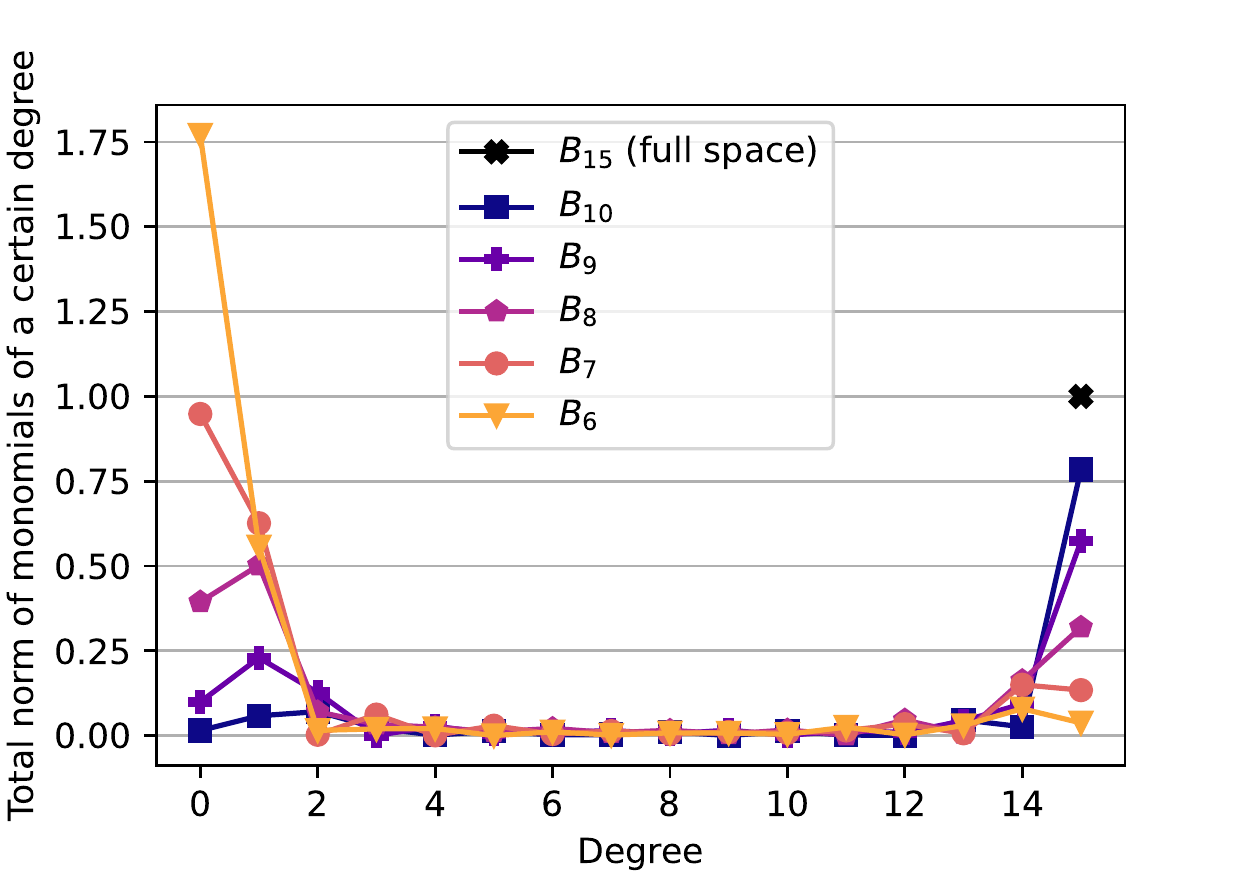}
    \caption{Learning full parity function in dimension
    $d=15$ in the length generalization setting with inputs in $B_6, B_7, B_8, B_{9}, B_{10}$ and $B_{15}$ (full space) respectively, with an MLP (model details in Appendix \ref{app:exps}). X-axis: degree-profile component, Y-axis: degree-profile value, i.e., $\sum_{T:|T|=x}\hat{f}_{\mathrm{NN}}(T)^2$. 
    As the length of training samples is decreased, the coefficient of the full parity gets smaller and the coefficients of low-degree monomials get larger.}
    \label{fig:length-gen}
\end{figure}
We defer the proof to Appendix \ref{app:proofs}. 
Now consider learning the parity function $x_1x_2\cdots x_{d}$ where training samples have $r$ or less $-1$ coordinates, i.e., training samples belong to $B_r$. Using the previous theorem, there is a degree $r$ alternative to $x_1x_2 \cdots x_d$. Note that when such a low-degree alternative exists, assuming the min-degree bias, the model will learn this alternative instead of the full function of degree $d$. This explains why in this case neural networks cannot generalize when the length is increased. We conduct an experiment to evaluate this, where we learn the full parity function on $15$ bits using the MLP model trained on different lengths. Figure~\ref{fig:length-gen} shows that we learn more of lower degree terms and less of the full parity term as we train on shorter lengths.  

\section{Curriculum Learning}\label{curr}
The bias of neural networks towards min-degree solutions can also be used to boost the learning via a curriculum learning \citep{curriculum} algorithm.
We propose to train models by increasing the `complexity' of training samples with respect to the input Hamming weight, i.e., $B_{r_1} \subseteq B_{r_2} \subseteq \ldots \subseteq B_{r_k}$ where $B_r$ is the Hamming ball of radius $r$. Training a model on samples included in $B_{r}$ with $r<d$ produces biased inputs compared to the uniform distribution. It has been shown that learning parities with GD on biased inputs is easier for various architectures    \citep{quantifying,malach_parity}. In particular, the bias in the input distribution can be viewed as converting a monomial on non-centered inputs to a staircase on centered inputs as discussed in the work of \citet{abbe2021staircase}. Moreover, \citet{mergedstaircase} show that the sample complexity for learning staircases is significantly reduced compared to that of monomials of matching degree. In particular, a layer-wise analysis shows that the hidden neurons in the first layer detect the support of a parity function under biased inputs, allowing for the fitting of the target function with the second layer if enough neuron diversity is available. One can thus attempt to bootstrap this approach and progressively climb the support (and degree) of the target function by training successively the network on increasing balls. 
We now develop this approach into a general curriculum algorithm in Algorithm \ref{alg:degree-curriculum}.

\begin{algorithm}[htb]
   \caption{Degree-Curriculum algorithm}
   \label{alg:degree-curriculum}
\begin{algorithmic}
   \STATE {\bfseries Input:} Training samples $S = \{(x_i, y_i)\}_{i=1}^m$; Curriculum $B_{r_1} \subset B_{r_2} \subset \ldots \subset B_{r_k} = B_{d}$; Loss threshold $\epsilon$
   \FOR{$i=1$ {\bfseries to} $k$}
   \STATE $S_{r_i} \coloneqq \{(x,y) \in S | x \in B_{r_i}\}$ (samples in $B_{r_i}$)
   \STATE initialize $\mathrm{train\;loss} = 1+\epsilon$.
   \WHILE{$\mathrm{train\;loss} > \epsilon$}
   \STATE train model with SGD on $S_{r_i}$
   \STATE update $\mathrm{train\;loss}$
   \ENDWHILE
   \ENDFOR
\end{algorithmic}
\end{algorithm}

Note that at the $i$-th step of Algorithm~\ref{alg:degree-curriculum}, all the training samples belong to $B_{r_i}$. Thus, for models obeying the MD bias on the unseen, the model learns the MD interpolator of degree at most $r_i$. 
Further, if the sampling set $S$ is such that $B(r_i) \cap S$ contains enough degree $r_i$ elements, the MD interpolator is of degree $r_i$ --- see Theorem \ref{thm:length-gen}. If one then takes $r_i=r_{i-1}+1$, the new MD interpolator has monomials at step $i-1$ that are contained in those at step $i$, as in the learning of a merged staircase functions \citep{mergedstaircase} (and a lower leap function more generally if one takes a leap in the curriculum degrees). 
  Thus, for a parity target, the  Degree-Curriculum algorithm learns the support sets incrementally as for the implicit staircase function.

We evaluate the Degree-Curriculum algorithm on learning full parity functions of degrees $16$ and $30$, i.e.,   
 $x_0x_1\cdots x_{15}$ and  $x_0x_1\cdots x_{29}$ with an MLP. More precisely, for the same training set and hyperparameters, we once train the MLP with normal SGD and once with the proposed Degree-Curriculum algorithm. We choose curriculum $B_4, B_8, B_{12}, B_{16}$ (leap 4 curriculum) for degree-16 parity and $B_1 \subset B_2 \subset \cdots \subset B_{29} \subset B_{30}$ (leap 1 curriculum) for degree-30 parity. We use loss threshold $\epsilon = 0.001$.  
 The results are depicted in Figure~\ref{fig:curriculum}. It can be seen that the Degree-Curriculum algorithm can reduce the sample complexity for learning parity functions. 
\begin{figure*}
    \centering
     \begin{subfigure}{0.49\textwidth}
        \includegraphics[width=\textwidth]{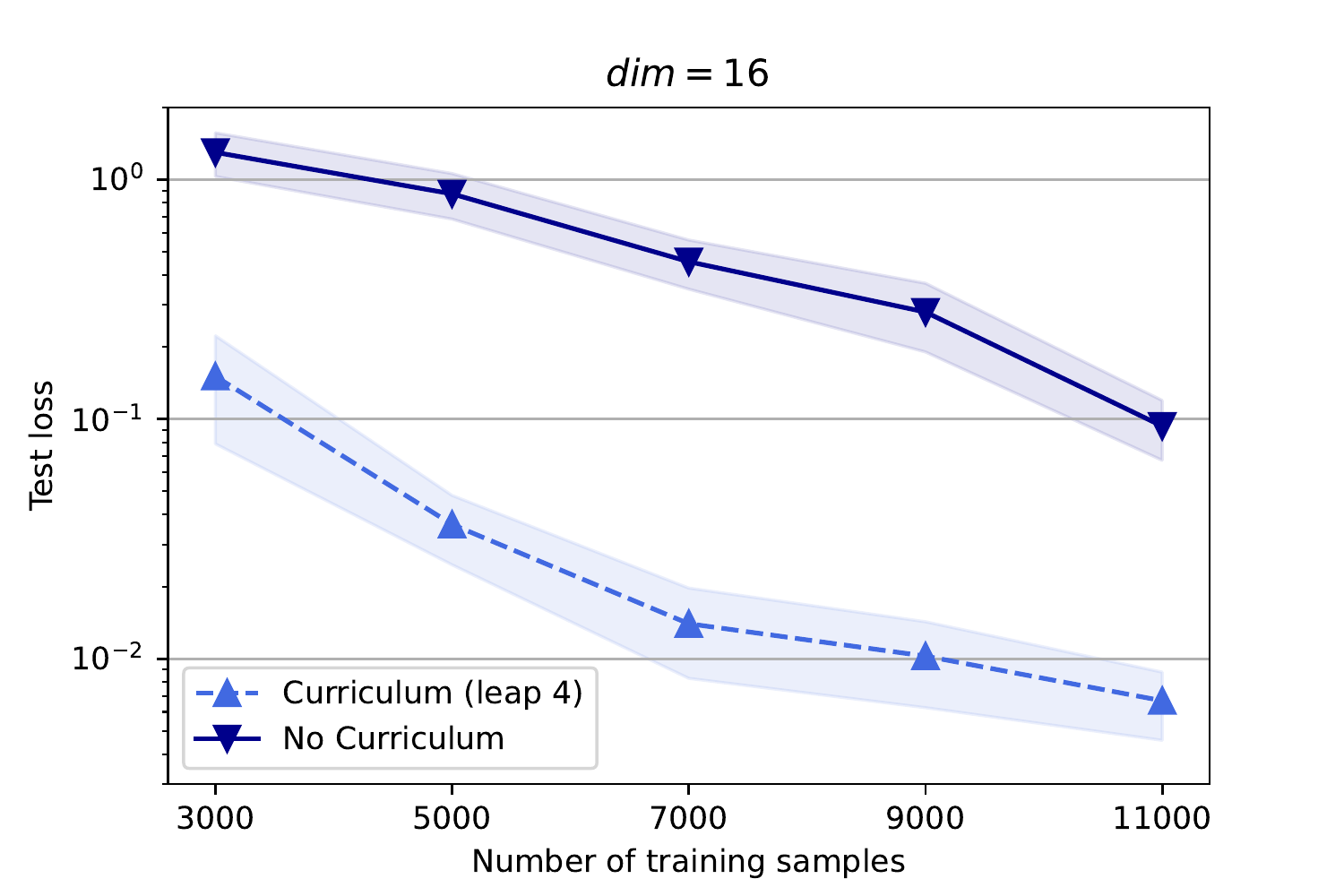}
    \end{subfigure}
    \hfill
    \begin{subfigure}{0.49\textwidth}
        \includegraphics[width=\textwidth]{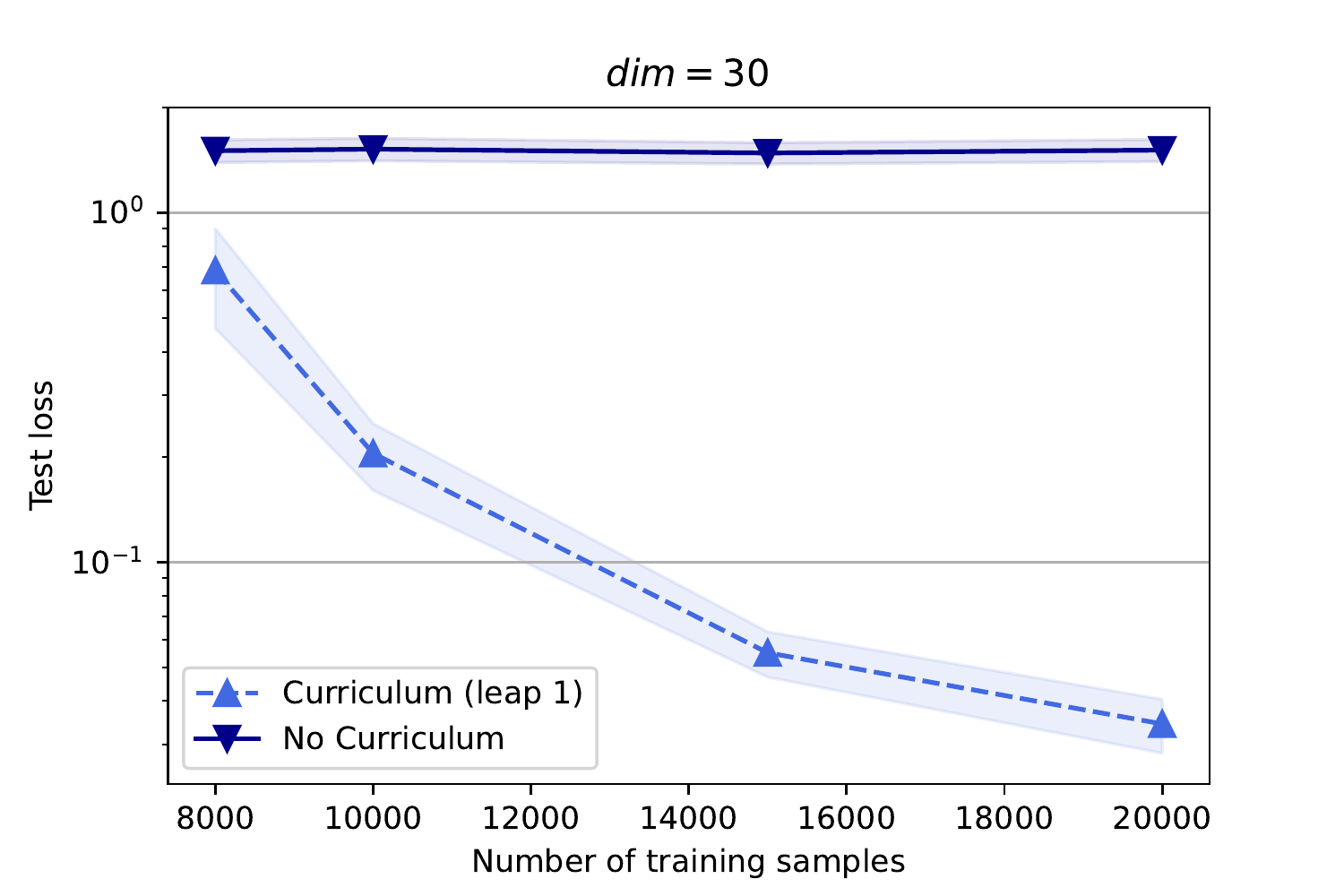}
    \end{subfigure}
    \caption{Generalization loss on the 16-parity (left) and 30-parity (right) targets for different numbers of samples with and without the Degree-Curriculum Algorithm. We note that the MLP model trained without curriculum was not able to learn the full parity function in dimension 30 for the given sample sizes (and even up to $10^5$ samples), in contrast to the same model trained with the Degree-Curriculum.}
    \label{fig:curriculum}
    
\end{figure*}

In Algorithm \ref{alg:degree-curriculum}, it is assumed that the training set is given with the random access model. We can also consider a variant with the query access model, where at step $i$, training samples are queried directly from $B_{r_i}$ (or some distribution). In the former case, the probability of a sample belonging to $B_{r}$ is 
small for small values of $r$ (e.g., $r=o_d(d)$). We thus expect the Degree-Curriculum algorithm under the query access model to be more efficient in that regard. In a concurrent work, \citet{cornacchia2023mathematical} have investigated the benefit of using a query model with a biased sample distribution before a denser distribution to learn parities. Particularly, an improvement in the number of GD iterations has been proved using 1-step gradient arguments. In addition, \citet{abbe2023provable} has pursued the approach from this paper and the paper of 
\citet{cornacchia2023mathematical} and has shown a formal separation between learning with and without curriculum for parities on a common data distribution. More specifically, it has been shown that for a data distribution that is a mixture of dense (uniform) and sparse (e.g., similar to $B_1$) inputs, one can use a two-step curriculum (first on the sparse samples and then on the whole distribution) and learn the parity using fewer optimization steps comparing to the unordered samples.

Note that in the Boolean setting and for the parity functions, $+1$ is the identity element. Thus, the number of $-1$'s used in the Degree-Curriculum algorithm can also be viewed as the length of the inputs. Interestingly, some works in the natural processing domain have used the length of the sentences (possibly along with other properties) to design their curriculum strategy \citep{spitkovsky2010baby, zaremba2014learning, nlp1, nlp3}. Finally, we can naturally extend the Degree-Curriculum algorithm to non-Boolean settings using the same principle as above: \\ {\it Build curriculum sets $\{\tilde{B}_i\}$ of `increased complexity' in order to have a path of learned functions on support sets $\{\mathcal{S}^{(i)}\}$ that are as tightly nested as possible (e.g., staircases or low-leap functions as in the work of \citealp{mergedstaircase}), with the target function at last}. 

\section{Min-Degree Bias Beyond the Previous Settings}
In this section, we study min-degree bias beyond the previous settings. Particularly, we investigate the effects of lifting the sparsity condition, the effects of using causal attention masking in Transformers, and using other architectures, namely, MLPs and mean-field networks. 
\subsection{Small Ambient Dimension}\label{sec:dimension-sensitivity}
In Theorem \ref{thm:random-features}, we showed that for sparse functions and unseen domains (see Definition \ref{def:sparse-setting}) the solution of the random features model would converge to the min-degree interpolator as the ambient dimension and number of features diverge. In our experiments presented in \cref{sec:exps} and \cref{app:exps}, we demonstrated that the min-degree bias is visible even for small values of the dimension. Particularly, for $(f_3, \mathcal{U}_3) = (x_0x_1x_2 + \cdots + x_{13}x_{14}x_0 + x_{14}x_0x_1, \{(x_0,x_1,x_2) = (-1, -1, -1)\})$, we can observe the min-degree bias despite the function not being sparse (see Figures \ref{fig:transformers} and \ref{fig:other-archs}). Note that in this case, the degree and the size of the unseen domain are small in comparison to the ambient dimension. In this section, we show that the min-degree bias can be weak if the ambient dimension is small compared to the degree and size of the unseen domain. Here, we consider two examples: degree-2 parity with holdout of pattern $(-1, -1)$, i.e., $(\mathrm{parity}_2, \mathcal{U}) = (x_0x_1, \{(x_0,x_1) = (-1, -1)\})$ and degree-4 parity with a frozen bit $(\mathrm{parity}_4, \mathcal{U}) = (x_0x_1x_2x_3, \{x_0 = -1\})$. Note that given the unseen domains any interpolator of $\mathrm{parity}_2$ can be written as $(1-\alpha_{\mathrm{Leak}})(x_0+x_1-1) + \alpha_{\mathrm{Leak}}x_0x_1$ where $(1-\alpha_{\mathrm{Leak}})$ is the coefficient of the min-degree interpolator and $\alpha_{\mathrm{Leak}}$ is the leakage coefficient. Similarly, any interpolator of $\mathrm{parity}_4$ is of the form $(1-\alpha_{\mathrm{Leak}})x_1x_2x_3 + \alpha_{\mathrm{Leak}}x_0x_1x_2x_3x_4$. In Figure \ref{fig:low-dimension}, we trained different models on these functions embedded in varying ambient dimensions and computed the leakage coefficient $\alpha_{\mathrm{Leak}}$. It can be seen that if the ambient dimension $d$ is too small, the min-degree bias may become weak or disappear. In such cases, other architecture-specific implicit biases may become relevant.
This is also related to the experiments conducted by \citet{zhou2023algorithms} where it is shown that for a Boolean AND target with all variables being active (i.e., the ambient dimension is equal to the effective dimension), Transformers learn a function different than the min-degree interpolator which is conjectured by \citet{zhou2023algorithms} to be the shortest RASP-L program (a type of program that encodes what Transformers tend to compute, for more about RASP see \citealp{rasp}). However, these non-min-degree results seem to be rather `boundary cases', i.e., when the ambient and effective dimensions are exactly the same or very close. As soon as the ambient dimension exceeds the effective one by a large enough margin, it appears that the min-degree bias dominates again, as seen in Figure \ref{fig:low-dimension}.  

\begin{figure*}[tbh]
     \centering
     \begin{subfigure}[b]{0.49\textwidth}
         \centering
         \includegraphics[width=\textwidth]{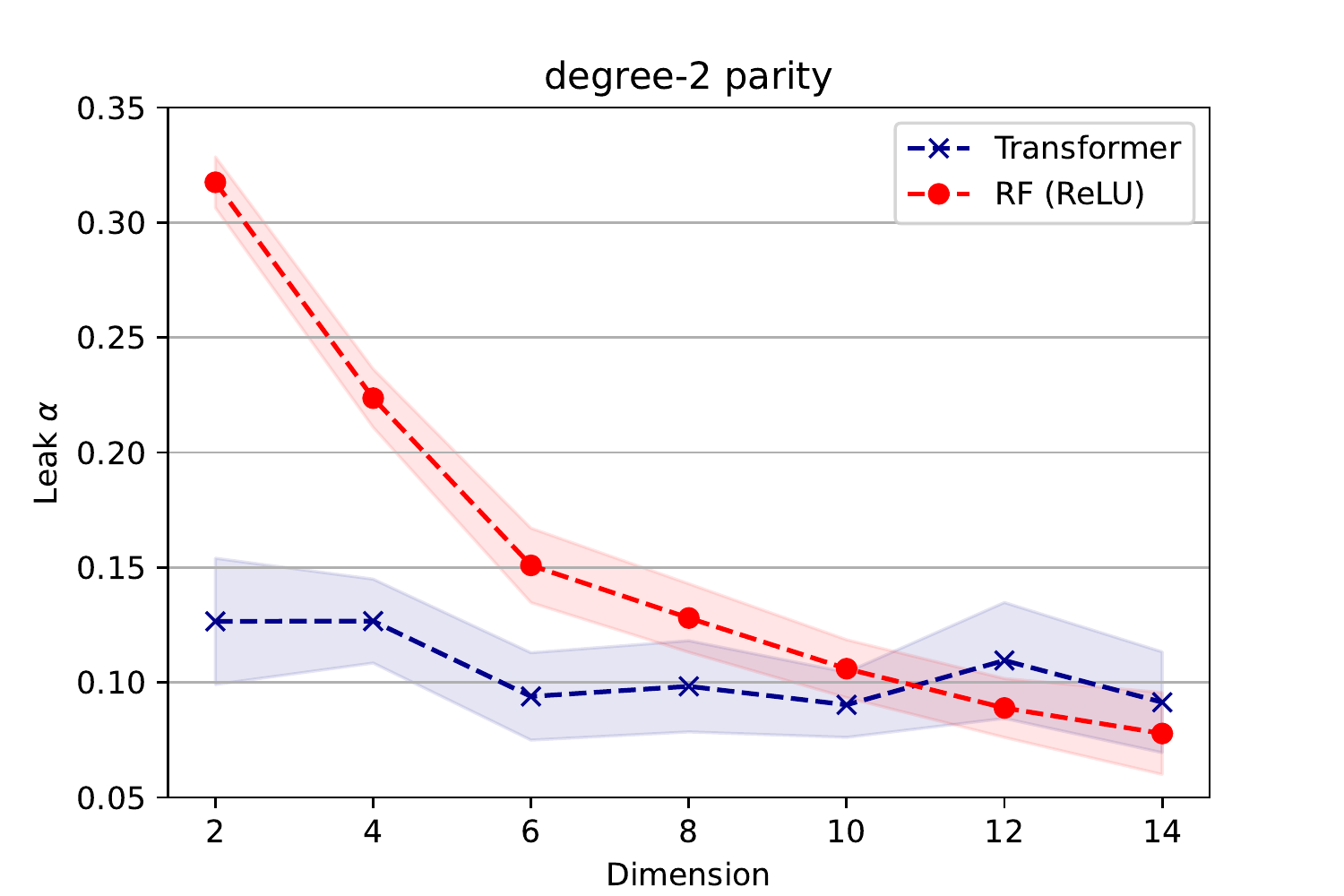}
     \end{subfigure}
     \hfill
     \begin{subfigure}[b]{0.49\textwidth}
         \centering
         \includegraphics[width=\textwidth]{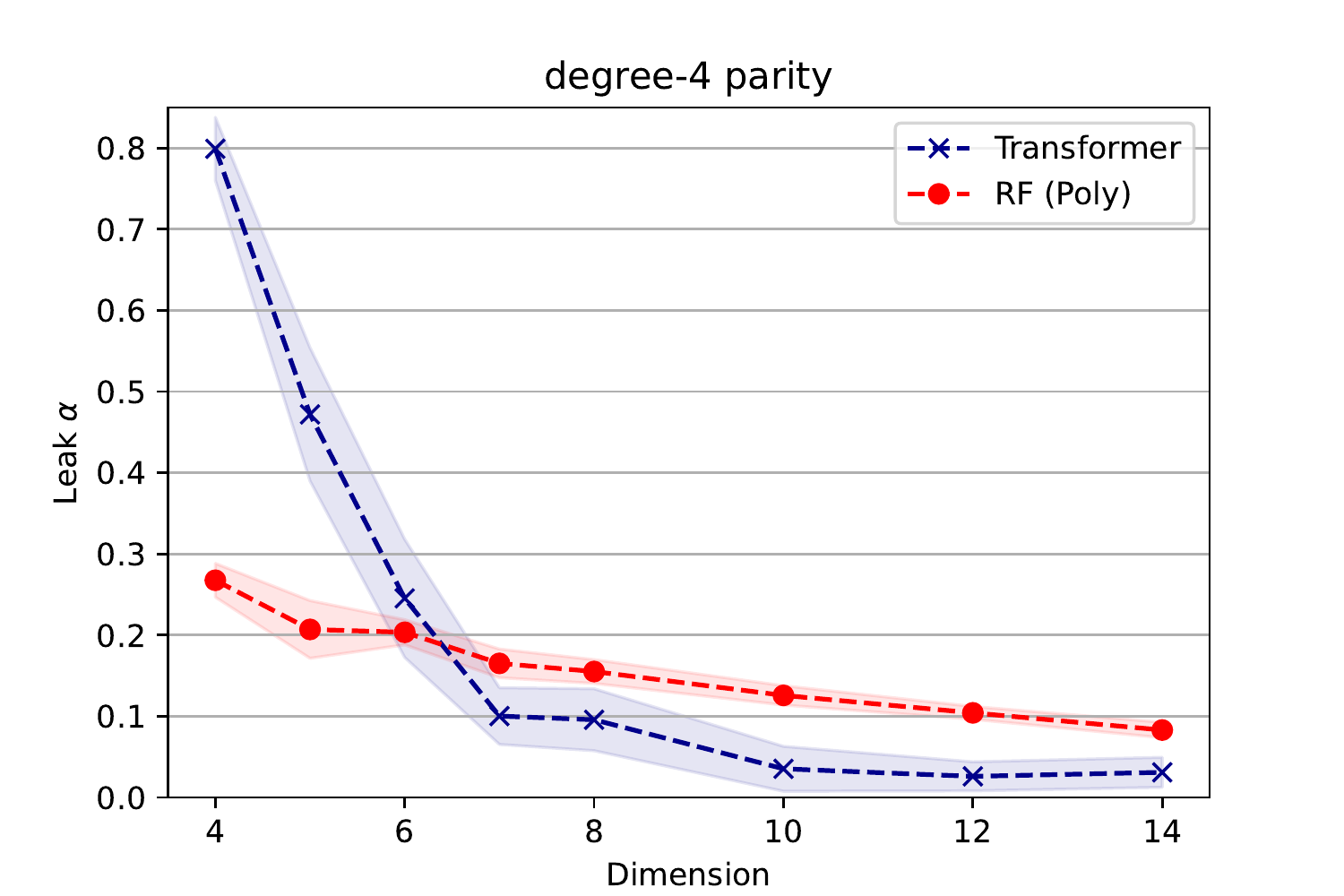}
     \end{subfigure}
     
        \caption{Learning $(\mathrm{parity}_2, \mathcal{U}) = (x_0x_1, \{(x_0, x_1) = (-1,-1)\})$ (left) and $(\mathrm{parity}_4, \mathcal{U}) = (x_0x_1x_2x_3, \{x_0 =-1\})$ (right) embedded in different dimensions with different models. For $\mathrm{parity}_2$ (left) we can see that the min-degree bias is strong for the Transformer even for low-ambient dimensions. We can also see that for the RF model, the min-degree bias becomes stronger as the ambient dimension increases.
        For $\mathrm{parity}_4$ (right) we can see that the Transformer can almost recover the true function when the ambient and active dimensions match. As the ambient dimension grows slightly, we see that the coefficient of the higher degree term falls rapidly resulting in learning the MD interpolator.}
        \label{fig:low-dimension}
\end{figure*}

\subsection{Transformer with Causal Attention}
\label{sec:causal-transformer}
In our main experiments, we have used an encoder-only Transformer architecture with bidirectional attention and learnable absolute positional embeddings. First, we explain our reasoning for this choice.  Note that in our settings, most of the coordinates are i.i.d. uniform $\pm 1$ bits due to the restricted size of the coordinates in the unseen domain (see Definition \ref{def:sparse-setting}). As a result, we do not have any locality structure a priori, and hence, the use of relative positional embeddings does not seem suitable for our tasks. Moreover, our output is a single continuous variable which makes any sort of auto-regressive training inapt as well.

On the other hand, the recent work of \citet{kazemnejad2023impact} has shown that in decoder-only Transformers with no positional embeddings, the causal attention masks make the recovery of positional information possible (in contrast to encoder-only architectures with bidirectional attention in which the removal of positional embeddings makes the architecture permutation invariant). Moreover, they have shown that decoders with no positional embedding may exhibit superior performance in some length generalization tasks compared to decoders with positional embeddings. Motivated by this, we tried modifying our encoder architecture by making the attentions causal (unidirectional) and removing the positional embeddings. We trained this variant in a supervised setting with $\ell_2$ loss similar to the original encoder-only architecture. Interestingly, we observed that this new architecture may lose the min-degree bias in some settings. 

Notice that our original architecture with positional embeddings and bidirectional attention is symmetric with respect to different coordinates. However, this is not true when we use causal attention. For example, the behavior of encoders (with no attention masking) trained on tasks $(f, \mathcal{U}) = (x_0x_1, \{x_0=-1\})$ and $(f, \mathcal{U}) = (x_{14}x_{15}, \{x_{14}=-1\})$ in dimension $d=16$ would be the same, while this is not necessarily true for Transformers with causal attentions. (In Table \ref{tab:decoder}, we see that the behavior is indeed different.) In other words, with causal attention, the positions of latent bits (and unseen domain) matter. 
As an example, we try learning the parity of two bits embedded in ambient dimension $d=16$. We also freeze one of the bits to $+1$ during training (same task as examples above). If our function is $x_ix_j$ with $x_i=1$ during training, the interpolator would have the form $(1-\alpha_{\mathrm{Leak}})x_j + \alpha_{\mathrm{Leak}}x_ix_j$ where the min-degree bias predicts that $\alpha_{\mathrm{Leak}}$ would be small. In Table \ref{tab:decoder}, we have tried different positions for the latent coordinate and the frozen bit and reported the learned solution averaged over $10$ random seeds. Note that we still see the min-degree bias for most of the placements, while for some of the placements, the min-degree bias disappears. Notice that we can also keep the positional embeddings while making the attentions causal. In this case, we can observe the min-degree bias again (potentially still weaker than the encoder-only architecture with bidirectional attention) as seen in \cref{tab:decoder}.
\begin{table}[tb]
    \centering
    \tabcolsep=0.15cm
    \begin{tabular}{cc cc cc}
        \toprule
        & & \multicolumn{2}{c}{Causal mask without pos. emb.} & \multicolumn{2}{c}{Causal mask with pos. emb.} \\
        \cmidrule(lr){3-4} \cmidrule(lr){5-6}
        Target & Fixed & $\overline{\alpha_\mathrm{Leak}} \pm \mathrm{std}$ & learned function & $\overline{\alpha_\mathrm{Leak}} \pm \mathrm{std}$ & learned function \\
        & bit & & (averaged) & & (averaged) \\
        \midrule
$x_0x_1$ & $x_0 $ & $\mathbf{0.55 \pm 0.04}$ & $\mathbf{0.45}x_1 + \mathbf{0.55}x_0x_1$ & $0.17 \pm 0.02$ & $0.83x_1 + 0.17x_0x_1 $ \\
$x_0x_1$ & $x_1$ & $-0.02 \pm 0.02$ & $1.02x_0 -0.02x_0x_1 $ & $0.02 \pm 0.02$ & $0.97x_0 + 0.02x_0x_1 $ \\

$x_{14}x_{15}$ & $x_{14}$ & $0.06 \pm 0.05$ & $0.93x_{15} + 0.06x_{14}x_{15} $ & $0.0 \pm 0.01$ & $0.99x_{15} + 0.0x_{14}x_{15} $ \\
$x_{14}x_{15}$ & $x_{15}$ & $\mathbf{0.47 \pm 0.04}$ & $\mathbf{0.53}x_{14} + \mathbf{0.47}x_{14}x_{15}$ & $0.01 \pm 0.02$ & $1.0x_{14} + 0.01x_{14}x_{15} $ \\

$x_2x_8$ & $x_2 $ & $0.15 \pm 0.05$ & $0.85x_8 + 0.15x_2x_8 $ & $0.01 \pm 0.02$ & $0.99x_8 + 0.01x_2x_8 $ \\
$x_2x_8$ & $x_8 $ & $0.0 \pm 0.02$ & $1.0x_2 + 0.0x_2x_8 $ & $0.0 \pm 0.02$ & $1.0x_2 + 0.0x_2x_8 $ \\

$x_7x_{13}$ & $x_7$ & $0.01 \pm 0.02$ & $0.99x_{13} + 0.01x_7x_{13} $ & $0.01 \pm 0.02$ & $0.99x_{13} + 0.01x_7x_{13} $ \\
$x_7x_{13}$ & $x_{13} $ & $0.02 \pm 0.01$ & $0.99x_7 + 0.02x_7x_{13} $ & $0.02 \pm 0.02$ & $1.0x_7 + 0.02x_7x_{13} $ \\
        \bottomrule
    \end{tabular}
    \caption{Learning parity of two bits while one bit is frozen to one during training using Transformers with causal attention masking. Each row represents one particular combination for the position of the two bits and the frozen (fixed) bit. In columns, we report the average leakage coefficient ($\pm$ standard deviation) and average learned function using 10 seeds for a Transformer with causal masking and no positional embedding and also for a Transformer with causal masking and positional embedding.  For the Transformer without positional embeddings it can be seen that only two of the placements lead to the violation of the min-degree bias (in bold). For the Transformer with causal masking and positional embeddings, there is only placement that leads to a non-negligible leakage. For all other cases, the min-degree bias is still strongly present. Note that the leakage for encoder-only Transformer with bidirectional attention is negligible and independent of placement and thus not reported in this table.}
    \label{tab:decoder}
\end{table}
As reported in \cref{tab:decoder}, the behavior of the Transformer architecture with causal masking heavily depends on the positions of latent coordinates and possibly the position of the bits involved in the unseen domain which creates a large set of placements for each sparse function. As a result, understanding the implicit bias of the Transformer model with causal masking requires a new avenue of investigation which we leave for future work. 

In any case, as mentioned earlier, using a Transformer with causal attention may not be the natural model choice for learning Boolean/logic targets that do not have a causal structure in their input space. In fact, for Boolean inputs in which coordinates do not usually follow any causal relationship, encoder-only Transformers with bidirectional attention (which are symmetric with respect to different coordinates) seem to be the most reasonable choice. In this case, the min-degree bias dominates in the sparse regime. It is mostly intriguing from a theoretical viewpoint to see how the causal attention masking and the removal of positional embeddings affect the min-degree bias in some cases depending on the placement of the function. 

\subsection{Other Architectures} \label{sec:other-archs} 
In Section \ref{sec:exps}, we showed that encoder-only Transformers have a strong min-degree bias similar to the random features model. Now, we investigate two other architectures, namely, multi-layer perceptron (MLP) with 4 hidden layers and 2-layer neural network with mean-field parameterization \citep{mei2018mean}. We show that these architectures also have the min-degree bias although in a weaker format. As a result, they learn leaky min-degree interpolators meaning that they partly capture the higher degree solution along with the min-degree solution. Particularly, we try learning examples $(f_1, \mathcal{U}_1), (f_2, \mathcal{U}_2), (f_3, \mathcal{U}_3)$ of \cref{sec:exps} with the MLP and mean-field model. The results are depicted in \cref{fig:other-archs} showing that these models have leaky min-degree biases. Further, in Appendix \ref{app:learning-rate-sensitivity}, we discuss the effect that large learning rates may increase the leakage of these models. For additional experiments refer to \cref{app:exps}.
\begin{figure*}[ht]
     \centering
     \begin{subfigure}[b]{0.29\textwidth}
         \centering
         \includegraphics[width=\textwidth]{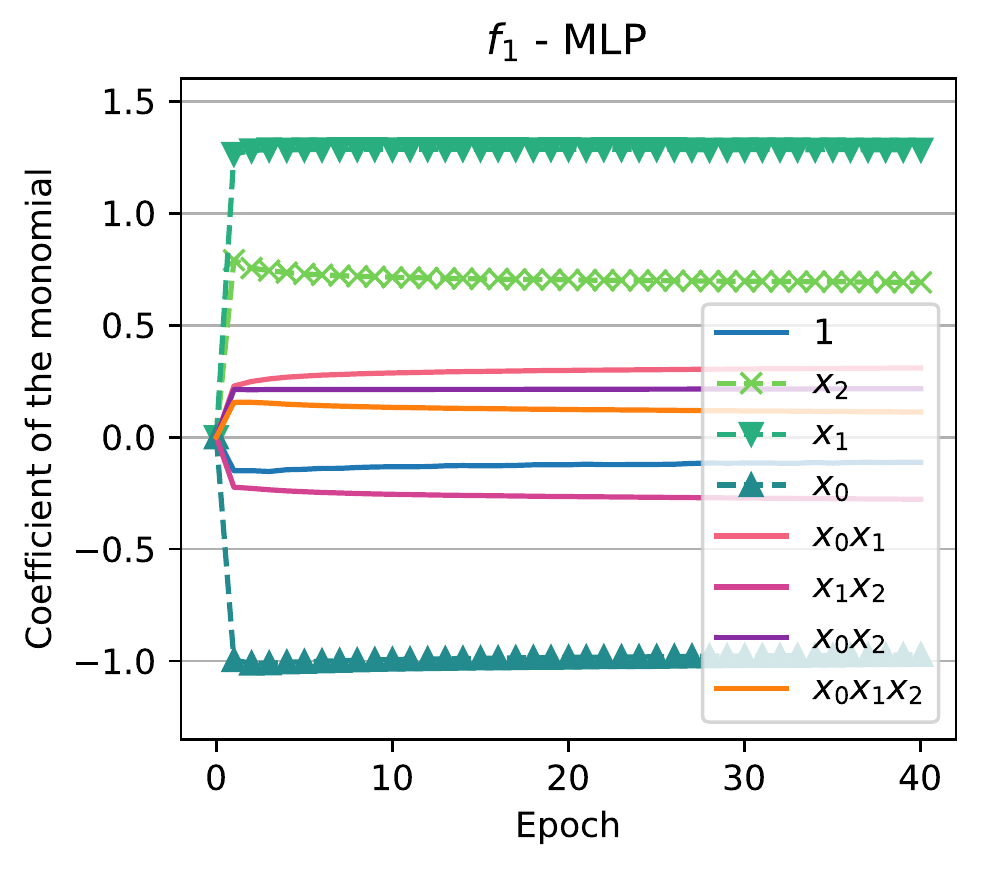}
     \end{subfigure}
     \hfill
     \begin{subfigure}[b]{0.29\textwidth}
         \centering
         \includegraphics[width=\textwidth]{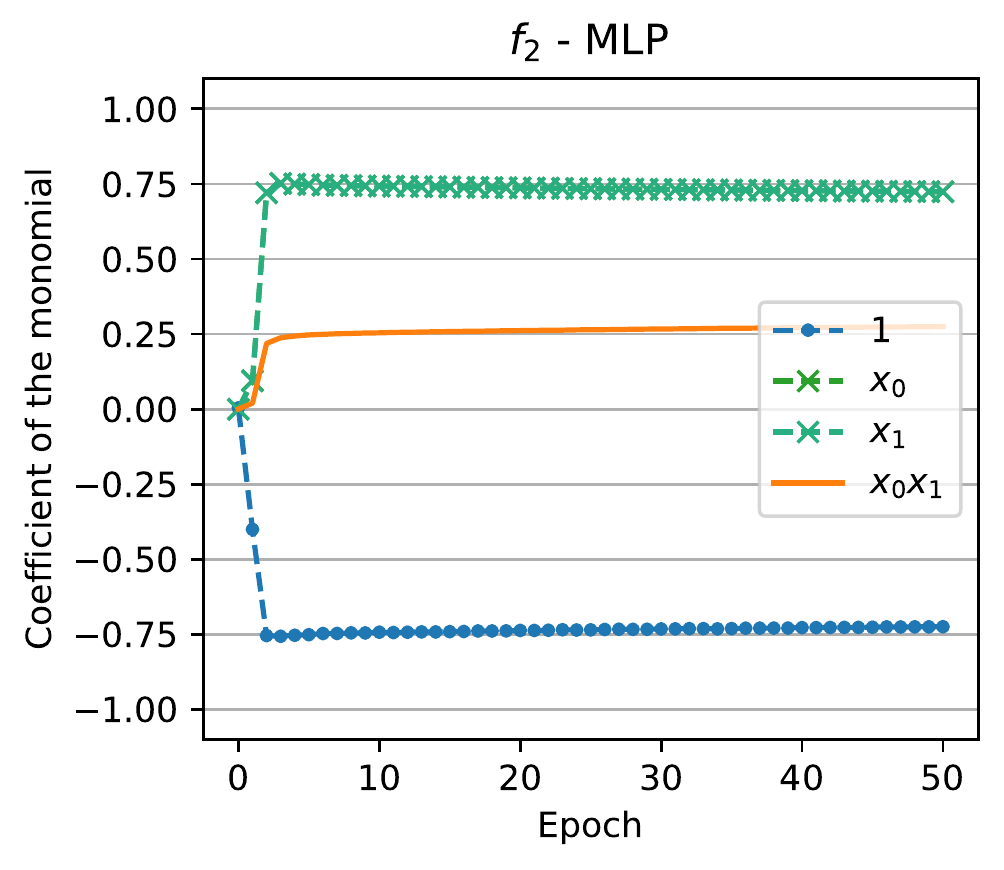}
     \end{subfigure}
     \hfill
     \begin{subfigure}[b]{0.29\textwidth}
         \centering
         \includegraphics[width=\textwidth]{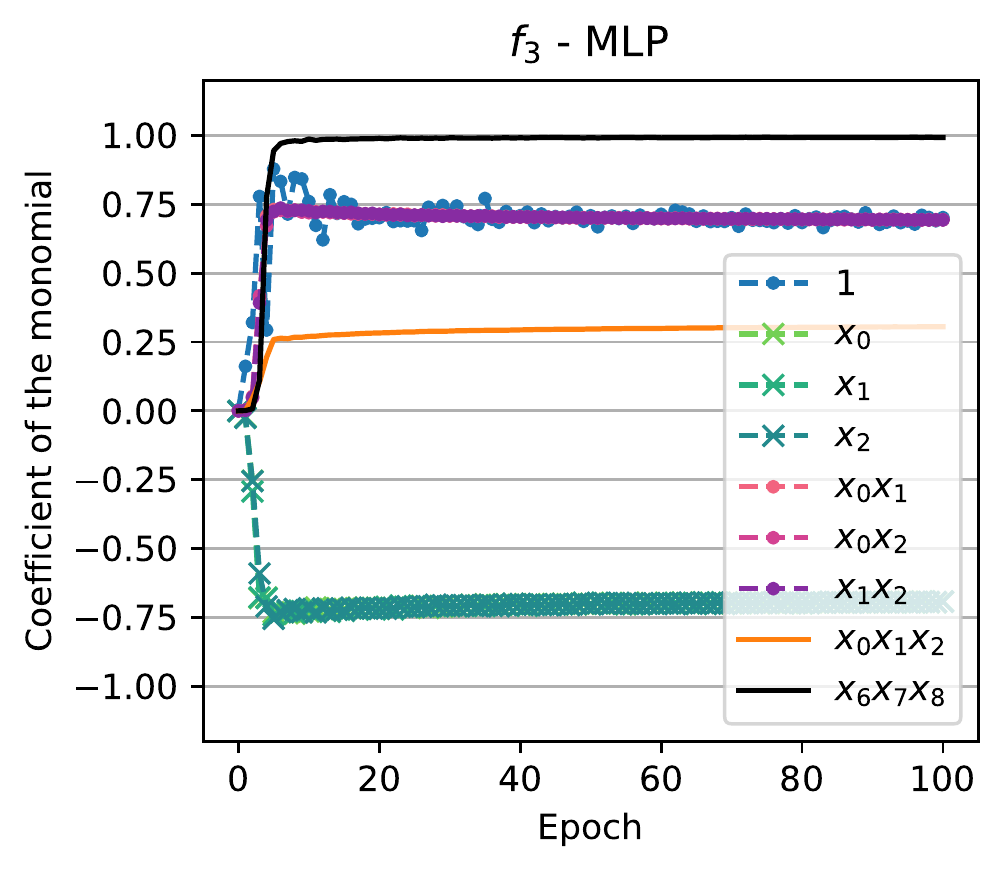}
     \end{subfigure}
          \begin{subfigure}[b]{0.29\textwidth}
         \centering
         \includegraphics[width=\textwidth]{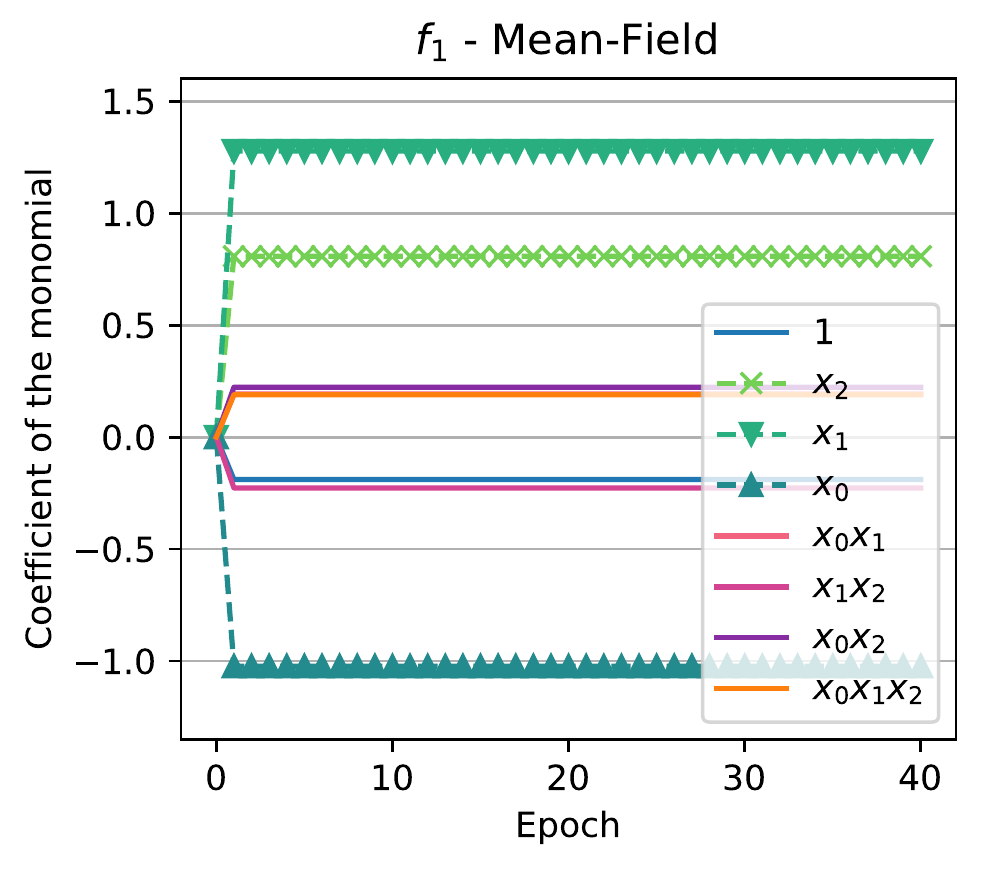}
     \end{subfigure}
     \hfill
     \begin{subfigure}[b]{0.29\textwidth}
         \centering
         \includegraphics[width=\textwidth]{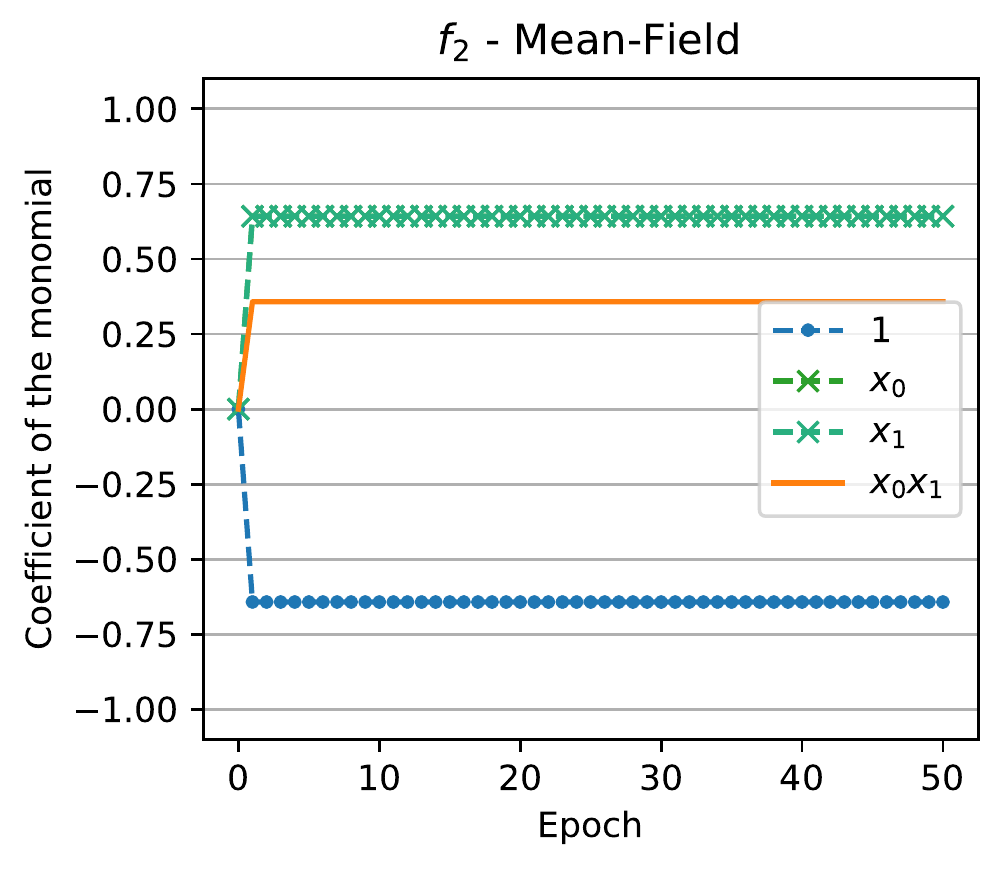}
     \end{subfigure}
     \hfill
     \begin{subfigure}[b]{0.29\textwidth}
         \centering
         \includegraphics[width=\textwidth]{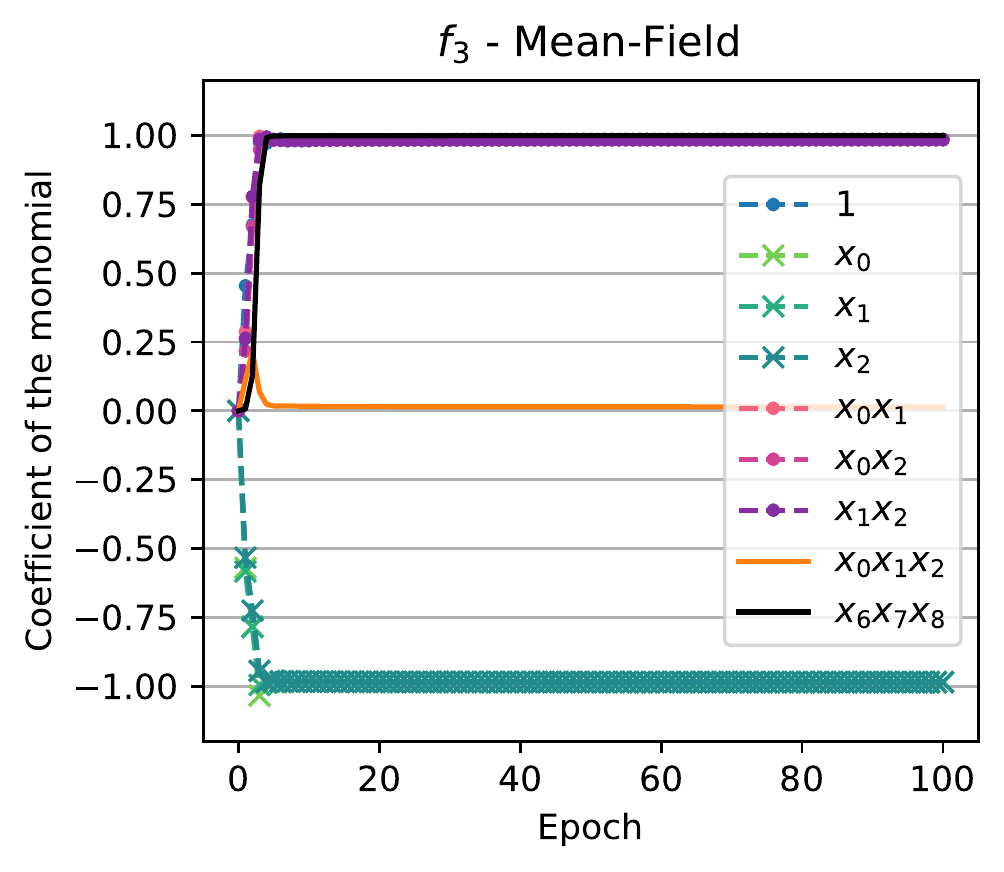}
     \end{subfigure}
     
    \caption{Functions $f_1$ (left), $f_2$ (middle), and $f_3$ (right) of Section \ref{sec:exps} learned by the MLP (top row) and the mean-field model (bottom row). In all of these examples, the higher degree monomials (represented by the solid orange lines in the middle and left columns) are replaceable by the lower degree alternative (represented by the dashed lines). The MLP and mean-field models learn a leaky min-degree interpolator with the coefficient of the higher degree term possibly bounded away from 0.}
    \label{fig:other-archs}
\end{figure*}

\section{Related Literature}

Given the deployment of machine learning models in the real world, out-of-distribution generalization is a critical aspect of machine learning that has been extensively studied both in theory \citep{ben2006analysis, mansour2009domain, redko2020survey} and in practice \citep{gulrajani2020search, miller2021accuracy, wiles2022a}. Our work considers an extreme case of distribution shift in which part of the domain is entirely unseen during the training, and thus OOD generalization is only possible if the target function has special structures (e.g., being compositional or having in/equi-variances) and the model captures those structures.
OOD generalization and the ability to extrapolate have also been used as proxies for measuring the reasoning capabilities of neural networks \citep{saxton2019analysing, Zhang2021PointerVR, csordas2021devil, zhang2022unveiling} as these models are prone to memorization of training samples \citep{carlini2019secret-mem1,feldman2020neural-mem2,kandpal2022deduplicating-mem3,carlini2022quantifying-mem4, Zhang2021PointerVR} or learning undesirable shortcuts \citep{zhang2022unveiling}.
A special case is length generalization \citep{zaremba2014learning, lake2018generalization, hupkes2020compositionality, zhang2022unveiling, anil2022exploring-length, zhou2023algorithms}, i.e., generalization to the input lengths beyond what is seen during the training. In this paper, we provided an explanation for the length generalization problem in the simple instance of parity functions \citep{anil2022exploring-length}.

It has been shown that training with gradient descent imposes particular implicit regularization on the solutions found by the models such as sparsity \citep{moroshko2020implicit}, norm minimization \citep{bartlett2021deep}, and margin maximization (in linear classification setting) \citep{soudry2017implicit}. 
This implicit regularization (or implicit bias) of neural networks trained with gradient-based algorithms has been used to explain the generalization of (often overparametrized) models \citep{bartlett2021deep}.
These results depend on the optimizer \citep{gunasekar2018characterizing} and model \citep{gunasekar2018implicit} and are usually proven for simple models such as linear models \citep{soudry2017implicit,  yun2020unifying, jacot2021saddle} including diagonal linear neural networks \citep{gunasekar2018implicit, moroshko2020implicit} as studied in this paper. Our result for the random feature model builds upon the implicit bias toward solutions with minimum norm \citep{bartlett2021deep}. 
Also related to us is the spectral bias \citep{xu2019frequency, rahaman2019spectral} stating that neural networks, when learning a function in continuous settings, capture the lower frequency components faster (note that degree in Boolean functions plays a similar role to the frequency). In this paper, we develop a related insight in the Boolean setting by introducing the notion of degree-profile and showing the min-degree implicit bias for several models theoretically and empirically.
On the drawbacks of such implicit biases, \citet{shah2020pitfalls} put forward the notion of extreme simplicity bias, showing that neural networks may ignore complex predictive features and solely rely on the simpler features. This simplicity bias may result in vulnerability to adversarial perturbations and poor OOD performance. In this paper, we also discuss how the min-degree bias can practically hinder length generalization.

\section{Conclusions and Future Directions}\label{sec:conclusions}
In this paper, we put forward the concept of generalization on the unseen (GOTU) and considered the learning of Boolean functions. We showed that various network architectures have a bias toward the min-degree interpolator, with theoretical results for the RF and diagonal/2-layer linear neural networks, and experimental results for encoder-only Transformers. We also found empirically that for large learning rates or for other models such as mean-field networks, a leaky version of the MD bias takes place. We also observed that using causal attention masking along with removing positional embeddings in Transformers or having very small ambient dimensions can tame the min-degree bias in some cases. 

We showed that the MD bias can be used in a curriculum learning algorithm where the training takes place on sets of increasing complexity. We also demonstrated that the MD bias can impede the learning of symmetric solutions and can make length generalization difficult.

The min-degree bias is a form of Occam's razor chosen by GD-trained neural nets, where the `simplicity' is measured by the `degree-profile'. However, this might not be a desirable form of razor for various reasoning tasks. We believe that other forms promoting symmetries, compositionality, or more generally minimum description length (MDL) may often be more suitable. The next natural steps are thus to correct this min-degree bias. We propose here some directions to pursue: (1) architecture design promoting symmetries or compositionality, (2) hyperparameter tuning (e.g., learning rates, scale), (3) data augmentation and multitasking, (4) MDL-like regularization at training. 

Lastly, we provide a demonstration of the last idea. For this example, consider learning task $f_3(x) = x_0x_1x_2 + x_1x_2x_3 + \cdots + x_{14}x_0x_1$ and $\mathcal{U}_3 = \{(x_0, x_1, x_2) = (-1, -1, -1)\}$. Note that this target has a cyclic symmetry but the min-degree interpolator is not invariant under any permutation (other than the identity). We add regularization term $L_{\mathrm{reg}} = \mathbb{E}_{\pi,x}[(f_{\mathrm{NN}}(x) - f_{\mathrm{NN}}(\pi(x)))^2]$, where $x$ is a random binary vector and $\pi$ is drawn uniformly from the set of all permutation ($\pi(x)$ refers to permuting indices of $x$ according to $\pi$), to the loss function. Note that this regularization term is $0$ if and only if the neural network's function is permutation invariant. We train the encoder-only Transformer with this regularizer. Particularly, we use 256 random samples and 1 random permutation in each iteration to estimate the regularization loss. In Figure \ref{fig:reg}, we show the coefficient of different monomials during training with this regularizer. It can be seen that the high-degree term $x_0x_1x_2$ is mostly recovered by the end of training. In Figure \ref{fig:transformers}, we saw that training Transformers on this task without regularizer would result in high-degree term $x_0x_1x_2$ not being learned. Moreover, we can compare the generalization error of these two cases. With the regularization term, the Transformer achieves the generalization error $0.95 \pm 0.13$, while if no regularization is used the generalization error would be $8.14 \pm 0.9$ (we report average and standard deviation over $10$ seeds). More general investigations of this approach, also on how to learn symmetry groups, are left to future work.  

\begin{figure}[htb]
\centering
\includegraphics[width=0.6\textwidth]{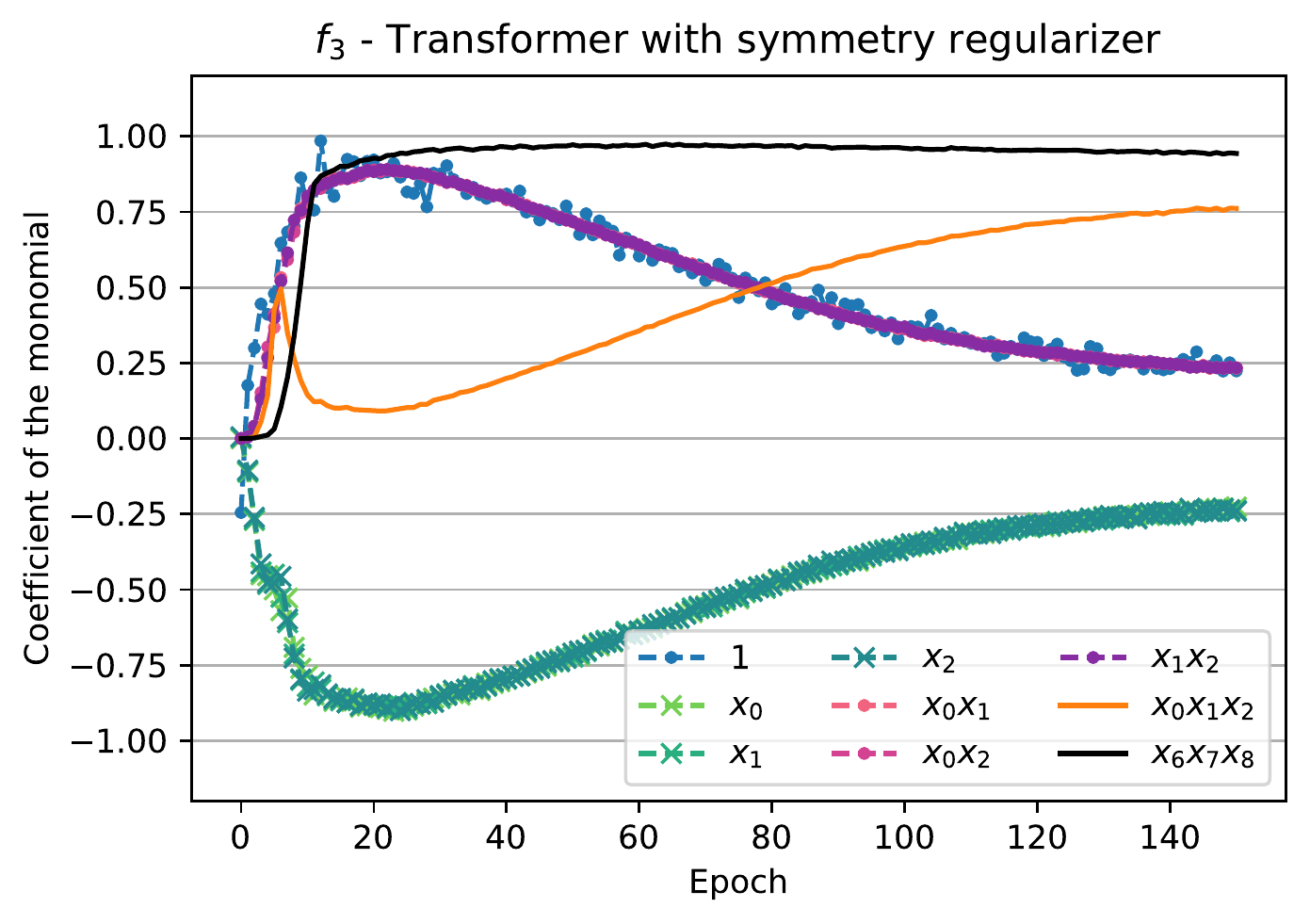}
\caption{Function $f_3$ learned by a Transformer with a regularization term. One can see that the high-degree monomial $x_0x_1x_2$ (orange solid line) is mostly recovered compared to Figure \ref{fig:transformers} where it was completely lost.}
\label{fig:reg}
\end{figure}


\acks{The authors of this work were funded by EPFL and Apple. We would like to thank anonymous reviewers and colleagues for their feedback on the earlier versions of this work.}


\newpage

\appendix
\section{Proofs}\label{app:proofs}
\subsection{Proofs for the Random Features Model}
We start by proving a lemma showing that for strongly expressive activation functions each random feature is low-degree in the sense that the high-degree monomials have small coefficients in the Fourier-Walsh expansion of the random features. 
\begin{lemma}[Random features are low-degree]\label{lemma:random-low-deg}
Consider random features generated \\by an activation function that is strongly expressive up to $P=O_d(1)$, i.e.,  $\phi_{w,b}(x) = \sigma(\langle w, x \rangle + b)$ where $w_i, b \sim \mathcal{N}(0, \frac{1}{d})$ are the random weights and bias. We have the following additional properties:
\begin{enumerate}[label=A{{\arabic*}}., leftmargin=28pt]
\setcounter{enumi}{2}
    \item $\forall T \subseteq [d]\;\;  \mathbb{E}_{w,b}[\hat{\phi}_{w,b}(T)^2]$ exists and $\mathbb{E}_{w,b}[\hat{\phi}_{w,b}(T)^2] = \Theta(d^{-|T|})$ for $|T| \leq P$; 
    \item $\mathbb{E}_{w,b}[\hat{\phi}_{w,b}(T)\hat{\phi}_{w,b}(T')] = 0$ for $T \neq T'$; and
    \item $\mathbb{E}_{w,b}[\hat\phi_{w,b}(T)^2] =0  \iff \hat\phi_{w,b}(T) = 0 \; \forall\; w, b$,
\end{enumerate}
where $\hat{\phi}_{w,b}(T)$ is the coefficient of monomial $T$ in random feature $\phi_{w,b}(x).$
\end{lemma}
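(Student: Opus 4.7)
The plan is to reduce everything to a Gaussian kernel computation and then to combinatorics of the Fourier expansion. For any $x,x'\in\{\pm 1\}^d$, the pair $(\langle w,x\rangle+b,\langle w,x'\rangle+b)$ is bivariate Gaussian with common variance $\sigma_0^2 := (d+1)/d\le 2$ and correlation $\rho(s) = (s+1)/(d+1)$, where $s = \langle x,x'\rangle$. Swapping expectations,
$$\mathbb{E}_{w,b}\!\left[\hat\phi_{w,b}(T)\hat\phi_{w,b}(T')\right] = \mathbb{E}_{x,x'}\!\left[\chi_T(x)\chi_{T'}(x')\,K(\langle x,x'\rangle)\right],$$
where $K(s) := \mathbb{E}[\sigma(U)\sigma(V)]$ depends on $x,x'$ only through $s$. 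Finiteness (the existence clause of A3) follows from Cauchy--Schwarz together with A1: $\mathbb{E}_{g\sim N(0,\sigma_0^2)}[\sigma(g)^2]$ is controlled by the fourth-moment bound on $\mathcal{N}(0,2)$ since $\sigma_0^2\le 2$.

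For A4, I apply the bijective change of variables $y_i := x_i x_i'$. When $x,x'$ are independent uniform on $\{\pm 1\}^d$, so are $x$ and $y$, with $\langle x,x'\rangle = \sum_i y_i$ and $\chi_{T'}(x') = \chi_{T'}(x)\chi_{T'}(y)$. Hence
$$\chi_T(x)\chi_{T'}(x')\,K(\langle x,x'\rangle) = \chi_{T\triangle T'}(x)\,\chi_{T'}(y)\,K\!\left(\textstyle\sum_i y_i\right),$$
and $x\perp y$ factorizes the expectation; when $T\ne T'$ the $x$-factor $\mathbb{E}_x[\chi_{T\triangle T'}(x)]$ vanishes, yielding A4.

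For A3, specializing $T=T'$ gives $\mathbb{E}_{w,b}[\hat\phi_{w,b}(T)^2] = \mathbb{E}_y[\chi_T(y)\,K(\sum_i y_i)]$. Expanding $\tilde\sigma(z) := \sigma(\sigma_0 z)$ in the orthonormal Hermite basis as $\tilde\sigma = \sum_{k\ge 0} a_k \tilde H_k$ (with $\sum_k a_k^2 < \infty$ by A1), Mehler's formula gives $K(s) = \sum_k a_k^2 \rho(s)^k$. A direct exponential-generating-function calculation, starting from $\mathbb{E}_y[\chi_T(y)\,e^{z\sum_i y_i}] = \sinh(z)^{|T|}\cosh(z)^{d-|T|}$, then yields
$$\mathbb{E}_y[\chi_T(y)\rho(s)^k] = \frac{k!}{(d+1)^k}\,[z^k]\!\left(e^z \sinh(z)^{|T|} \cosh(z)^{d-|T|}\right).$$
Using $\cosh(z)^{d-|T|}\approx e^{(d-|T|)z^2/2}$, this is $\Theta(d^{-|T|})$ for $k=|T|$ and $O(d^{-(k+|T|)/2})$ for $k>|T|$ (and vanishes for $k<|T|$ by parity). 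Summing against $a_k^2$, the leading $k=|T|$ term contributes $a_{|T|}^2\cdot |T|!/(d+1)^{|T|} = \Theta(d^{-|T|})$ while higher-order terms are smaller by powers of $d$; combined with the $\Omega$-bound from A2, this delivers A3.

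Finally, A5 is immediate: $(w,b)\mapsto\hat\phi_{w,b}(T) = \mathbb{E}_x[\sigma(\langle w,x\rangle + b)\chi_T(x)]$ is a finite linear combination of continuous functions of $(w,b)$, and the Gaussian density of $(w,b)$ has full support on $\mathbb{R}^{d+1}$, so a zero $L^2$-norm forces pointwise vanishing. The main technical obstacle is tail control of the Hermite series in A3, since $|\mathbb{E}_y[\chi_T(y)\rho^k]|$ does not decay uniformly in $k$ at rate $d^{-|T|}$; I plan to split the sum into a small-$k$ part handled by the explicit EGF formula and a large-$k$ tail bounded using Cauchy--Schwarz $|\mathbb{E}_y[\chi_T(y)\rho^k]| \le \sqrt{\mathbb{E}_y[\rho^{2k}]}$ together with the sub-Gaussian moment estimate $\mathbb{E}[(\sum_i y_i)^{2k}] = O((2k-1)!!\,d^k)$.
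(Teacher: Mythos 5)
Your proposal is correct in substance but takes a genuinely different and considerably heavier route than the paper's for (A3) and (A4). For (A3) the paper avoids all Hermite/Mehler machinery: by permutation-invariance of the Gaussian law of $(w,b)$, $\E_{w,b}[\hat\phi_{w,b}(T)^2]$ depends only on $|T|$, and then a one-line Parseval argument
\[
\binom{d}{k}\,\E_{w,b}[\hat\phi_{w,b}(T)^2] \;\le\; \E_{w,b}\!\Big[\sum_{S}\hat\phi_{w,b}(S)^2\Big] \;=\; \E_{g\sim\mathcal N(0,(d+1)/d)}[\sigma(g)^2] \;=\; O(1)
\]
gives existence plus the upper bound $O(d^{-|T|})$ at once; the lower bound then comes directly from (A2), same as yours. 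For (A4) the paper flips the sign of a single coordinate $w_i$ for $i\in T\triangle T'$, which preserves the distribution of $w$ while negating $\hat\phi_{w,b}(T)\hat\phi_{w,b}(T')$; you instead apply a data-side symmetry $y_i=x_ix_i'$ to factorize the kernel expectation. Both symmetry arguments are valid. (A5) is essentially identical.

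Your kernel-plus-Hermite route would yield sharper information than the lemma asks for (exact asymptotics tied to the Hermite spectrum of $\sigma$, in the spirit of the Abbe--Cornacchia--Hazla--Marquis lemma the paper cites), but that extra strength has a real technical price that the paper's route sidesteps entirely: the tail control you flag at the end. Your Cauchy--Schwarz estimate $\sqrt{\E_y[\rho^{2k}]}\lesssim\sqrt{(2k-1)!!}/d^{k/2}$ is only useful for $k\ll d$ (beyond that $(2k-1)!!$ dominates), so you would need a three-regime split (the explicit EGF estimate for small $k$, Cauchy--Schwarz for intermediate $k$, and the trivial bound $|\rho|\le 1$ with $\sum_k a_k^2<\infty$ for very large $k$) plus a check that the middle regime doesn't contribute more than $o(d^{-|T|})$. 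One minor slip: the claim that $a_{|T|}^2\,|T|!/(d+1)^{|T|}=\Theta(d^{-|T|})$ presupposes $a_{|T|}\ne 0$, which (A2) does not assert directly; this is harmless since you ultimately invoke (A2) for the lower bound, but the sentence as written overstates what the Hermite computation gives you on its own.
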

\begin{proof}
    For property (A3), consider all the subsets of $[d] = \{1, \ldots, d\}$ with size $k \leq P$: $T_1, T_2, \ldots, T_{\binom{d}{k}}$. Due to the symmetry, we have $\E_{w,b}[\hat \phi(T_1)^2]=\cdots=\E_{w,b}[\hat \phi(T_{\binom{d}{k}})^2]$. Moreover, we have
    \begin{align}
        \binom{n}{k}\E_{w,b}[\hat \phi(T_i)^2] &= \sum_{i=1}^{\binom{d}{k}} \E_{w,b}[\hat \phi(T_i)^2] = \E_{w,b}[\sum_{i=1}^{\binom{d}{k}} \hat \phi(T_i)^2] \leq \E_{w,b}[\sum_{T \subseteq [d]} \hat \phi(T)^2] \\&= \E_{w,b}[\E_{x}[\phi(x)^2]]\label{eq:temp1}=\E_{x}[\E_{w,b}[\sigma(\langle w, x \rangle + b)^2]] = \E_{g \sim \mathcal{N}(0, \frac{d+1}{d})}[\sigma(g)^2],
    \end{align}
    where in Equation \ref{eq:temp1} we used Parseval's identity. By assumption (A1) on the function we know that $\E_{g \sim \mathcal{N}(0, 2)}[\sigma(g)^4]$ is finite. Thus, $\E_{g \sim \mathcal{N}(0, 2)}[\sigma(g)^2]^2$ is also finite and consequently $\E_{g \sim \mathcal{N}(0, \frac{d+1}{d})}[\sigma(g)^2]^2$ can be upper bounded independently of $d$, which proves the existence part. Furthermore, $\E_{w,b}[\hat \phi(T_i)^2] = O_d(\binom{d}{k}^{-1}) = O_d(d^{-k}),$ where we used $k \leq P = O_d(1)$. Now by property (A2), we can conclude that $\mathbb{E}_{w,b}[\hat{\phi}_{w,b}(T)^2] = \Theta(d^{-|T|}) \mathrm{\;for\;} |T| \leq P$.

    For property (A4), assuming $T\neq T'$ take $i \in T \Delta T'$. Without loss of generality suppose $i \in T, i \notin T'$. For weight vector $w$, we flip the sign of the  $i$-th coordinate and denote the resulting vector by $w_{-i}$. Now note that $\E_x[\sigma(\langle w, x\rangle + b)\chi_T(x)] = -\E_x[\sigma(\langle w_{-i}, x\rangle + b)\chi_T(x)]$ and $\E_x[\sigma(\langle w, x\rangle + b)\chi_{T'}(x)] = \E_x[\sigma(\langle w_{-i}, x\rangle + b)\chi_{T'}(x)]$. Hence, $\hat\phi_{w,b}(T)\hat\phi_{w,b}(T') = -\hat\phi_{w_{-i},b}(T)\hat\phi_{w_{-i},b}(T')$ and 
    $\mathbb{E}_{w,b}[\hat{\phi}_{w,b}(T)\hat{\phi}_{w,b}(T')] = 0$.

    Note that the last property is a consequence of the continuity assumption on the activation function. 
\end{proof}
\vspace{1cm}
Now we can prove Theorem~\ref{thm:random-features}.
\begin{proof}\textit{(Theorem~\ref{thm:random-features})}
First, recall the set of all interpolating solutions on the training set $\mathcal{U}^c$ as 
$$
\mathcal{F}_{\mathrm{int}}(f_{\mathrm{target}}, \mathcal{U}) = \{f\colon \{\pm 1\}^d \to \mathbb{R} \mid f(x) =f_\mathrm{target}(x) \;\forall x \in \mathcal{U}^c\}.
$$
Note that a solution given by $a_1, \ldots, a_{N}$ is interpolating if and only if $
\frac{1}{\sqrt{N}}\sum_{i=1}^N a_i \phi_i(x) \in \mathcal{F}_{\mathrm{int}}
$.

Moreover, we study the features and solutions in the Fourier-Walsh basis. First, we index all possible monomials, i.e., $\{T_1, T_2, \ldots, T_{2^d}\} = 2^{\{1, 2, 3, \ldots, d\}}$ and $\chi_{T_i}(x) = \prod_{j \in T_i} x_j$. Further, we define the coefficient of monomial $T_j$ in the $i$-th feature as $\hat{\phi}_{i}(T_j) \coloneqq \mathbb{E}_x[\phi_i(x)\chi_{T_j}(x)]$ and $F \in \mathbb{R}^{2^d \times N}$ as the matrix of features in the Fourier expansion, i.e., $F_{i, j}=\frac{1}{\sqrt{N}}\hat{\phi}_j(T_i)$. Using this notation, $a$ corresponds to an interpolating solution if and only if 
\begin{equation}
\exists g \in \mathcal{F}_{\mathrm{int}}\;\; Fa = \hat{g}, \label{eq:int-sol-Fourier} 
\end{equation}
where $\hat{g}$ represents function $g$ in the Fourier-Walsh basis. 
Furthermore, note that
\begin{equation}
(FF^T)_{i,j} = \sum_{k=1}^{N}(\frac{1}{\sqrt{N}}\hat{\phi}_{k}(T_i))(\frac{1}{\sqrt{N}}\hat{\phi}_{k}(T_j)) = \frac{1}{N}\sum_{k=1}^{N} \hat{\phi}_{k}(T_i)\hat{\phi}_{k}(T_j).
\end{equation}
Note that weights and biases of the features are sampled i.i.d., therefore, as $N \to \infty$, $(FF^T)_{i,j}$ behaves like $\mathcal{N}(\mathbb{E}_{w}[\hat{\phi}_w(T_i)\hat{\phi}_w(T_j)], N^{-1}\var_w[{\hat{\phi}_w(T_i)\hat{\phi}_w(T_j)}])$ in  distribution, due to the central limit theorem (CLT). More precisely, we need to invoke the law of large number to get concentration on the mean. In our cases, the variances are finite due to property (A1). More specifically, $\E_{g\sim\mathcal{N}(0, 2)}[\sigma(g)^4]$ is finite, and hence, $\E_{g\sim\mathcal{N}(0, \frac{d+1}{d})}[\sigma(g)^4]$ is finite. Moreover, 
\begin{align}
    \infty &> \E_{g\sim\mathcal{N}(0, \frac{d+1}{d})}[\sigma(g)^4] = \E_{w, b}[\E_x[\sigma(\langle w, x\rangle + b)^4]]  \geq \E_{w, b}[\E_x[\sigma(\langle w, x\rangle + b)^2]^2] \label{eq:temp2}\\&= \E_{w, b}[(\sum_{T \subseteq [d]} \hat\phi_{w,b}(T)^2)^2] \geq \E_{w, b}[\hat\phi_{w,b}(T_i)^2\hat\phi_{w,b}(T_j)^2] \;\;\forall i, j, \label{eq:temp3}
\end{align}
where we used Parseval's identity from \cref{eq:temp2} to \cref{eq:temp3}.
We define $\Phi \in \mathbb{R}^{2^d \times 2^d}$ as a shorthand notation as
\begin{equation}
    \Phi_{i,j} = \mathbb{E}_{w,b}[\hat{\phi}_{w,b}(T_i)\hat{\phi}_{w,b}(T_j)] = \begin{cases}
    0 & i\neq j \\
    \mathbb{E}[\hat\phi_{w,b}(T_i)^2] & i=j
    \end{cases}, 
\end{equation}
where we have used properties (A3) and (A4).
\subsubsection{Existence of Interpolating Solutions}
Now, we show that an interpolating solution exists with high probability. Particularly, take any interpolator $g$ that only depends on the latent variables $x_{i_1}, \ldots, x_{i_P}$ and we show that $\hat{g}$ is in the image of $F$ w.h.p. and hence being an interpolating solution given \cref{eq:int-sol-Fourier}. 
Consider monomials such as $T$ for which $\forall w, b\; \hat\phi_{w,b}(T) = 0$. Due to properties (A2) and (A5), we know that such $T$'s satisfy $\deg(T) > P$, hence their corresponding rows are both zero in $F$ and in $\hat{g}$. We remove these rows from $F$ and $\hat{g}$ and call the new ones $\tilde{F}$ and $\tilde{\hat{g}}$. We also remove corresponding rows and columns from $\Phi$ and denote the new matrix by $\tilde{\Phi}$.

Note $Fa = \hat{g} \iff \tilde{F}a = \tilde{\hat{g}}$, therefore to prove that $\hat{g} \in \mathrm{Image}(F)$ its enough to show that $\tilde{F}$ is full row-rank, or equivalently, $\tilde{F}\tilde{F}^T$ is full rank. Note that $\tilde{F}\tilde{F}^T$ converges to $\tilde{\Phi}$ almost surely. 
Note that $\tilde{\Phi}$ is a diagonal matrix such that all elements on the diagonal are positive as all zero-entries of the diagonal are already removed by property (A5). Therefore $\tilde{\Phi}$ is full rank and $\tilde{F}\tilde{F}^T$ becomes full rank almost surely as $N \to \infty$. This concludes the proof of the existence of interpolators. 
\subsubsection{Learning the Min Degree-Profile Interpolating Solution}
Now, we investigate the interpolating solution found by the model. Note that we are interested in the interpolating solution with the minimum  norm $\|a\|_2$ (which is the solution found by GD starting from $a=0$). Consider an interpolating solution $g \in \mathcal{F}_{\mathrm{int}}.$ The interpolator $g$ is found by the model if and only if $Fa = \hat{g}$, where $\hat{g}$ is the Fourier expansion of $g$ written in the vector form. Moreover, note that the $a$ satisfying $Fa = \hat{g}$ with the minimum norm $\|a\|_2$ is $a^{*}_g = F^\dagger\hat{g}$, where $F^\dagger$ is the Moore-Penrose pseudo-inverse. Therefore, we have
\begin{equation}
    \|a_{\mathrm{RF}}\|^2_2 = \min_{g \in \mathcal{F}_{\mathrm{int}}, \hat{g} \in \mathrm{Im}(F)} \|F^\dagger\hat{g}\|_2^2 \Longrightarrow g_{\mathrm{RF}} = \arg\min_{g \in \mathcal{F}_{\mathrm{int}} , \hat{g} \in \mathrm{Im}(F)} \|F^\dagger\hat{g}\|_2^2.
\end{equation}
Now note that we have 
\begin{align}
    \|F^\dagger\hat{g}\|_2^2 =  \|F^T(FF^T)^\dagger\hat{g}\|_2^2 = \hat{g}^T (FF^T)^\dagger FF^T (FF^T)^\dagger \hat g
\end{align}
We know that $FF^T$ almost surely converges to $\Phi$, which is a diagonal matrix. Moreover, by property (A5), we know that the zero elements on the diagonal of $\Phi$ correspond to zero rows of $F$, and hence zero entries of $g$ since $g \in \mathrm{Im}(F)$. Thus, we can say that $(FF^T)^\dagger$ and $\|F^\dagger \hat g\|^2$ converge to $\Phi^\dagger$ and $g^T\Phi^\dagger g$ as $N \to \infty$ w.h.p. Furthermore, since $g \in \mathrm{Im}(F)$, zero entries on diagonal $\Phi$ (or $\Phi^\dagger$) correspond to zero entries of $g$, thus, we also have  
\begin{equation}
    g_{\mathrm{RF}} = \arg\min_{g \in \mathcal{F}_{\mathrm{int}} , \hat{g} \in \mathrm{Im}(F)} \|F^\dagger\hat{g}\|_2^2 \xrightarrow{N \to \infty (a.s.)} \arg\min_{g \in \mathcal{F}_{\mathrm{int}} , \hat{g} \in \mathrm{Im}(F)} g^T\Phi^\dagger g.
\end{equation}

Also note that 
\begin{equation}
    g^T\Phi^\dagger g = \sum_{T\subseteq [d]: \E_{w,b}[\hat\phi(T)^2] \neq 0} \hat{g}(T)^2\E_{w,b}[\hat\phi(T)^2]^{-1}. \label{eq:new-norm}
\end{equation}
We now focus on interpolators minimizing the quantity introduced in \cref{eq:new-norm}. 
First, note that these interpolators do not have any monomials having a variable other than latent variables $\{x_{i_1}, \ldots, x_{i_p}\}$, i.e., all of the learned monomials would be in $2^{\{x_{i_1}, \ldots, x_{i_p}\}}$. To see this, consider an interpolating solution $g$ containing such monomials, $T_1, \ldots, T_m \not\subseteq I_P= \{i_1, \ldots, i_P\}$. For simplicity, we use the notation $x=(x_{I_P}, x_{[d]\setminus I_P})$ to differentiate between latent variables and the rest of the bits.  Now define 
\begin{equation}
    g_I((x_{I_P}, x_{[d]\setminus I_P})) \coloneqq 2^{-(d-P)}\sum_{x_{[d]\setminus I_P} \in \{\pm 1\}^{d-P}} g(x). 
\end{equation}
Note that $g_I((x_{I_P}, x_{[d]\setminus I_P}))$ is independent of $x_{[d]\setminus I_P}$. Therefore $g_I(x) = g(x)$ for all the training samples. Moreover, note that 
\begin{equation}
    \hat{g}_I(T) = \begin{cases}
    \hat{g}(T) & T \subseteq I_P \\
    0 & o.w.
    \end{cases}, 
\end{equation}
which shows that $g_I\Phi^\dagger g_I < g\Phi^\dagger g $ unless $g=g_I$. Note that if $\E_{w,b}[\hat\phi(T)^2] = 0$ for some $T$, then $\hat{g}(T) =0$, since we are considering the solution learned by the RF model and $\hat{g} \in \mathrm{Im}(F)$. 
In sum, the function learned by the RF model converges to an interpolator that only contains the latent coordinates, as $N \to \infty$ w.h.p.
Note that $\mathbb{E}[\hat\phi_{w,b}(T)^2]$ is the same for all $T$ of the same size due to symmetry, we denote this shared quantity by $\hat\phi_{|T|, d}$.
Now, we revisit \cref{eq:new-norm}, for the functions defined on latent coordinates $I_P$, we have
\begin{align}
    g^T\Phi^\dagger g &= \sum_{T\subseteq [d]: \E_{w,b}[\hat\phi(T)^2] \neq 0} \hat{g}(T)^2\E_{w,b}[\hat\phi_{w,b}(T)^2]^{-1} \\
    &= \sum_{T \subseteq I_P} \hat{g}(T)^2 \mathbb{E}_{w,b}[\hat\phi_{w,b}(T)^2]^{-1}= \sum_{i=0}^{P}  \left(\sum_{T  \subseteq I_P: |T|=i} \hat{g}\left(T\right)^2 \right) \hat\phi_{|T|, d}^{-1}. \label{eq:new-norm-final-form}
\end{align}
Note that since $\sigma$ is strongly expressive up to $P$, we have $\hat \phi_{k, d}^{-1} = \Theta(d^k)$.
Putting this along \cref{eq:new-norm-final-form} shows that the solution of the RF model converges to $\mathrm{MinDegInterp} + \epsilon_d$ almost surely as $N\to\infty$, where $\epsilon_d$ is a vanishing function (w.r.t. $d$) on the latent coordinates, which concludes the proof. 
\end{proof}
Now we can easily prove Corollary \ref{cor:coef-bound}. 
\begin{proof}\textit{(Corollary \ref{cor:coef-bound})}
    Consider the random features model $f_{\mathrm{RF}}(x;a) = \frac{1}{\sqrt{N}}\sum_{i=1}^N a_i\phi_i(x)$. Similar to the proof of the main theorem, we can represent the function given by the random features model in the Fourier-Walsh basis as a vector of size $2^d$ denoted by $\hat f_{\mathrm{RF}}$. We consider the same $F \in \mathbb{R}^{2^d \times N}$ as above which represented the coefficients of monomials in individual features. As a result, $\hat f_{\mathrm{RF}} = Fa$. Note that 
    property (A1) ensures that $F^TF \rightarrow \Phi$ due to the central limit theorem as $N \to \infty$.
    With the same arguments as the main proof, we have
    \begin{equation*}
        \hat f_{\mathrm{RF}} = Fa \Longrightarrow \|a\|^2 \geq \|F^\dagger \hat f_\mathrm{RF}\|^2 = \hat f_\mathrm{RF}^T (F^\dagger)^TF^\dagger \hat f_\mathrm{RF} \xrightarrow{N \to \infty} \hat f_\mathrm{RF}^T \Phi^\dagger \hat f_\mathrm{RF}.
    \end{equation*}
    Note that $\Phi, \Phi^\dagger$ are both diagonal matrices. Now consider a monomial $\chi_S$ of size $|S| = O_d(1)$. Assuming that this monomial corresponds with index $s$ of $\hat f_\mathrm{RF}$, we have
    \begin{equation*}
        \|a\|^2 \geq \hat f_{\mathrm{RF}, s}^2 \Phi^\dagger_s \Longrightarrow \hat f_{\mathrm{RF}, s}^2 \leq \|a\|^2 \Phi_s = O(\frac{\|a\|^2}{d^{|S|}}),
    \end{equation*}
    which concludes the proof. Note that we used the fact that if $\Phi_s = \Phi_s^\dagger = 0$, this monomial cannot be generated and $f_{\mathrm{RF}, s}=0$. Also, we used the upper bound $\Phi_s = \E_{w,b}[\hat \phi (S)^2] = O(d^{-|S|})$ (part of property A3) which is a consequence of property (A1).
\end{proof}
\subsubsection{RF Model with ReLU Activation}\label{sec:rf-relu-poly-act}
In this part, we study the random features model equipped with the ReLU activation function. Here, we mostly rely on the results of \citet{AbbeINAL}. First, following  proposition B.1 of \citet{AbbeINAL}, we note that for every odd $k \geq 3$, the coefficient of $k$-th Hermite polynomial in the Hermite expansion of $\mathrm{ReLU}$ is zero. On the other hand, this coefficient is non-zero for $k=1$ and any even $k$. Consequently, following Lemma A.2 of \citet{AbbeINAL}, for monomials $\chi_T$ and $|T|\leq P = O_d(1)$ we have
\begin{equation}
    \E_{w,b}[\E_x[ \mathrm{ReLU}(\langle w, x \rangle + b) \chi_T(x)]^2] = \E_{w,b}[\hat \phi_{w,b,\mathrm{ReLU}}(T)] = \begin{cases}
        \Omega(d^{-|T|}) & |T|=1~or~even \\
\Omega(d^{-(|T|+1)}) & o.w. 
    \end{cases}, \label{eq:relu} 
\end{equation}
where $\hat \phi_{w,b,\mathrm{ReLU}}(T)$ is the coefficient of monomial $T$ in random feature created by the weights and bias $w, b$ and the ReLU activation. Informally, \cref{eq:relu} indicates that odd monomials with degrees larger than one are not strongly expressed in the random features when ReLU is used as the activation function. Nonetheless, note that as in Lemma~\ref{lemma:random-low-deg}, we can still deduce that 
$\E_{w,b}[\hat \phi_{w,b,\mathrm{ReLU}}(T)] = O(d^{-|T|})$ for $|T| \leq P=O_d(1)$. This upper bound along with the lower bounds obtained in \cref{eq:relu} and the minimization problem of \cref{eq:new-norm-final-form} indicate that the random features model with ReLU activation would replace degree $2$ or $2k+1$ monomials with lower degree monomials if possible. However, it might not replace degree $2k+2$ monomials with degree $2k+1$ monomials for $k \geq 1$.
We further illustrate this with an experiment. 


We consider learning $f(x_0, \ldots, x_{14}) = x_0x_1x_2+x_0x_3x_4x_5$ under the unseen domain $\mathcal{U} = \{x \in \{\pm 1\}^{14}|x_0 = -1\}$. Note that in this case, the min-degree interpolator is $x_1x_2+x_3x_4x_5$. However, for the ReLU activation, we know that $x_3x_4x_5$ would not necessarily be preferred to $x_0x_3x_4x_5$ since the $\deg(x_3x_4x_5)=3$ is odd. In \cref{fig:rf-comparison}, we compare the solution learned by the RF model with ReLU and polynomial activation (here $(1+x)^6$). It can be seen that the polynomial activation learns the MD interpolator, whereas the RF with the ReLU activation function only learns the lower-degree monomial for the odd monomial and not for the even one. 
\begin{figure}[H]
     \centering
     \begin{subfigure}[b]{0.4\textwidth}
         \centering
         \includegraphics[width=\textwidth]{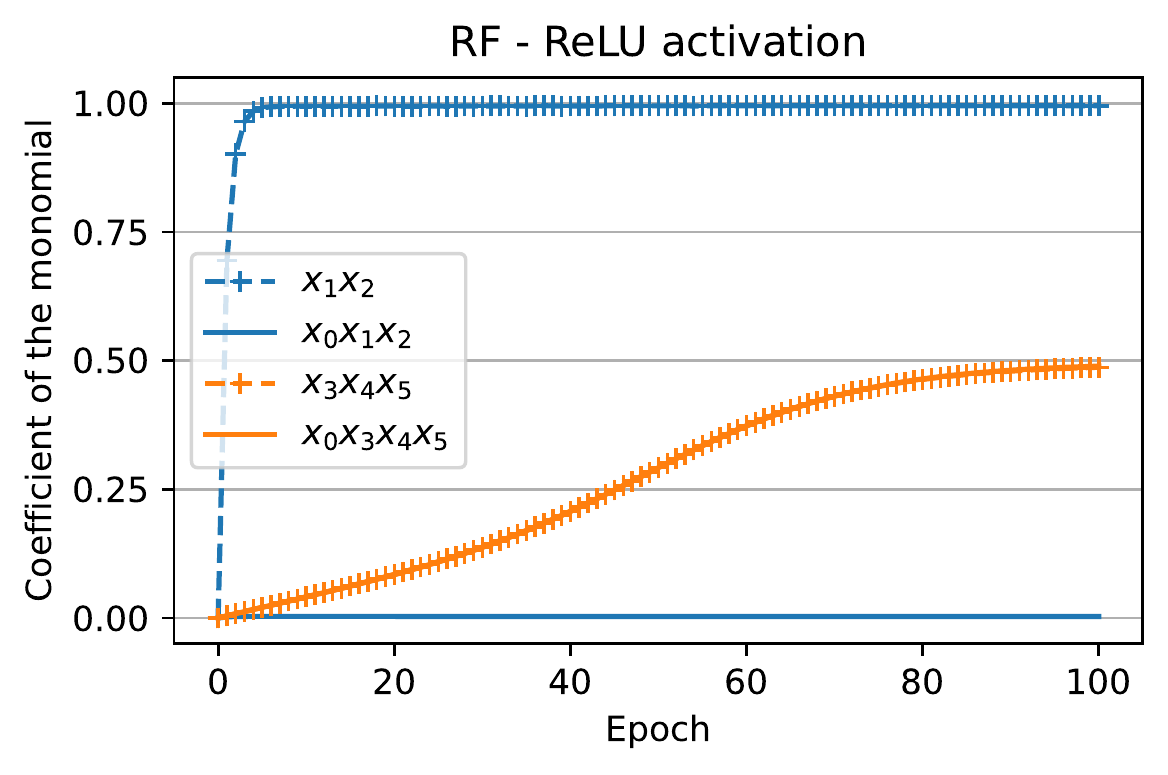}
     \end{subfigure}
     \hfill
     \begin{subfigure}[b]{0.4\textwidth}
         \centering
         \includegraphics[width=\textwidth]{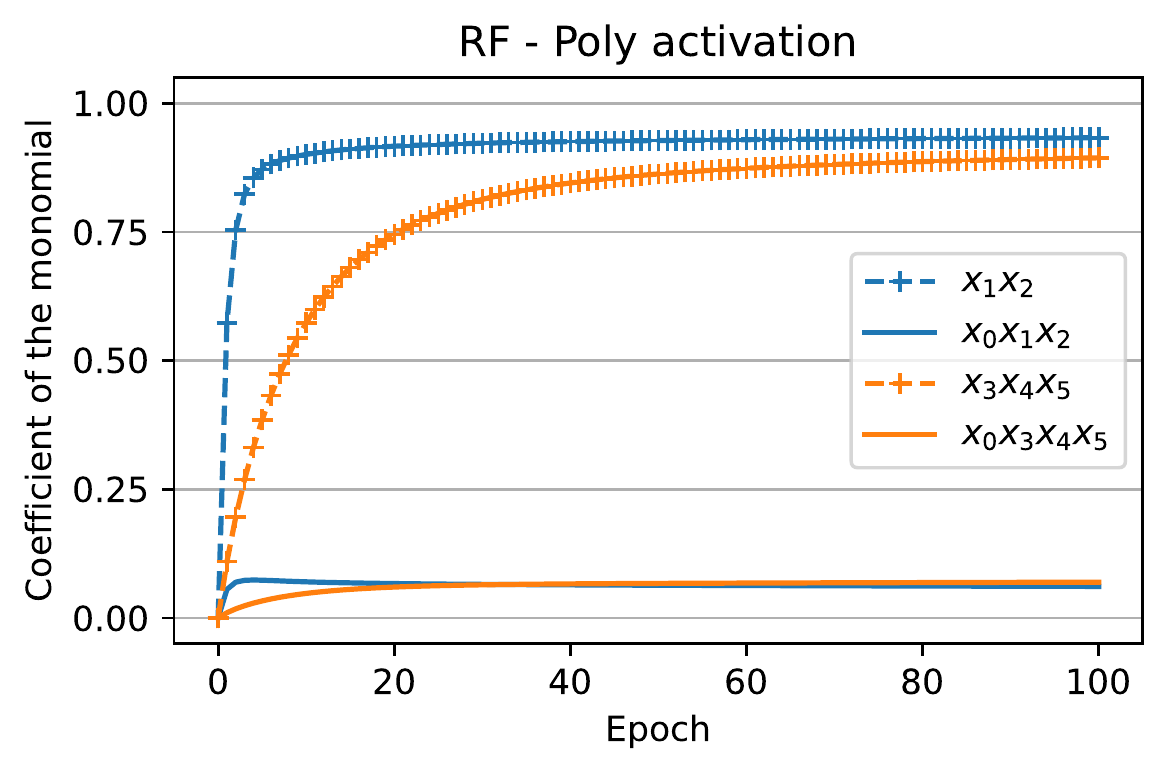}
     \end{subfigure}

        \caption{Target function $f(x_0, \ldots, x_{14}) = x_0x_1x_2+x_0x_3x_4x_5$ being learned by random features models under $\mathcal{U} = \{x_0 = -1\}$. The RF model with strongly expressive activation (here $(1+x)^6$) learns the min-degree interpolator (right), while the min-degree bias of the RF model with ReLU activation depends on the degree of monomials being even or odd (left). More precisely, the RF model does not prefer degree $2k+1$ monomial to degree $2k+2$ monomial for $k \geq 1$. Note that for the RF with ReLU activation (left), the coefficients of $x_3x_4x_5$ and $x_0x_3x_4x_5$ are equal and hence overlap.}
        \label{fig:rf-comparison}
\end{figure}

\subsection{Proof for Diagonal Linear Neural Networks}
Here, we present the proof of Theorem \ref{thm:diagonal}.
\begin{proof}
	We denote parameters at time $t$ by $\theta(t)$. Also, we consider the training under half $\ell_2$ loss, to simply remove the $2$ factor from gradients. Consider the (half) $\ell_2$ loss function for a training sample $x$, we have
	\begin{align}
	L(\theta(t), x, f) &= \frac{1}{2}\left(f_{\mathrm{NN}}(x) - f(x)\right)^2 \\
	&= \frac{1}{2}\left(\left(b - \hat{f}(\emptyset)\right) + \sum_{i=1}^d \left(\prod_{l=1}^{L} w_{i}^{(l)} - \hat{f}(\{i\})\right)x_i\right)^2.
	\end{align}
Moreover, we know every component of the training sample is sampled from $\mathrm{Rad}(\frac{1}{2})$, except the frozen bit which is set to $x_k = 1$. We denote this uniform distribution by $U^{d-1}_{-k}$. Given this, the expected loss of the training set can be calculated as follows
\begin{align}
\E_{U^{d-1}_{-k}}&[L(\theta(t), x, f)] = \frac{1}{2}\E_{U^{d-1}_{-k}}\left[\left(\left(b - \hat{f}(\emptyset)\right) + \sum_{i=1}^d \left(\prod_{l=1}^{L} w_{i}^{(l)} - \hat{f}(\{i\})\right)x_i\right)^2\right] \nonumber\\
&= \frac{1}{2}\E_{U^{d-1}_{-k}}\left[\left(\left((b + \prod_{l=1}^{L} w_{k}^{(l)}) - (\hat{f}(\emptyset) + \hat{f}(\{k\}))\right) + \sum_{i\neq k}^d \left(\prod_{l=1}^{L} w_{i}^{(l)} - \hat{f}(\{i\})\right)x_i\right)^2\right] \nonumber\\
&= \frac{1}{2}\left((b + \prod_{l=1}^{L} w_{k}^{(l)}) - (\hat{f}(\emptyset) + \hat{f}(\{k\}))\right)^2 + \frac{1}{2}\sum_{i\neq k}^d \left(\prod_{l=1}^{L} w_{i}^{(l)} - \hat{f}(\{i\})\right)^2, \label{eq:loss-parseval} 
\end{align}
where we have used Parseval's theorem \citep{o'donnell_2014} to get the last equation. For simplicity, we define $B \coloneqq~\hat{f}(\emptyset) + \hat{f}(\{k\})$ and $B_{\mathrm{NN}} \coloneqq b + \prod_{l=1}^{L} w_{k}^{(l)}$ as the total bias of the target function and the neural network respectively. We know the gradient flow (GF) of the parameters of the neural network is given by
\begin{align}
\dot \theta = -\nabla_\theta \E_{U^{d-1}_{-k}}[L(\theta(t), x, f)].	
\end{align}
Therefore, using (\ref{eq:loss-parseval}), we can derive the gradient flow for each of the parameters as below
\begin{align}
\dot b &= -\nabla_b \E_{U^{d-1}_{-k}}[L(\theta(t), x, f)]	 = -(b + \prod_{l=1}^{L} w_{k}^{(l)}) + (\hat{f}(\emptyset) + \hat{f}(\{k\})) = -(B_{\mathrm{NN}} - B),\label{eq:bias-bias-derivative}\\
\dot w_{k}^{(l)} &= -\nabla_{w_{k}^{(l)}}  \E_{U^{d-1}_{-k}}[L(\theta(t), x, f)]	 = -((b + \prod_{j=1}^{L} w_{k}^{(j)}) + (\hat{f}(\emptyset) + \hat{f}(\{k\})))\prod_{j\neq l}^{L} w_{k}^{(j)} \\
&= -\prod_{j\neq l}^{L} w_{k}^{(j)}(B_{\mathrm{NN}} - B), \nonumber \\
\forall i &\neq k,~~\dot w_{i}^{(l)} = -\nabla_{w_{i}^{(l)}}  \E_{U^{d-1}_{-k}}[L(\theta(t), x, f)]	 = -\left(\prod_{j=1}^{L} w_{i}^{(j)} - \hat{f}(\{i\})\right)\prod_{j\neq l}^{L} w_{i}^{(j)}.
\end{align}
Using the above, we can derive the balancedness property of the neural network, i.e., 
\begin{align}
	\frac{d}{dt}(w_{k}^{(l)})^2 &= 2 w_{k}^{(l)} \dot w_{k}^{(l)} = -2\prod_{j = 1}^{L} w_{k}^{(j)}(B_{\mathrm{NN}} - B) =  2 w_{k}^{(l')} \dot w_{k}^{(l')} = \frac{d}{dt}(w_{k}^{(l')})^2,\\
	\forall i\neq k, ~~ \frac{d}{dt}(w_{i}^{(l)})^2 &= 2 w_{i}^{(l)} \dot w_{i}^{(l)} = -2\left(\prod_{j=1}^{L} w_{i}^{(j)} - \hat{f}(\{i\})\right)\prod_{j=1}^{L} w_{i}^{(j)} =  2 w_{i}^{(l')} \dot w_{i}^{(l')} = \frac{d}{dt}(w_{i}^{(l')})^2.
\end{align}
Therefore, $\forall i~~ (w_{i}^{(l)})^2 - (w_{i}^{(l')})^2$ is constant during training.
Using this property, we can show that most of the model's parameters are always bounded away from $0$ during training. To see this, fix an index $i \in [d]$. Let $j_{i}^* = \mathrm{argmin}_{j \in [L]} |w_{i}^{(j)}(0)|$. Furthermore, define
\begin{equation}
	c_i \coloneqq \min_{j\neq j_{i}^* \in [L]} (w_{i}^{(j)}(0))^2 - (w_{i}^{(j_{i}^*)}(0))^2 \geq 0.
\end{equation}
Since the model parameters are initialized randomly using the uniform distribution, we can say that $c_i > 0$ with probability 1.
Now, due to the balancedness property, we know that 
\begin{align}
	\forall j\neq j_{i}^*, ~~ (w_{i}^{(j)}(t))^2 - (w_{i}^{(j_{i}^*)}(t))^2 &= (w_{i}^{(j)}(0))^2 - (w_{i}^{(j_{i}^*)}(0))^2 \geq c_i \Longrightarrow \nonumber\\
 (w_{i}^{(j)}(t))^2 &\geq c_i + (w_{i}^{(j_{i}^*)}(t))^2 \geq c_i. \label{eq:bounded-ness}
\end{align}
Now we are able to show the convergence of the model. To begin with, note that
\begin{align}
	\frac{d}{dt}(\prod_{l=1}^{L} w_{k}^{(l)}) &= \sum_{l=1}^{L} \dot w_{k}^{(l)} \prod_{j\neq l}w_{k}^{(j)} = - \left(\sum_{l=1}^{L} (\prod_{j\neq l}^{L} w_{k}^{(j)})^2 \right)(B_{\mathrm{NN}} - B),  \label{eq:prod-bias-derivative}\\
	\forall i\neq k, ~~\frac{d}{dt}(\prod_{l=1}^{L} w_{i}^{(l)}) &= \sum_{l=1}^{L} \dot w_{i}^{(l)} \prod_{j\neq l}w_{i}^{(j)} = - \left(\sum_{l=1}^{L} (\prod_{j\neq l}^{L} w_{i}^{(j)})^2 \right)\left(\prod_{l=1}^{L} w_{i}^{(l)} - \hat{f}(\{i\})\right).
\end{align}
Now, first, we consider an index $i \neq k$. We have
\begin{align}
	\frac{d}{dt} \left(\prod_{l=1}^{L} w_{i}^{(l)} - \hat{f}(\{i\})\right)^2 &= 2 \left(\prod_{l=1}^{L} w_{i}^{(l)} - \hat{f}(\{i\})\right)\frac{d}{dt} \left(\prod_{l=1}^{L} w_{i}^{(l)} - \hat{f}(\{i\})\right) \nonumber\\
	&= - 2\left(\sum_{l=1}^{L} (\prod_{j\neq l}^{L} w_{i}^{(j)})^2 \right)\left(\prod_{l=1}^{L} w_{i}^{(l)} - \hat{f}(\{i\})\right)^2.
\end{align}
Now using (\ref{eq:bounded-ness}), we can say
\begin{equation}
	\left(\sum_{l=1}^{L} (\prod_{j\neq l}^{L} w_{i}^{(j)})^2 \right) \geq (\prod_{j\neq j^{*}_{i}}^{L} w_{i}^{(j)})^2 \geq c_i^{L-1} > 0.
\end{equation}
Therefore, we have
\begin{align*}
	\frac{d}{dt} \left(\prod_{l=1}^{L} w_{i}^{(l)} - \hat{f}(\{i\})\right)^2 &= - 2\left(\sum_{l=1}^{L} (\prod_{j\neq l}^{L} w_{i}^{(j)})^2 \right)\left(\prod_{l=1}^{L} w_{i}^{(l)} - \hat{f}(\{i\})\right)^2 \\
 &\leq -2c_i^{L-1}\left(\prod_{l=1}^{L} w_{i}^{(l)} - \hat{f}(\{i\})\right)^2,
\end{align*}
which shows
\begin{equation}
	\left(\prod_{l=1}^{L} w_{i}^{(l)}(t) - \hat{f}(\{i\})\right)^2 \leq \left(\prod_{l=1}^{L} w_{i}^{(l)}(0) - \hat{f}(\{i\})\right)^2 e^{-2c_i^{L-1}t}; \label{eq:converge-w}
\end{equation}
in other words, $\left(\prod_{l=1}^{L} w_{i}^{(l)} - \hat{f}(\{i\})\right)^2$ goes to $0$ exponentially fast in time, $t$. Finally, we make the same analysis for $(B_{\mathrm{NN}}-B)^2$. We have
\begin{align*}
	\frac{d}{dt}(B_{\mathrm{NN}}-B)^2 &= \frac{d}{dt}\left((b + \prod_{l=1}^{L} w_{k}^{(l)}) - B)\right)^2 = 2\left((b + \prod_{l=1}^{L} w_{k}^{(l)}) - B)\right)
\frac{d}{dt}\left(b + \prod_{l=1}^{L} w_{k}^{(l)}\right) \nonumber \\
&= 2\left((b + \prod_{l=1}^{L} w_{k}^{(l)}) - B)\right)
\left(-(B_{\mathrm{NN}}-B) - \left(\sum_{l=1}^{L} (\prod_{j\neq l}^{L} w_{k}^{(j)})^2 \right)(B_{\mathrm{NN}} - B)\right) \nonumber\\
&= -2(B_{\mathrm{NN}}-B)^2\left(1 + \left(\sum_{l=1}^{L} (\prod_{j\neq l}^{L} w_{k}^{(j)})^2 \right)\right) \leq -2(B_{\mathrm{NN}}-B)^2. 
\end{align*}
The last equation shows that 
\begin{equation}
	(B_{\mathrm{NN}}(t)-B)^2 \leq (B_{\mathrm{NN}}(0)-B)^2 e^{-2t}, \label{eq:converge-b} 
\end{equation}
i.e., $(B_{\mathrm{NN}}(t)-B)^2$ converges to $0$ exponentially fast in $t$ as well. 
Equations (\ref{eq:loss-parseval}), (\ref{eq:converge-w}), and (\ref{eq:converge-b}) show that
\begin{equation}
	L(\theta(t), x, f) \leq L(\theta(0), x, f)e^{-ct}, 
\end{equation}
where $c = 2\min(1, \min(\{c_i\}_{i\neq k})^{L-1})$; hence, loss converges to zero exponentially fast in time (however, it is still initialization-dependent).

As shown in (\ref{eq:converge-b}), the bias of neural network, converges like $(B_{\mathrm{NN}}(t)-B)^2 \leq (B_{\mathrm{NN}}(0)-B)^2 e^{-2t}$. 
We denote $R \coloneqq|\hat{f}(\emptyset) + \hat{f}(\{k\})| + 1 > |B_{\mathrm{NN}}(0)-B|$. Now notice that if $t \geq T_\epsilon \coloneqq \log \frac{8R}{\epsilon}$, then we have
\begin{equation}
	(B_{\mathrm{NN}}(t)-B)^2 \leq (B_{\mathrm{NN}}(0)-B)^2 e^{-2t} \leq R^2 e^{-\log \frac{64R^2}{\epsilon^2}} = \frac{\epsilon^2}{64}.
\end{equation}
We now show the growth of $\prod_{l=1}^{L} w_{k}^{(l)}$ is comparatively slower, and therefore, it will not capture the bias fast enough and will remain small during the entire training process. More precisely, we first bound $\prod_{l=1}^{L} w_{k}^{(l)}$ at the beginning of training ($t \leq T_{\epsilon}$). We define $m = \mathrm{argmax}_{l \in [L]} |w_k^{(l)}(0)|$. Again, by balancedness property, we know it will remain the largest during training, i.e.,
\begin{equation}
	|w_k^{(i)}| = \sqrt{(w_k^{(i)})^2} \leq \sqrt{(w_k^{(m)})^2} \leq |w_k^{(m)}|.
\end{equation}
Now note that
\begin{align}
	\frac{d}{dt}(w_k^{(m)})^2 &= -2\prod_{j = 1}^{L} w_{k}^{(j)}(B_{\mathrm{NN}}(t) - B) \leq 2|\prod_{j = 1}^{L} w_{k}^{(j)}(B_{\mathrm{NN}}(t) - B)| \nonumber\\
	&\leq 2|w_k^{(m)}|^L|B_{\mathrm{NN}}(t) - B| \nonumber\\
	&\leq 2((w_k^{(m)})^2)^{\frac{L}{2}}|B_{\mathrm{NN}}(0) - B| = 2((w_k^{(m)})^2)^{\frac{L}{2}} R, 
\end{align}
where in the last line we used the fact that $(B_{\mathrm{NN}}(t) - B)^2$ is decreasing. Now, we provide a bound for $|w_k^{(m)}|$. First, we consider the case that $L=2$. In this case, we have
\begin{equation}
	\frac{d}{dt}(w_k^{(m)})^2 \leq 2((w_k^{(m)})^2)R \Longrightarrow (w_k^{(m)}(t))^2 \leq (w_k^{(m)}(0))^2 e^{2Rt}, 
\end{equation}
where we used Gronwall's lemma in the last equation. It also shows 
\begin{equation}
	\prod_{l=1}^{L} w_{k}^{(l)}(t) \leq w_{k}^{(m)}(t)^L = w_{k}^{(m)}(t)^2 \leq (w_k^{(m)}(0))^2 e^{2Rt}.
\end{equation}
Now, we consider the case that $L > 2$. In this case, we also have (this could be considered as an extension of Gronwall's lemma, note that $w_k^{(m)} > 0$)
\begin{align}
	\frac{d}{dt}(w_k^{(m)})^2 &\leq 2((w_k^{(m)})^2)^{\frac{L}{2}}R \Longrightarrow \\
	\frac{d}{dt}((w_k^{(m)})^2)^{1-\frac{L}{2}} &= -(\frac{L}{2}-1)((w_k^{(m)})^2)^{-\frac{L}{2}}\frac{d}{dt}(w_k^{(m)})^2 \geq -(L-2) 
	R, 
\end{align}
using the above we have
\begin{align} 
(w_k^{(m)}(t)^2)^{1-\frac{L}{2}} &- (w_k^{(m)}(0)^2)^{1-\frac{L}{2}}	=\int_{0}^{t} \frac{d}{dt}(w_k^{(m)}(t)^2)^{1-\frac{L}{2}} \geq -(L-2)Rt \Longrightarrow \\
w_k^{(m)}(t)^2 &\leq \frac{1}{(|w_k^{(m)}(0)|^{2-L}-(L-2)Rt)^{\frac{1}{\frac{L}{2} - 1}}} ~~~~~~~t < \frac{|w_k^{(m)}(0)|^{2-L}}{(L-2)R}, 
\end{align}
hence, we have
\begin{equation}
\prod_{l=1}^{L} w_{k}^{(l)}(t) \leq (w_k^{(m)}(t)^2)^{\frac{L}{2}} \leq \frac{1}{(|w_k^{(m)}(0)|^{2-L}-(L-2)Rt)^{\frac{L}{L - 2}}} ~~~~~~~t < \frac{|w_k^{(m)}(0)|^{2-L}}{(L-2)R}. \label{eq:bound-general}
\end{equation}
Now we consider each of these bounds at $t = T_\epsilon$. First, for $L=2$, we have
\begin{equation}
	\prod_{l=1}^{L} w_{k}^{(l)}(t) \leq (w_k^{(m)}(0))^2 e^{2Rt} = (w_k^{(m)}(0))^2e^{2RT_\epsilon},
\end{equation}
which is upper bounded by $\frac{\epsilon}{8}$ if $(w_k^{(m)}(0))^2 \leq \alpha^2 \leq \alpha_{max}^2 = \frac{\epsilon}{8e^{2RT_\epsilon}}$.
Now, we consider the bound for deeper networks, $L > 2$, at time $t = T_\epsilon$. We want to bound $\prod_{l=1}^{L} w_{k}^{(l)}(t)$ by $\frac{\epsilon}{8}$. Using (\ref{eq:bound-general}) this will happen if we have
\begin{align}
	\frac{1}{(|w_k^{(m)}(0)|^{2-L}-(L-2)RT_\epsilon)^{\frac{L}{L - 2}}} \leq \frac{\epsilon}{8} \iff (L-2)RT_\epsilon + (\frac{8}{\epsilon})^{\frac{L-2}{L}} \leq |w_k^{(m)}(0)|^{2-L},
\end{align}
which will happen if $|w_k^{(m)}(0)| \leq \alpha_{\max} \coloneqq ((L-2)RT_\epsilon + (\frac{8}{\epsilon})^{\frac{L-2}{L}})^{\frac{1}{2-L}}$.

So we proved for small enough initializations, there exists a time, $T_\epsilon$, where
\begin{align}
|b(T_\epsilon) + \prod_{l=1}^{L} w_{k}^{(l)}(T_\epsilon) - B| &\leq \frac{\epsilon}{8}, \\
|\prod_{l=1}^{L} w_{k}^{(l)}(T_\epsilon)|  &\leq \frac{\epsilon}{8}, \\
|b(T_\epsilon) - B| &\leq |b(T_\epsilon) + \prod_{l=1}^{L} w_{k}^{(l)}(T_\epsilon) - B| + |\prod_{l=1}^{L} w_{k}^{(l)}(T_\epsilon)|  \leq \frac{2\epsilon}{8}.
\end{align}
We now show that this picture will not change much during the rest of the training process. To see this, note that $|B_{\mathrm{NN}}(t) - B|$ is always decreasing over time and is continuous. Therefore, $B_{\mathrm{NN}}(t) - B$ cannot change the sign (since changing the sign means that the variable had become equal to $0$ at some time, which is contrary to the fact that its absolute value is decreasing). Considering equations (\ref{eq:bias-bias-derivative}) and (\ref{eq:prod-bias-derivative}) we can conclude that both $b(t) - B$ and $\prod_{l=1}^{L} w_{k}^{(l)}(t)$ are either increasing or decreasing during the whole training. 
First, assume both of them are increasing. For $t > T_\epsilon$, we have
\begin{align}
	|\prod_{l=1}^{L} w_{k}^{(l)}(t) + b(t) - B| &\leq |\prod_{l=1}^{L} w_{k}^{(l)}(T_\epsilon) + b(T_\epsilon) - B| \leq \frac{\epsilon}{8} \Longrightarrow \\
	\frac{-\epsilon}{8} \leq \prod_{l=1}^{L} w_{k}^{(l)}(T_\epsilon) \leq \prod_{l=1}^{L} w_{k}^{(l)}(t) &\leq \frac{\epsilon}{8} - (b(t) - B) \leq \frac{\epsilon}{8} - (b(T_\epsilon) - B) \leq \frac{3\epsilon}{8} \Longrightarrow \\
 |\prod_{l=1}^{L} w_{k}^{(l)}(t)| &\leq  \frac{3\epsilon}{8},	\\
	|b(t) - B| &\leq |\prod_{l=1}^{L} w_{k}^{(l)}(t) + b(t) - B| + |\prod_{l=1}^{L} w_{k}^{(l)}(t)| \leq \frac{4\epsilon}{8}.
\end{align}
The case for both functions being decreasing is also similar. This shows that $f_{\mathrm{NN}}(\{k\}) < \epsilon$ during the entire training. Now we can study GOTU loss for $t \geq T_\epsilon$ using Parseval's theorem as follows:
\begin{align}
GOTU(f, f_{\mathrm{NN}}, \{x_k = - 1\}) 
	&=((b- \prod_{l=1}^{L} w_{k}^{(l)}) - (\hat{f}(\emptyset) - \hat{f}(\{k\})))^2 + \sum_{i\neq k}^d (\prod_{l=1}^{L} w_{i}^{(l)} - \hat{f}(\{i\}))^2 \\
	&= ((b-B) - \prod_{l=1}^{L} w_{k}^{(l)} + 2\hat{f}(\{k\}))^2+ O(e^{-ct})  \\ 
	&= 4\hat{f}(\{k\})^2 + O_t(e^{-ct}) + O_\epsilon(\epsilon), 
\end{align}
which proves the theorem. Note that if we consider half $\ell_2$ loss for the entire population $\Omega$ the loss becomes $\hat{f}(\{k\})^2 + O_t(e^{-ct}) + O_\epsilon(\epsilon)$.
\end{proof}
\begin{remark}[Initialization of bias variable]
	Note that the analysis is independent of the initialization of the bias variable (as long as it satisfies a simple bound such as $|b(0)| \leq \frac{1}{2}$).
\end{remark}
\begin{remark}[Effect of depth] \label{remark:1}
	 The current theorem proves that the low-degree solution is learned when the initialization scale is small enough. To see the effect of depth, we prove that $\alpha_{\max}$ found in this proof is increasing by depth, $L$. In other words, if we have deeper networks, we can use larger initializations and still have the generalization error close to the Boolean influence. 
\end{remark}
\begin{proof}\textit{(Remark \ref{remark:1})}\label{proof:remark-diag-depth}
Consider $L \geq 3$. We know that 
$\alpha_{\max} \coloneqq ((L-2)RT_\epsilon + (\frac{8}{\epsilon})^{\frac{L-2}{L}})^{\frac{1}{2-L}}$. For simplicity define 
\begin{align}
	P &\coloneqq RT_\epsilon, \\
	Q &\coloneqq \frac{8}{\epsilon} > e^3 ~(\mathrm{we~assume~this}), \\
	g(x) &\coloneqq (xP + Q^{\frac{x}{x+2}})^{\frac{-1}{x}}.
\end{align}
Now note that $\alpha_{\max} = g(L-2)$. Therefore, we need to prove $g(x)$ is increasing for $x \geq 1$. To see this, note that
\begin{align}
	\frac{d}{dx} \frac{x}{x+2} &= \frac{2}{(x+2)^2}, \\
	\frac{d}{dx} Q^\frac{x}{x+2} &= Q^\frac{x}{x+2} (\ln Q) \frac{2}{(x+2)^2}, \\
	\frac{d}{dx} \ln (xP + Q^\frac{x}{x+2}) &= \frac{P + Q^\frac{x}{x+2} (\ln Q) \frac{2}{(x+2)^2}}{xP + Q^\frac{x}{x+2}}, \\
	\frac{d}{dx} \frac{-\ln (xP + Q^\frac{x}{x+2})}{x} &= \frac{-x\frac{P + Q^\frac{x}{x+2} (\ln Q) \frac{2}{(x+2)^2}}{xP + Q^\frac{x}{x+2}}+ \ln (xP + Q^\frac{x}{x+2})}{x^2}. 
\end{align}
Therefore, $\frac{d}{dx} \frac{-\ln (xP + Q^\frac{x}{x+2})}{x} \geq 0$, iff
\begin{align*}
	\ln (xP + Q^\frac{x}{x+2}) &\geq x\frac{P + Q^\frac{x}{x+2} (\ln Q) \frac{2}{(x+2)^2}}{xP + Q^\frac{x}{x+2}} \iff\\
(xP + Q^\frac{x}{x+2})\ln (xP + Q^\frac{x}{x+2}) &\geq xP + Q^\frac{x}{x+2} (\ln Q) \frac{2x}{(x+2)^2},
\end{align*}
which holds because
\begin{align*}
	xP\ln (xP + Q^\frac{x}{x+2}) &\geq xP \ln (Q^\frac{1}{3}) \geq xP, \\
	Q^\frac{x}{x+2}\ln (xP + Q^\frac{x}{x+2})&\geq Q^\frac{x}{x+2}\ln (Q)\frac{x}{x+2} \geq Q^\frac{x}{x+2} (\ln Q) \frac{2x}{(x+2)^2}.
\end{align*}
Therefore, $\exp(\frac{-\ln (xP + Q^\frac{x}{x+2})}{x}) = g(x)$ is increasing. Finally, we have to compare $\alpha_{\max}$ for depths $2$ and $3$. Note that for depth two $\alpha_{\max}(2) = \sqrt{\frac{\epsilon}{8}}e^{-RT_\epsilon} = \sqrt{\frac{1}{Q}}e^{-P}$ while for depth three, we have
$\alpha_{\max}(3) = \frac{1}{P + \sqrt[3]{Q}}$. Therefore, we have
\begin{equation*}
	\frac{1}{\alpha_{\max}(2)} = e^P \sqrt{Q} \geq (P + 1)\sqrt{Q} \geq P + \sqrt[3]{Q} = \frac{1}{\alpha_{\max}(3)},
\end{equation*}
which gives the desired result. 
\end{proof}
\subsection{Proof for 2-Layer Fully Connected Linear Network}
Here, we provide the proof for Theorem \ref{thm:2-linear} and Remark \ref{remark:equivalence-linear}.
\begin{proof}(\textit{Theorem \ref{thm:2-linear}})
    We recall that the loss function is 
    \begin{align*}
        L(t) = \frac{1}{2}\left(\|W_1w_2 - w^* \|^2 + (w_2^Tb_1 + b_2 - b^*)^2\right).
    \end{align*}
    We define $r_{w} \coloneqq W_1w_2 - w^* $ and $r_{b} \coloneqq w_2^Tb_1 + b_2 - b^*$ which in turn assess the reconstruction of the weights and the bias. Using the gradient flow, each of the parameters follows the update rule presented below. (We often use the dot notation for derivatives with respect to time.)
    \begin{align*}
        \dot w_2 &= \frac{dw_2}{dt} = -\nabla_{w_2}L = -W_1^Tr_w - b_1r_b, \\
        \dot W_1 &= \frac{dW_1}{dt} = -\nabla_{W_1}L = -r_ww_2^T, \\
        \dot b_2 &= \frac{db_2}{dt} = -\nabla_{b_2}L = -r_b, \\
        \dot b_1 &= \frac{db_1}{dt} = -\gamma\nabla_{b_1}L = -\gamma r_bw_2.
    \end{align*}
    First note that 
    \begin{align}
        \frac{dL}{dt} = \nabla_{\theta}L^T \frac{d\theta}{dt} = - \|W_1^Tr_w + b_1r_b\|^2 - \|r_w\|^2\|w_2\|^2 - r_b^2 -\gamma r_b^2 \|w_2\|^2.
    \end{align}
    Thus, the loss function is decreasing and hence
    \begin{equation}
        \|r_w(t)\|^2, r_b(t)^2 \leq L(t) \leq L(0) = \theta_\alpha(1).
    \end{equation}
    Further, we can check that 
    \begin{align*}
        \frac{d}{dt}(w_2w_2^T) = \dot w_2 w_2^T+w_2 \dot w_2^T &= -W_1^Tr_ww_2^T - b_1w_2^Tr_b - w_2r_w^TW_1 - r_bw_2b_1^T  \\
        &= (-W_1^Tr_ww_2^T  - w_2r_w^TW_1) + (-r_bw_2b_1^T - b_1w_2r_b) \\
        &= (W_1^T\dot W_1 + \dot W_1^TW_1) + \gamma^{-1}(\dot b_1 b_1^T + b_1 \dot b_1^T) \\
        &= \frac{d}{dt}(W_1^TW_1) + \gamma^{-1}\frac{d}{dt}(b_1b_1^T).
    \end{align*}

As a result, we get the following conservation rule
\begin{equation}
    w_2w_2^T(t) - W_1^TW_1(t) - \gamma^{-1}b_1b_1^T(t) = w_2w_2^T(0) - W_1^TW_1(0) - \gamma^{-1}b_1b_1^T(0),
\end{equation}
taking traces from both sides, we have
\begin{equation}
    \|w_2(t)\|^2 - \|W_1(t)\|^2_F - \gamma^{-1}\|b_1(t)\|^2 = \|w_2(0)\|^2 - \|W_1(0)\|^2_F - \gamma^{-1}\|b_1(0)\|^2.
\end{equation}
Now we are ready to move to phase 1 of the training. 
\\\textit{Phase 1.} There exists vanishing $\alpha_1 = o_\alpha(1)$ such that when we start training we reach a point such that $\|b_1\|, \|W_1\|_F, \|w_2\|, |r_b| \leq \alpha_1$. In other words, the bias of the neural network has been learned and all parameters other than the last layer's bias have stayed small.

First, we bound the growth of $\|w_2\|, \|W_1\|_F, \|b_1\|$. Let $m=\max\{\|w_2\|^2, \|W_1\|_F^2, \|b_1\|^2\}$. Note that 
\begin{align*}
    \frac{d\|w_2\|^2}{dt} &= -2w_2^TW_1^Tr_w-w_2^Tb_1r_b \leq 4m\sqrt{L(0)}, \\
    \frac{d\|W_1\|^2_F}{dt} &= -2\mathrm{tr}(W_1^Tr_ww_2^T) \leq 2m\sqrt{L(0)}, \\
    \frac{d\|b_1\|^2}{dt} &= -2\gamma b_1^T r_b w_2 \leq 2\gamma \sqrt{L(0)}.
\end{align*}
As a result, for $c_1 \coloneqq \max\{4, 2\gamma\}\sqrt{L(0)}$, we have $ \frac{d\|w_2\|^2}{dt}, \frac{d\|W_1\|^2_F}{dt}, \frac{d\|b_1\|^2}{dt} \leq c_1 m$. Thus
\begin{equation}
    m(t) \leq m(0) + \int_{0}^{t} c_1 m(t) dt \Longrightarrow m(t) \leq m(0) e^{c_1t}, \label{eq:bound-phase1-m}
\end{equation}
where we used Grönwall's lemma to bound $m(t)$. Also note that at the initialization, $m(0) = O(\alpha^2)$.

Now we focus on the reduction rate of $r_b$. Note that 
\begin{align}
    \dot r_b &= \frac{d}{dt}(w_2^Tb_1 + b_2 - b^*) = -r_w^TW_1b_1 - r_b \|b_1\|^2 - r_b = - r_b(1 + \|b_1\|^2) - r_w^TW_1b_1, \label{eq:update_rb}\\
    \dot r_b^2 &= -2r_b^2(1 + \|b_1\|^2) - 2r_br_w^TW_1b_1 
\end{align}
Define $\beta_1=({2\alpha^2 \max\{\|\overline{w_2}\|^2, \|\overline{W_2}\|^2_F, \|\overline{b_1}\|^2\} L(0)^{c_1}})^{1/(c_1+1)}$ and $t_1$ to be the first moment such that $m=\max\{\|w_2\|^2, \|W_1\|_F^2, \|b_1\|^2\} \geq \beta_1$ or $r_b^2 \leq \beta_1$. For $0 \leq t \leq t_1$, we have (for $\alpha$ small enough)
\begin{align*}
     \dot r_b^2 &= -2r_b^2(1 + \|b_1\|^2) - 2r_br_w^TW_1b_1 \leq -r_b^2 \Longrightarrow r_b^2(t) \leq r_b^2(0)e^{-t} \leq L(0)e^{-t},
\end{align*}
where we used Grönwall's lemma again. We can also conclude that
\begin{align*}
    \beta_1 \leq r_b(t_1)^2 \leq L(0)e^{-{t_1}} \Longrightarrow t_1 \leq \log \frac{L(0)}{\beta_1}.
\end{align*}
Now, given Equation \eqref{eq:bound-phase1-m}, we have
\begin{align*}
    m(t_1) \leq m(0)e^{c_1t_1} \leq \alpha^2 \max\{\|\overline{w_2}\|^2, \|\overline{W_2}\|^2_F, \|\overline{b_1}\|^2\} (\frac{L(0)}{\beta_1})^{c_1} = \frac{\beta_1}{2},
\end{align*}
where we used the definition of $\beta_1$. Therefore, $m(t_1) \leq \frac{\beta_1}{2}$, and hence, it is $r_b^2$ that must have become less than or equal to $\beta_1$. Indeed, if we define $\alpha_1 = \sqrt{\beta_1} = o_\alpha(1)$ we can see that at $t_1$ we reach to a point such that $\|b_1\|, \|W_1\|_F, \|w_2\|, |r_b| \leq \alpha_1$.

Now we can analyze phase 2 of the dynamics.

\textit{Phase 2.} Assuming that $\|b_1\|, \|W_1\|_F, \|w_2\|, |r_b| \leq \alpha_1$ for vanishing $\alpha_1$, we can show that there exists vanishing $\alpha_2 = o_{\alpha_1}(1)$ such that we reach to a point satisfying $\|r_w\|, |r_b|, \|b_1\| \leq \alpha_2$. In other words, during training, we reach to point that both the weights and the bias of the target is almost learned by the neural network and first layer's bias $b_1$ is still small. 

In this phase, we are going to use the assumption 
\begin{align*}
    \|w_2(t)\|^2 - \|W_1(t)\|^2_F - \gamma^{-1}\|b_1(t)\|^2 &= \|w_2(0)\|^2 - \|W_1(0)\|^2_F - \gamma^{-1}\|b_1(0)\|^2 \\&= \alpha^2 (\|\overline{w_2}\|^2 - \|\overline{W_1}\|^2_F - \gamma^{-1}\|\overline{b_1}\|^2) > 0.
\end{align*}
Let $t_2$ be the first moment that $\|w_2(t_2)\|^2 = \alpha_1$ (note that $\|w_2(t_1)\|^2 \leq \alpha_1^2$ at the beginning of this phase at $t_1$). Note that if the dynamics never reach this point, then both $\|b_1\|, \|w_2\|$ are bounded and there is nothing to prove. For $t_1 \leq t \leq t_2$, we have that $\|W_1\|_F, \|w_2\| \leq \sqrt{\alpha_1}$ and $\|b_1\| \leq \sqrt{\gamma\alpha_1}$. Thus, considering \cref{eq:update_rb}, $r_b$ also remains bounded by $\sqrt{\gamma L(0)}\alpha_1$. Note that (for small enough $\alpha_1$)
\begin{align*}
    \frac{d\|w_2\|^2}{dt} = -2w_2^TW_1^Tr_w-w_2^Tb_1r_b \leq 4\sqrt{L(0)} \|w_2\|^2 \Rightarrow \\\int_{t_1}^{t_2} \|w_2(t)\|^2 dt \geq \frac{\|w_2(t_2)\|^2- \|w_2(t_1)\|^2}{4\sqrt{L(0)}} \geq \frac{\alpha_1 - \alpha_1^2}{4\sqrt{L(0)}}.
\end{align*}
Also for $t_1 \leq t \leq t_2$, $\|w^*\| - \alpha_1 \leq \|r_w\| = \|W_1w_2 - w^*\| \leq \|w^*\| + \alpha_1$.
Using this simple bound and $\|w_2(t)\|^2 \geq \|W_1(t)\|^2_F + \gamma^{-1}\|b_1(t)\|^2$, we have (for small enough $\alpha_1$)
\begin{align*}
    \frac{d\|r_w\|^2}{dt} = 2r_w^T \dot{(W_1w_2)} &= -2r_w^T (W_1W_1^Tr_w + W_1b_1r_b + r_w w_2^Tw_2)\\
    &= -2\|W_1^Tr_w\|^2 - 2\|r_w\|^2\|w_2\|^2  - 2r_w^TW_1b_1r_b \\
    &\leq -\|r_w\|^2\|w_2\|^2.
\end{align*}
Consequently, assuming that $\alpha_1$ is small enough, we have
\begin{align*}
    \|r_w(t_2)\|^2  &\leq \|r_w(t_1)\|^2+ \int_{t_1}^{t_2} -\|r_w\|^2\|w_2\|^2 dt \\
    &\leq \|r_w(t_1)\|^2 + (\|w^*\| - \alpha_1)^2   \int_{t_1}^{t_2} -\|w_2\|^2 dt \\
    &\leq \|r_w(t_1)\|^2-(\|w^*\| - \alpha_1)^2 \frac{\alpha_1 - \alpha_1^2}{4\sqrt{L(0)}}\\
    &\leq (\|w^*\| + \alpha_1^2)^2-(\|w^*\| - \alpha_1)^2 \frac{\alpha_1 - \alpha_1^2}{4\sqrt{L(0)}} \leq (\|w^*\| + \alpha_1^2)^2 -\|w^*\|^2 \frac{\alpha_1}{8\sqrt{L(0)}}, 
\end{align*}
therefore, one can see that there exists a constant $c_2 > 0$ such that $\|r_w(t_2)\| \leq \|w^*\|-c_2\alpha_1$. From this point, we can analyze the learning dynamics concerning $w^*$.
Similar to the first phase, define $\alpha_2 = (\log \log \frac{1}{\alpha_1})^{-1}$ and $t'_2$ to be the first moment such that $\max\{\sqrt{\gamma}, 1\}\max\{|r_b|, \|b_1\|\} \geq \alpha_2$ or $\|r_w\| \leq \alpha_2$. For $t_2 \leq t \leq t_2'$ and small enough $\alpha_1$, we have
\begin{align*}
    \frac{d\|r_w\|^2}{dt} &= -2\|W_1^Tr_w\|^2 - 2\|r_w\|^2\|w_2\|^2  - 2r_w^TW_1b_1r_b \leq -\|r_w\|^2\|w_2\|^2 \Longrightarrow \\
    \frac{d\|r_w\|}{dt} &\leq -\frac{\|r_w\|\|w_2\|^2}{2} \leq -\frac{\|r_w\|\|W_1w_2\|}{2} \leq -\frac{\|r_w\|(\|w^*\|-\|r_w\|)}{2} \Longrightarrow \\
    &\int_{t_2}^{t'_2} (\frac{1}{\|r_w\|} + \frac{1}{\|w^*\| - \|r_w\|})\frac{d\|r_w\|}{dt} dt \leq \int_{t_2}^{t'_2} \frac{-\|w^*\|}{2} dt \Longrightarrow \\
    &\log \frac{\|r_w(t'_2)\|}{\|w^*\| -\|r_w(t'_2)\|} - \log \frac{\|r_w(t_2)\|}{\|w^*\| -\|r_w(t_2)\|} \leq -(t'_2 - t_2)\frac{\|w^*\|}{2} \Longrightarrow
    \\
    & \frac{\alpha_2}{\|w^*\|}<\frac{\|r_w(t'_2)\|}{\|w^*\| -\|r_w(t'_2)\|} \leq  \frac{\|r_w(t_2)\|}{\|w^*\| -\|r_w(t_2)\|} e^{-(t'_2 - t_2)\frac{\|w^*\|}{2}} < \frac{\|w^*\|}{c_2\alpha_1}e^{-(t'_2 - t_2)\frac{\|w^*\|}{2}}.
\end{align*}
From the inequality above, we can conclude that 
\begin{equation*}
    t'_2-t_2 \leq \frac{2}{\|w^*\|}\log(\frac{\|w^*\|^2}{c_2\alpha_1\alpha_2}).
\end{equation*}
Now we bound $r_b$ and $b_1$. Define $m' = \max \{r_b^2, \|b_1\|^2\}$. Given the derivatives for $r_b, b_1$ one can easily see that $\frac{d\|b_1\|^2}{dt} \leq 2\gamma m' \|w_2\|$ and $\frac{dr_b^2}{dt} \leq 2 m' \|w_2\|\sqrt{L(0)}$. Therefore, for $\eta = \max\{2\sqrt{L(0)}, 2\gamma\}$ and $t_2 \leq t \leq t'_2$, we have
\begin{equation}
    m'(t) \leq m(t_2) + \int_{t_2}^t \eta m \|w_2\| dt \Longrightarrow m(t) \leq m'(t_2)\exp{\eta\int_{t_2}^t  \|w_2\| dt} \leq \sqrt{\alpha_1}\exp({\eta\int_{t_2}^t  \|w_2\| dt}),
\end{equation}
where we have used Grönwall's inequality again. Now we bound $\int_{t_2}^{t'_2}  \|w_2\| dt$. Note that 
\begin{align*}
    \frac{d\|r_w\|^2}{dt} \leq -\|r_w\|^2\|w_2\|^2 \Longrightarrow \int_{t_2}^{t'_2}  \|w_2\|^2 dt \leq \frac{1}{\alpha_2^2}(\|r_w\|^2(t_2) -\|r_w\|^2(t'_2)) \leq\frac{\|w^*\|^2}{\alpha_2^2} \Longrightarrow \\
    \int_{t_2}^{t'_2}  \|w_2\| dt \leq \sqrt{(\int_{t_2}^{t'_2}  \|w_2\|^2 dt)(t'_2 - t_2)} \leq \sqrt{\frac{\|w^*\|^2}{\alpha_2^2}(\frac{2}{\|w^*\|}\log(\frac{\|w^*\|^2}{c_2\alpha_1\alpha_2}))},
\end{align*}
where we used Cauchy's inequality in the last line. Therefore,
\begin{equation*}
    m'(t) \leq \sqrt{\alpha_1}\exp({\eta \alpha_2^{-1} \sqrt{2\|w^*\|\log(\frac{\|w^*\|^2}{c_2\alpha_1\alpha_2}))}}).
\end{equation*}
Recall that we had $\alpha_2 = (\log \log \frac{1}{\alpha_1})^{-1}$. Therefore, we can  see that $m'(t) \ll \alpha_2^2$ (note that $\sqrt{\alpha_1}\exp( \sqrt {\log \frac{1}{\alpha_1}})=o(\alpha_1^{0.5 + \epsilon})$ for any $\epsilon > 0$). Therefore, by $t'_2$ values of $|r_b|, \|b_1\|$ have not grown to $\alpha_2$. So it must be $\|r_w\|$ which has become less than or equal to $\alpha_2$. Thus, we proved the claim of the second phase that we reach a point such that $\|r_w\|, |r_b|, \|b_1\| \leq \alpha_2$ for a vanishing $\alpha_2 = o_\alpha(1)$. Now, we are ready to move to the last phase of training. 

\textit{Phase 3.} Considering that $\|r_w\|, |r_b|, \|b_1\| \leq \alpha_2 = o_\alpha(1)$ (i.e., both the weight and the bias of the target is almost learned and bias of the first layer is still small), we show that the parameters would not move significantly after this point. 

First, we show that the loss function is having an exponential decay at this point. Note that 
\begin{align*}
    \frac{dL}{dt} = \nabla_\theta L^T \frac{d\theta}{dt} \leq -\|\dot W_1\|_F^2 - (\dot b_2)^2 = -\|r_w\|^2\|w_2\|^2 - r_b^2.
\end{align*}
Now note that $L(t'_2) \leq 2\alpha_2^2$. We also know that the loss is decreasing, thus for small enough $\alpha_2$ we have
\begin{align}
    \|w_2\|^2 \geq \|W_1w_2\| \geq \|w^*\| - \|r_w\| \geq \|w^*\| - \sqrt{2}\alpha_2 \geq \frac 1 2 \|w^*\|.
\end{align}
As a result, we can conclude that for $c_3 = \min\{\frac 1 2 \|w^*\|, 1\} > 0$, we have
\begin{align*}
    \frac{dL}{dt} \leq -\|r_w\|^2\|w_2\|^2 - r_b^2 \leq -c_3(r_b^2 + \|r_w\|^2) = -c_3L(t) \Longrightarrow L(t) \leq L(t'_2)e^{-c_3(t-t'_2)},
\end{align*}
where we used Grönwall's inequality again. Consequently, we can see that $\|r_w\|, |r_b|$ have exponential decay as well, i.e., 
\begin{align*}
    \|r_w\|, |r_b| \leq \sqrt{L(t)} \leq \sqrt{L(t'_2)e^{-c_3(t-t'_2)}} \leq \sqrt{2}\alpha_2 e^{-0.5c_3(t-t'_2)}.
\end{align*}
We also need to provide an upper bound for $\|w_2(t'_2)\|$. Note that we have proved that 
$w_2w_2^T - W_1^TW_1 - \gamma^{-1}b_1b_1^T = O(\alpha^2)$ is constant during training. Thus at $t'_2$, we have
\begin{align*}
    w_2^T(w_2w_2^T - W_1^TW_1 - \gamma^{-1}b_1b_1^T)w_2 = O(\alpha^2)\|w_2\|^2 \Longrightarrow \\
    \|w_2\|^4 - \|W_1w_2\|^2 - \gamma^{-1}(b_1^Tw_2)^2 =  O(\alpha^2)\|w_2\|^2
\end{align*}
Note that we know $\|r_w\| = \|W_1w_2 - w^*\| \leq \alpha_2$ and hence $W_1w_2 = \theta(1)$. Also $(b_1^Tw_2)^2 \leq \alpha_2^2 \|w_2\|^2$. Therefore, using the above equation, one can see that $\|w_2(t'_2)\| = O(1)$ (as a result $\|W_1(t'_2)\|_F = O(1)$ as well), i.e., there exists $c_4 > 0$ (not depending on $\alpha, \alpha_1, \alpha_2$) such that for small enough $\alpha_2$ $\|W_1\|_F, \|w_2\| \leq c_4$. (Note that during the entire training $\|r_w\|$ and thus $\|W_1w_2\|$ is bounded. So we have $\|W_1\|_F, \|w_2\| = O_\alpha(1)$ during the first two phases as well.) To conclude, assume the contrary of phase 3, i.e., define $t_3$ as the first moment that at least one of $\|W_1\|_F \geq 2c_4$, $\|w_2\| \geq 2c_4$, or $\|b_1\| \geq \sqrt{\alpha_2}$ happens. For $t'_2 \leq t \leq t_3$ we can bound the change in all of the variables as follows (assuming that $\alpha_2$ is small enough)
\begin{align*}
    \|\dot w_2\| &< 3c_4\sqrt{L(t)} \Rightarrow \|w_2(t_3)\| < \|w_2(t'_2)\| + 3c_4 \int_{t'_2}^{t_3} \sqrt{L(t)} dt < c_4 + \frac{6\sqrt{2}c_4}{c_3}\alpha_2 < 2c_4,\\
    \|\dot W_1\|_F & \leq 2c_4\sqrt{L(t)} \Rightarrow \|W_1(t_3)\|_F \leq  \|W_1(t'_2)\|_F + 2c_4 \int_{t'_2}^{t_3} \sqrt{L(t)} dt < c_4 + \frac{4\sqrt{2}c_4}{c_3}\alpha_2 < 2c_4,\\
    \|\dot b_1\| &\leq  2\gamma c_4\sqrt{L(t)} \Rightarrow \|b_1(t_3)\| < \|b_1(t'_2)\| + 2\gamma c_4 \int_{t'_2}^{t_3} \sqrt{L(t)} dt < \alpha_2 + \frac{4\gamma\sqrt{2}c_4}{c_3}\alpha_2 < \sqrt{\alpha_2}\\
\end{align*}
where we used the exponential decay of $\sqrt{L(t)}$ to derive the inequalities above. Note that the above inequalities show that none of $\|W_1\|_F \geq 2c_4$, $\|2_2\| \geq 2c_4$, $\|b_1\| \geq \sqrt{\alpha_2}$ can happen, proving our claim that the parameters would not move significantly. Particularly note that $\|b_1\| < \sqrt{\alpha_2}$ and $|w_2^Tb_1| \leq 2c_4\sqrt{\alpha_2}$ where $\alpha_2 = o_{\alpha}(1)$ vanishes as $\alpha \to 0$, showing that the proof of the theorem is now complete. 
\end{proof}
Now we can also provide the proof for Remark \ref{remark:equivalence-linear}.
\begin{proof}\textit{(Remark \ref{remark:equivalence-linear})}
Assume that we want to learn $f(x) = \hat f(\emptyset) + \sum_{i=1}^d \hat f(\{i\})x_i$ and suppose that the $k$-th bit is frozen. We know the other bits have a uniform and independent distribution. Therefore, we can use Parseval's identity to write the training loss as 
\begin{align}
    2L=\mathbb{E}_x[(f_{\mathrm{NN}}(x) - f(x))^2] = &\|\underline{W_1}W_2\dots W_{L-1}w_L - \underline{w^*}\|^2 + (b_L + w_L^Tb_{L-1} + w_L^TW_{L-1}^Tb_{L-2} \nonumber\\&+ \cdots + w_L^T \cdots W_2^T b_1 + w_L^T \cdots W_2^T W_{1,k} - (\hat f(\emptyset) + \hat f(\{k\})))^2 \label{eq:equivariance}
\end{align}
where the expectation on the LHS is uniform over all $x$ with frozen coordinate $k$. Also, $W_{1,k}$ represents the first layer's weights incident to the frozen coordinate and $\underline{W_1}$ represents $W_1$ where $W_{1,k}$ is removed from the matrix. Similarly, $\underline{w^*}$  represents $w^*$ when the $k$-th coordinate is removed. Furthermore, define 
$$r_b \coloneqq b_L + w_L^Tb_{L-1}  + \cdots + w_L^T \cdots W_2^T b_1 + w_L^T \cdots W_2^T W_{1,k} - (\hat f(\emptyset) + \hat f(\{k\})).$$
One can easily check that 
\begin{align*}
    \dot W_{1,k}= -\nabla_{W_{1,k}} L = -r_bW_2W_3\cdots W_{L-1}w_{L} = -\nabla_{b_1} L = \dot b_1.
\end{align*}
In other words, $W_{1,k}$ and $b_1$ have the same gradient, and thus their difference stays the same over time. Now we define a new bias variable as $\tilde b_1 = W_{1,k} + b_1$. We know that 
$$\frac{d\tilde b_1}{dt} = \dot W_{1,k} + \dot b_1 = -2r_bW_2W_3\cdots W_{L-1}w_{L}.$$
Now by revisiting \eqref{eq:equivariance}, we see that this is the loss function of the neural network resulting from removing the frozen coordinate and combining the bias variables as in Conjecture \ref{conj:linear}. The only difference is that 
$$\frac{d\tilde b_1}{dt} = -2r_bW_2W_3\cdots W_{L-1}w_{L} = 2\nabla_{\tilde b_1} L,$$
i.e., the update speed of this newly defined biased variable is $\gamma=2$. Therefore, if the conjecture is satisfied then we know that this bias $\|\tilde b_1\|$ and its contribution $\|w_L^T\cdots W_2^T\tilde b_1\|$ are bounded by $\epsilon$. Here, we can deduce the same about $W_{1,k}$ as 
\begin{align*}
    W_{1,k}(t) &= 0.5(W_{1,k}(t) + b_1(t) +W_{1,k}(t) - b_1(t)) = 0.5(\tilde b_1(t) +W_{1,k}(0) - b_1(0)) \\&= 0.5\tilde b_1(t) + O(\alpha) \leq 0.5\epsilon + O(\alpha).
\end{align*}
Therefore, $\hat f_{\mathrm{NN}}(\{k\}) = w_L^T\cdots W_2^TW_{1,k}$ is also $O(\epsilon + \alpha)$ proving the remark.

\end{proof}
\subsection{Proof for the Length Generalization Theorem}
\label{app:proof-for-length-gen}
\begin{proof}
First, we prove the existence and uniqueness of such low-degree interpolators. Afterward, we consider it explicitly for parity functions.

Note that we know there are no $r+1$ bits which are all equal to $-1$ in $B_r$. 
Therefore, for any $r+1$ indices, we have $(x_{i_1} - 1)\cdots(x_{i_{r+1}} - 1) = 0$. Consequently, each $x_{i_1} \cdots x_{i_{r+1}}$ can be replaced by a degree $r$ polynomial. Now consider the Fourier-Walsh expansion of $f(x)$. By applying the previous identity, one can replace all monomials in the Fourier-Walsh expansion of $f(x)$ with degree $r$ (or less) alternatives, while the value of the function on $B_r$ does not change.  

Now we prove the uniqueness. Consider all monomials $\chi_T(x)$ where $|T| \leq r$. There are in total $\binom{d}{0} + \binom{d}{1} + \cdots + \binom{d}{r} = |B_r|$ of such monomials and consider all functions given by these monomials $f_a(x) = \sum_{i = 1}^{|B_r|} a_i \chi_{T_i}(x)$. Note that for each $x_j \in B_r \; 1 \leq j \leq |B_r|$, $f_a(x_j) = \sum_{i = 1}^{|B_r|} a_i \chi_{T_i}(x_j)$. In other words, $f_a(x_j)$ is a linear combination of $a_i$'s, i.e., $(f_a(x_1), \ldots, f_(x_{|B_r|}))^T = M(a_1, \ldots, a_{|B_r|})^T$, where $M_{i,j} = \chi_{T_j}(x_i)$. Now note that we have proven that any function can be written in this way, i.e., $\mathrm{rank}(M) = |B_r|$ showing that $\dim(\ker(M)) = 0$ and hence the uniqueness.

Now, we particularly study the case of monomials. Without loss of generality, consider degree $k > r$ monomial $\mathrm{parity}_k(x) \coloneqq x_1x_2\cdots x_k$. We claim that 
\begin{align*}
    f_r(x) & \coloneqq 1 + \sum_{1 \leq i \leq k} (x_i-1) + \sum_{1\leq i<j\leq k} (x_i-1)(x_j-1) + \cdots + \sum_{i_1 <\cdots <i_r \leq k} (x_{i_1}-1)\cdots(x_{i_r}-1) \nonumber\\&= 1 + \sum_{T \subseteq [k]: |T|=1} \prod_{i \in T} (x_i -1)+ \cdots + \sum_{T \subseteq [k]: |T|=r} \prod_{i \in T} (x_i-1)
\end{align*}
is the the unique low-degree equivalent of $\mathrm{parity}_k$ on $B_r$, i.e., $\mathrm{parity}_k(x) = f_r(x) \;\;\forall x \in B_r$. To see this, take any $x \in B_r$. Define $s(x)$ as the number of $-1$ bits in $x_1, \cdots, x_k$, i.e., $s(x) \coloneqq |\{x_i = -1 | 1 \leq i \leq k\}|$. Note that $0 \leq s(x) \leq k$ and $\mathrm{parity}_k(x) = (-1)^{s(x)}$. Furthermore, we have 
\begin{equation*}
    \forall 1\leq i\leq r\;\;\; \sum_{T \subseteq [k]: |T|=i} \prod_{j \in T} (x_j-1) = (-2)^i\binom{s(x)}{i}.
\end{equation*}
Therefore, 
\begin{align*}
    f_r(x) &= 1 + \sum_{T \subseteq [k]: |T|=1} \prod_{i \in T} (x_i -1)+ \cdots + \sum_{T \subseteq [k]: |T|=r} \prod_{i \in T} (x_i-1) \\&= 1 + (-2)^1\binom{s(x)}{1} + \cdots + (-2)^i\binom{s(x)}{i} + \cdots + (-2)^r\binom{s(x)}{r} \\&= (1-2)^{s(x)} = (-1)^{s(x)} = \mathrm{parity}_k(x),
\end{align*}
where we used the fact that $s(x) \leq r$.
Now we consider the constant term (i.e., bias) of $f_r(x)$. Indeed notice that the constant in $f_r(x)$ is given by
\begin{equation*}
    \hat{f_r}(\emptyset) =  1- \binom{k}{1} + \binom{k}{2} - \cdots + (-1)^{r}\binom{k}{r}. 
\end{equation*}
It can easily be proven that the above constant is equal to $(-1)^r\binom{k-1}{r}$ by induction on $r$. Note that it is clear for $r=1$ and the induction step from $r$ to $r+1$ is given by  
\begin{align*}
    1 - \binom{k}{1} + \cdots + (-1)^{r}\binom{k}{r} &+ (-1)^{r+1}\binom{k}{r+1} = (-1)^r\binom{k-1}{r} +(-1)^{r+1}\binom{k}{r+1} \\&= (-1)^{r+1}(\binom{k}{r+1} - \binom{k-1}{r}) = (-1)^{r+1}\binom{k-1}{r+1}.
\end{align*}
Therefore, by Parseval's identity we have 
\begin{equation*}
    \E_x[(\mathrm{parity}_{k}(x) - f_r(x))^2] > \hat{f_r}(\emptyset)^2 = \binom{k-1}{r+1}^2,
\end{equation*}
which proves the lower bound. Note that we ignored other Fourier-Walsh coefficients for the lower bound above. 

\end{proof}
\section{Experiment Details and Additional Experiments}\label{app:exps}
In this section, we provide details on the architectures and experimenting procedure as well as presenting additional results.
\subsection{Experiment Details}
 
\subsubsection{Architectures}
We use MLP, Transformer \citep{vaswani2017attention-transformer}, mean-field \citep{mei2018mean} and random features model (Definition~\ref{def:random-features-model}) for experiments.  Here, we describe them in detail. 
\begin{itemize}
    \item \textbf{MLP.} The MLP model is a fully connected network consisting of 4 hidden layers of sizes $512, 1024, 512, 64$. The ReLU activation function is used for all layers except the output layer. Moreover, the standard initialization of PyTorch has been followed, i.e., the weights of each layer are initialized with $U(\frac{-1}{\sqrt{\mathrm{dim_\mathrm{in}}}}, \frac{1}{\sqrt{\mathrm{dim_\mathrm{in}}}})$ where $\mathrm{dim_\mathrm{in}}$ is the input dimension of the layer.
    \item \textbf{Transformer.} We have employed the encoder part of Transformer networks which are widely used in computer vision \citep{dosovitskiy2020image} and language modeling \citep{raffel2019exploring}. First, all binary $\pm 1$ bits are encoded  into a 256-dimensional vector using a shared embedding layer. Afterward, the embedded input is passed through 12 transformer layers. Finally, a linear layer is used to compute the output of the model. Moreover, the size of MLP hidden layers is set to 256, and 6 heads are used for the self-attention blocks. \\
    Also, note that we use bidirectional attention with learnable positional embeddings for our main experiments. Only for \cref{sec:causal-transformer} we use causal attention masking. 
    \item \textbf{Mean-field.}
    We also use a two-layer neural network in the mean-field parametrization. More precisely, following \citet{mergedstaircase},  $f_{\mathrm{MF}}(x) = \frac{1}{N} \sum_{i=1}^N a_i\sigma(\langle w_i, x \rangle + b_i)$, where $a_i \sim U(-1,1)$ and $w_i, b_i \sim U(\frac{-1}{\sqrt{d}}, \frac{1}{\sqrt{d}})^{\otimes d} \otimes U(\frac{-1}{\sqrt{d}}, \frac{1}{\sqrt{d}})$. We use ReLU as the activation function and set the number of neurons to $N=2^{16}$. Note that with this formulation, one has to take large values for the learning rate, e.g., $100$ or $1000$.

    \item \textbf{Random features model.}
    Following Definition~\ref{def:random-features-model}, we use $f_{\mathrm{RF}} = \sum_{i=1}^N a_i\sigma(\langle w_i, x \rangle + b_i)$ as the parametrization of the RF model. Moreover, we initialize $a_i = 0$ and $w_i, b_i \sim \mathcal{N}(0, \frac{1}{d})^{\otimes d} \otimes \mathcal{N}(0, \frac{1}{d})$ where $d$ is the input dimension. We also use $N= 2^{13}$ random features for our experiments. We have used the ReLU activation function for almost all of the experiments. We have only used polynomial activation $(1+x)^6$ for the experiment comparing RF models with the ReLU activation and polynomial activation (\cref{fig:rf-comparison}).  
\end{itemize}
\subsubsection{Procedure}
The implementation of experiments has been done using the PyTorch framework \citep{torch}. Additionally, the experiments were executed on NVIDIA A100 GPUs and the experiments took around 90 GPU hours in total (excluding the selection of hyperparameters). Now we discuss the training procedures.

\noindent\textbf{Length generalization and main experiments.}
We first explain the experiments of the main experiment section and also experiments for the length generalization. For each function $f\colon\{\pm 1\}^d \to \mR$ and unseen domain $\mathcal{U}$, we generate all binary vectors in  $\mathcal{U}^c = \{\pm 1\}^d \setminus \mathcal{U}$ for the training set. Consequently, we usually take small values of $d$ for the experiments. Our main motivation for doing so is to eliminate the randomness generated by the sampling of training examples and also to assume the in-distribution generalization. Nonetheless, we believe the min-degree bias still holds when training examples are sampled randomly as is illustrated in the experiments included in this appendix.

We then train our models. For the Transformer, we have used Adam \citep{kingma2014adam} optimizer with batch size $256$. For the RF models, we have used mini-batch SGD with a batch size of $256$. Also, for the rest of the architectures, SGD with batch size $64$ has been used. We did not observe any significant difference in the results of experiments by varying the batch sizes. We generally selected the learning rates per model (and task) by the stability of the training and the speed of convergence. We have included more details about the learning rate in \cref{app:learning-rate-sensitivity}. We also set the number of training epochs large enough that the loss of models is always less than $10^{-2}$. We also note that Transformers usually learn the target function in a few epochs, reaching a loss of the order of $10^{-4}$. After that, the training becomes unstable in some instances. Indeed note that Transformers are usually trained with learning rate schedulers. However, we did not use any learning rate schedulers for simplicity and instead opted for early stopping to avoid unstable phases of training. Also for the results reported for causal attention masking in \cref{tab:decoder} and particularly for instance $x_7x_{13}$ with $x_7=1$ some of the seeds became unstable and did not converge. So for this particular instance, we reported the results for the first $10$ seeds where the training loss converged. We note that even for the unstable seeds the min-degree bias was never violated. Note that our main objective is to demonstrate the min-degree bias of neural networks and not to optimize any performance metric. As a result, we did not focus on hyperparameter tuning in these experiments. Generally, hyperparameters used for our experiments are available in our code online: \url{https://github.com/aryol/GOTU}. 

Finally, we track the coefficient of different monomials, i.e., $\hat{f}_{\mathrm{NN}}(T) = \E_x[\chi_T(x){f}_{\mathrm{NN}}(x)]$ during the training. We have also repeated each experiment for $10$ different seeds and reported the averages. Particularly, we have also drawn $95\%$-CI in Figures \ref{fig:curriculum}, \ref{fig:learningrate-sensitivity}, and \ref{fig:low-dimension} but we did not draw CI for other experiments to keep the plots more readable.

\noindent\textbf{Curriculum learning experiments.}
In contrast to other experiments, there is no unseen domain in these experiments. Also here we draw a fixed number of samples uniformly from $\{\pm 1\}^d$. We train the MLP model with the same training set, learning rate, and batch size, once with normal mini-batch SGD and once with Degree-Curriculum (Algorithm \ref{alg:degree-curriculum}). Therefore, everything between the Degree-Curriculum algorithm and the normal training is the same. We use Adam optimizer for these experiments as we found it to be faster than plain SGD. Moreover, we selected the learning rate based on the results of the normal training and then used the same learning rate for the Degree-Curriculum algorithm to have a fair comparison. Finally, we compare the average generalization loss between the two algorithms.

\subsection{Sensitivity to Learning Rate}\label{app:learning-rate-sensitivity}
We noticed that the min-degree bias of some architectures such as MLPs depends on the learning rate. More precisely, we observed that smaller learning rates promote the min-degree bias and larger learning rates increase the leakage of the models. Here, we demonstrate the effect of the learning rate with an example. Consider learning $f_2(x_0, \ldots, x_{14}) = x_0x_1$ under unseen domain $\mathcal{U}_2=\{(x_0, x_1) = (-1, -1)\}$. In this case, the min-degree interpolator is $x_0+x_1-1$. Nonetheless, any $\alpha_{\mathrm{Leak}}(x_0x_1) + (1-\alpha_{\mathrm{Leak}})(x_0+x_1-1)$ is also a valid interpolator where $\alpha_{\mathrm{Leak}}$ shows the leakage of the interpolator. We tried learning $f_2$ under $\mathcal{U}_2$ with an MLP and varied the learning rate; the results are depicted in \cref{fig:learningrate-sensitivity}. 
\begin{figure}[tb]
    \centering
    \includegraphics[width=0.55\textwidth]{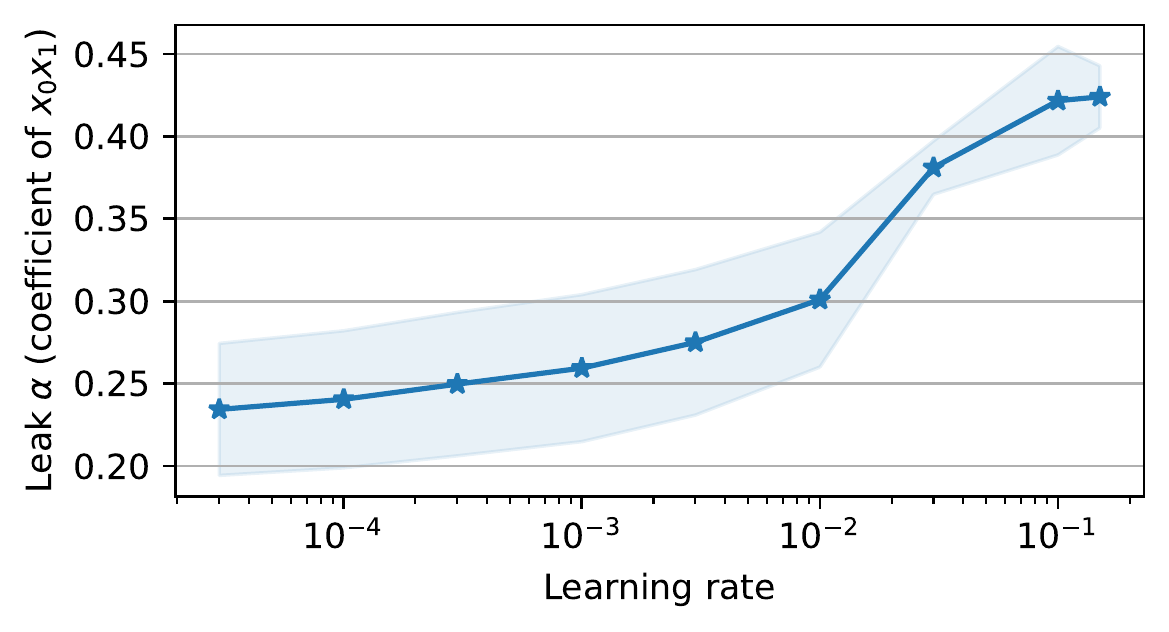}
    \caption{Leakage of the interpolators learned by the MLP model trained with different learning rates. Larger learning rates weaken the min-degree bias and lead to higher leaks.}
    \label{fig:learningrate-sensitivity}
\end{figure}
It can be seen that larger learning rates cause higher leaks in the models. We note that training becomes more unstable with larger learning rates to the point that the model cannot be trained with learning rates larger than $0.2$. Also notice that $\alpha < 0.5$ in all cases, hence, the min-degree alternatives are still dominant. In general, in our experiments, we tried to select moderate values for the learning rate to ensure that the optimization process is stable. Nonetheless, we never used learning rate below $10^{-5}$ for Adam and we usually set learning rate between $10^{-4}$ to $10^{-3}$ for SGD. Exact hyperparameters for different experiments are available in our code.

\subsection{Additional Experiments}
Here, we complete the experiments presented in \cref{sec:exps} and also provide an experiment in support of Conjecture \ref{conj:linear}. 

First, we try learning example $(f_2, \mathcal{U}_2)$ of Section \ref{sec:exps} in a larger ambient dimension.
More specifically, we use ambient dimension $d=40$ and consider learning $f_2(x_0, x_1, \ldots, x_{39}) = x_0x_1$ under unseen domain $\{(x_0, x_1) = (-1, -1)\}$. In this case, the MD interpolator is again $x_0+x_1-1$. For this experiment, we can not generate all $2^{39}$ elements of $\mathcal{U}^c$, thus, we only use $2^{15}$ samples uniformly drawn from $\mathcal{U}^c$. We also use the same number of samples for the estimation of Fourier-Walsh coefficients. The results are depicted in \cref{fig:f2-dim40}. For the random features model, it can be seen that the leakage is reduced compared to \cref{fig:transformers} where the ambient dimension is $15$. On the other hand, the leakage of other models has remained the same, which shows that the sparsity and ambient dimension do not affect them. This is indeed consistent with our expectations as we know models such as the mean-field are able to perform feature-learning and learn the support of sparse Boolean functions \citep{mergedstaircase}. 

\begin{figure}[tb]
     \centering
     \begin{subfigure}[b]{0.4\textwidth}
         \centering
         \includegraphics[width=\textwidth]{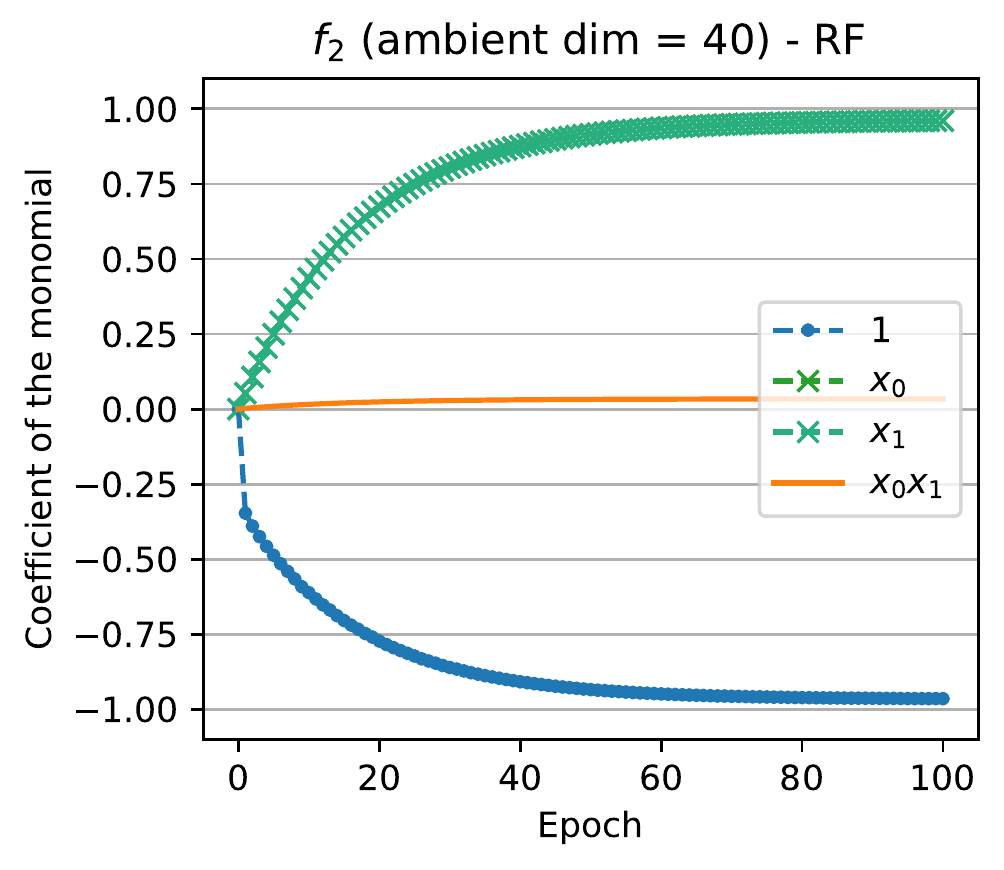}
     \end{subfigure}
     \hfill
     \begin{subfigure}[b]{0.4\textwidth}
         \centering
         \includegraphics[width=\textwidth]{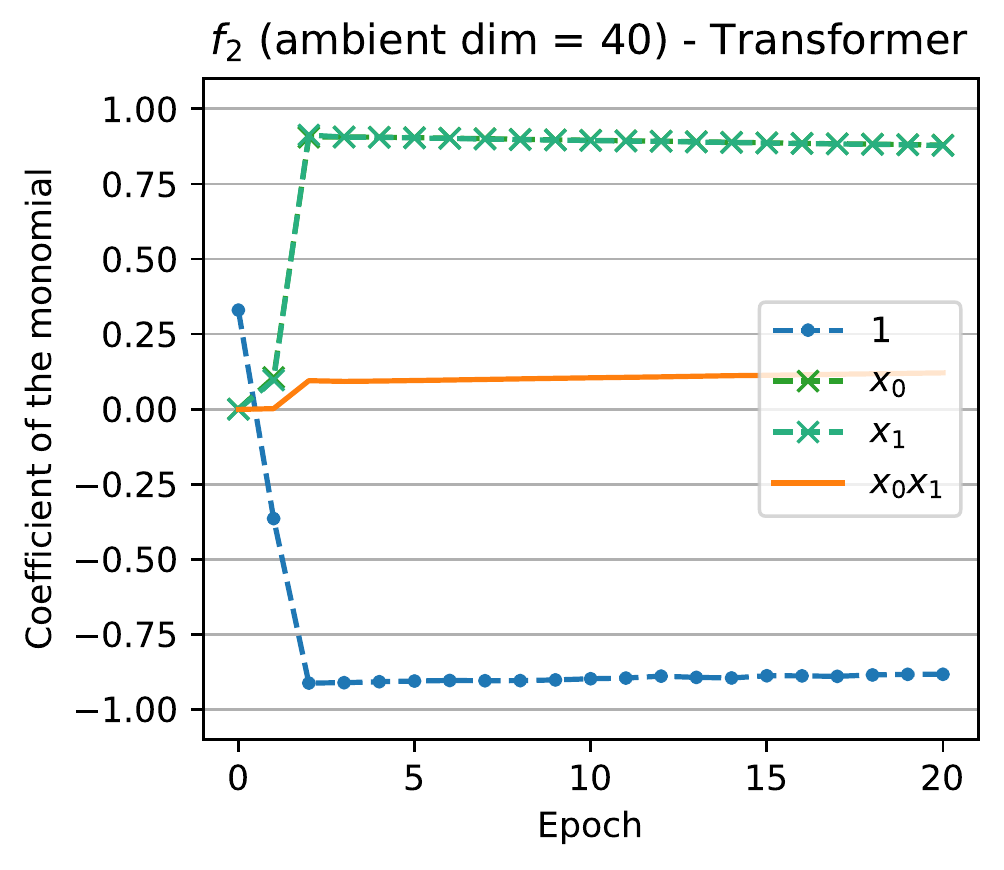}
     \end{subfigure}
     \hfill
     \begin{subfigure}[b]{0.4\textwidth}
         \centering
         \includegraphics[width=\textwidth]{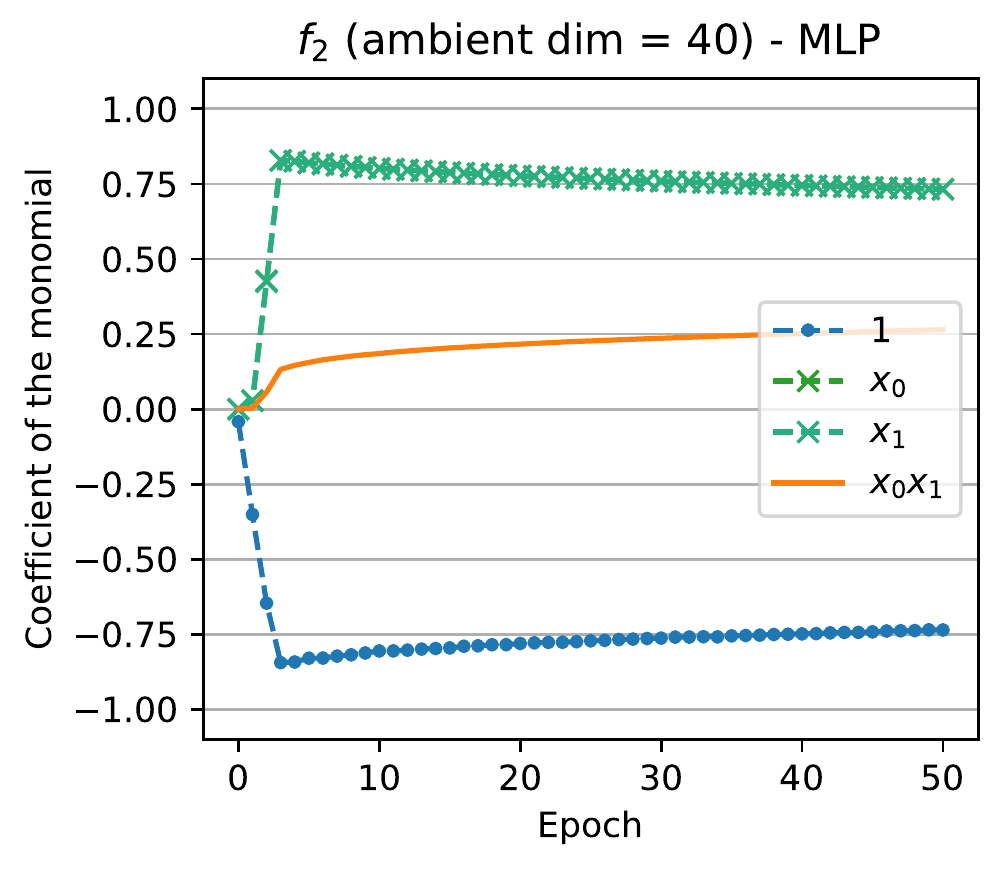}
     \end{subfigure}
     \hfill
     \begin{subfigure}[b]{0.4\textwidth}
         \centering
         \includegraphics[width=\textwidth]{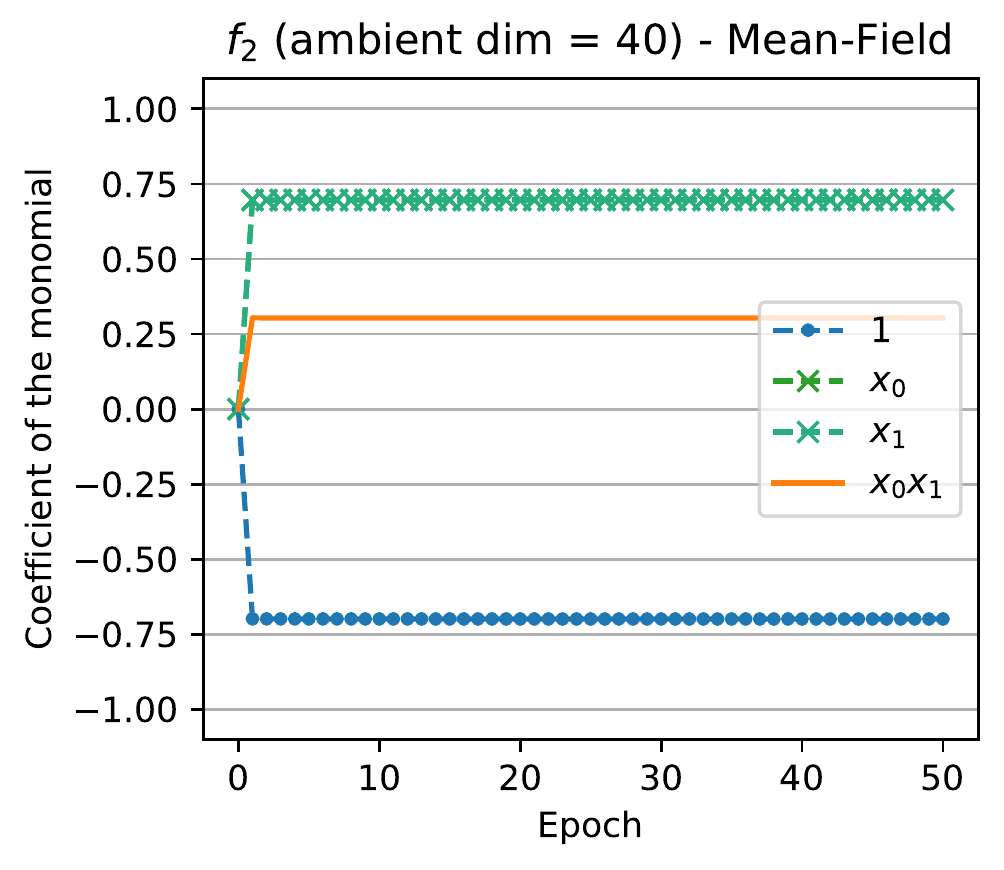}
     \end{subfigure}
     
        \caption{$f_2(x_0, x_1, \ldots, x_{39}) = x_0x_1$ learned by the RF, Transformer, MLP, and mean-field models while training samples satisfy $(x_0, x_1) \neq (-1, -1)$. Consequently, $x_0x_1$ (solid orange line) is replaceable by $x_0 + x_1 - 1$ (dashed lines). The Transformer, MLP, and mean-field models learn leaky solutions and the leakage is very similar to Figures \ref{fig:transformers} and \ref{fig:other-archs} where the ambient dimension is $15$. In contrast, the leak of the RF model is decreased in comparison to \cref{fig:transformers}.}
        \label{fig:f2-dim40}
\end{figure}

Further, we consider the majority function on 3 bits embedded in a 40-dimensional ambient space, i.e., $f_4(x_0, x_1, \ldots, x_{39}) = \mathrm{Maj}(x_0, x_1, x_2) = \frac{1}{2}(x_0 + x_1 + x_2 -x_0x_1x_2)$ under the unseen domain $\mathcal{U}_4 = \{x \in \{\pm 1\}^d|(x_0, x_1) = (-1, -1)\}$. Note that in this case $x_0x_1x_2$ can be replaced with $x_0x_2+ x_1x_2 - x_2$ which leads to MD interpolator being equal to $\frac{1}{2}(x_0 + x_1 + 2x_2 - x_0x_2 - x_1x_2)$. Similar to the previous experiments, we trained the RF, MLP, mean-field, and Transformer on this instance. For this example, we do not generate the whole $\mathcal{U}^{c}$, and instead, we use $2^{15}$ samples. This number is still large enough that gives the generalization on the seen domain. The results of this experiment are presented in \cref{fig:f4-all}. Note that in this case, the original target function is more symmetric than the MD interpolator. Nonetheless, none of the models are able to recover the more symmetric function. 

 \begin{figure}[htb]
     \centering
     \begin{subfigure}[b]{0.4\textwidth}
         \centering
         \includegraphics[width=\textwidth]{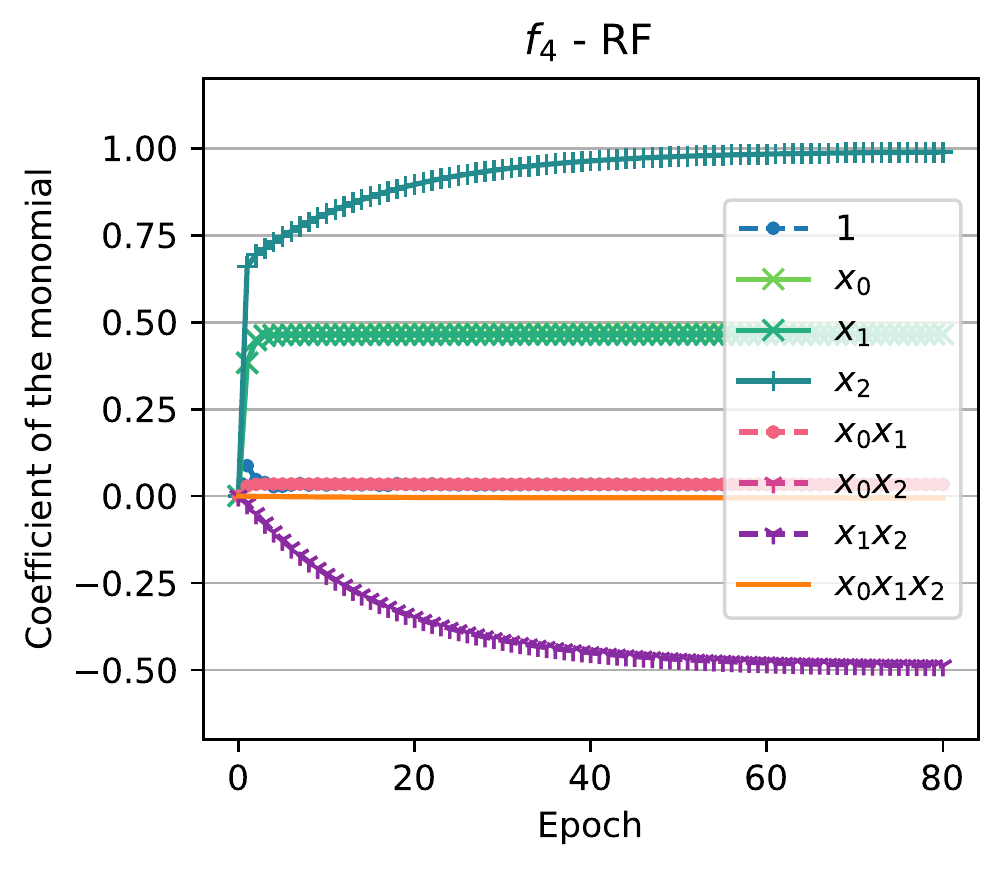}
     \end{subfigure}
     \hfill
     \begin{subfigure}[b]{0.4\textwidth}
         \centering
         \includegraphics[width=\textwidth]{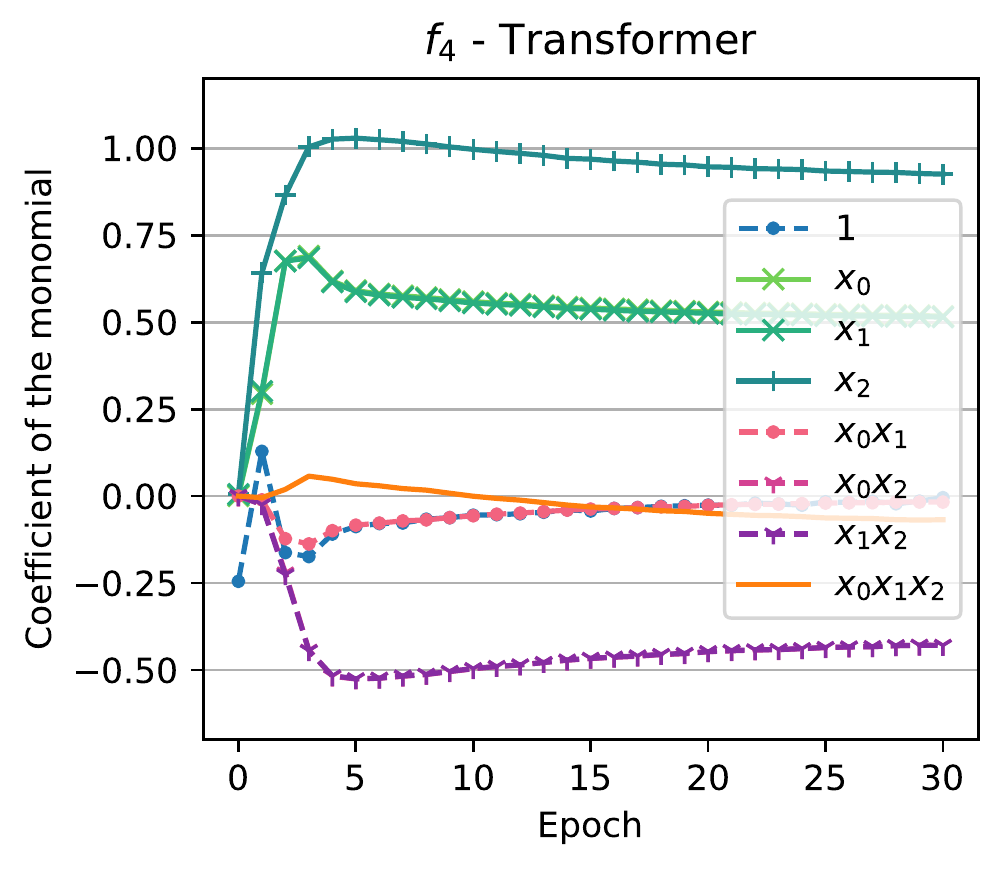}
     \end{subfigure}
     \hfill
     \begin{subfigure}[b]{0.4\textwidth}
         \centering
         \includegraphics[width=\textwidth]{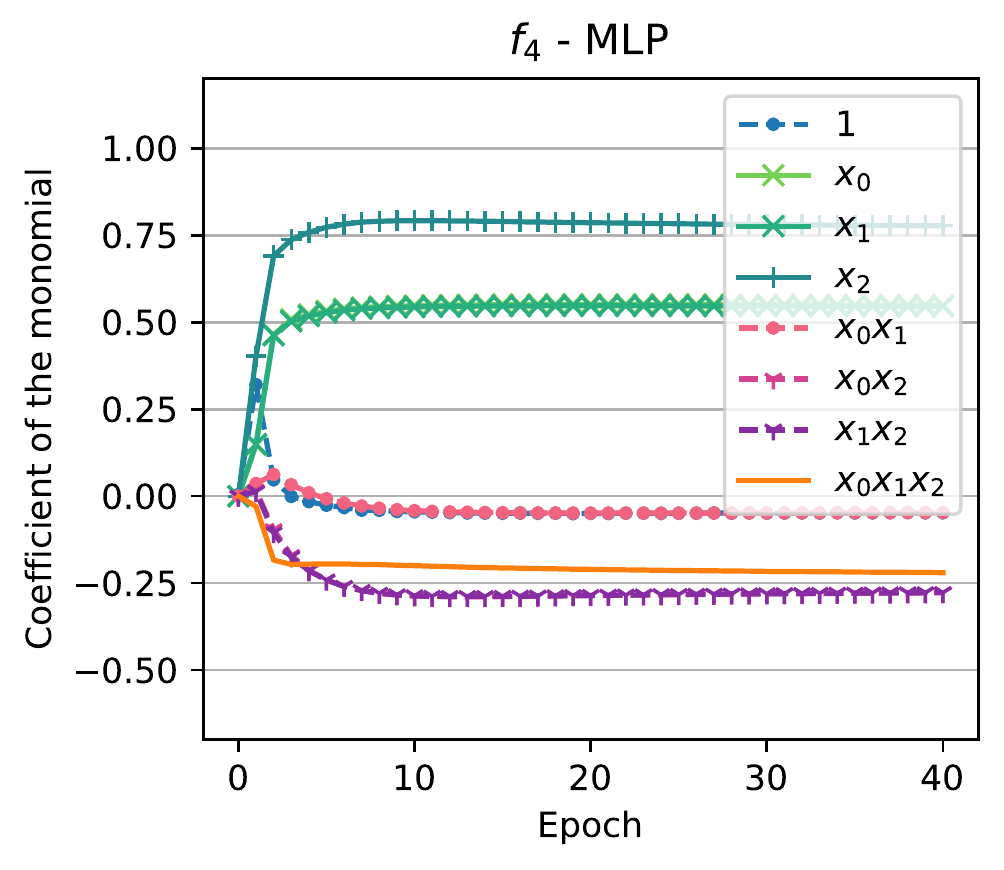}
     \end{subfigure}
     \hfill
     \begin{subfigure}[b]{0.4\textwidth}
         \centering
         \includegraphics[width=\textwidth]{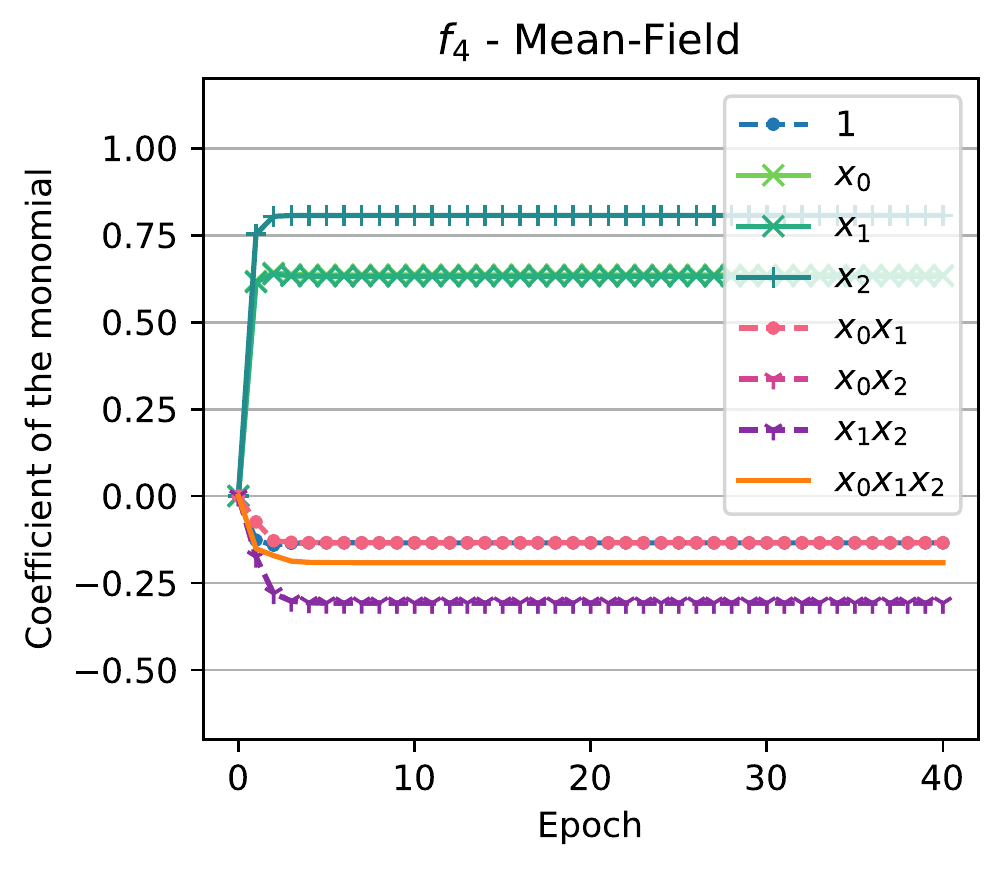}
     \end{subfigure}
     
        \caption{$f_4(x_0, x_1, \ldots, x_{39}) = \mathrm{Maj}(x_0, x_1, x_2) = \frac{1}{2}(x_0 + x_1 + x_2 -x_0x_1x_2)$ learned by the RF, Transformer, MLP, and mean-field models while samples satisfying $(x_0,x_1) = (-1, -1)$ are excluded from training. In this case, $x_0x_1x_2$ (orange solid line) is replaceable by $x_0x_2+x_1x_2-x_2$. As expected, the RF learns the MD interpolator. The Transformer also learns the MD interpolator with a small leakage. On the other hand, the MLP and mean-field models have a more considerable leakage.}
        \label{fig:f4-all}
\end{figure}
Finally, we present an experiment on linear neural networks in support of Conjecture \ref{conj:linear}. Particularly, consider learning linear function 
$$f(x_0, x_1, \ldots, x_{12})= 1 + x_0 + 1.125x_1+1.25x_2 + 1.375x_3 + \cdots + 2.375x_{11} + 2.5x_{12}$$
with a 4-layer fully connected linear neural network such that the width of each layer is $256$. We first initialize each layer with PyTorch's default initialization (i.e., each layer's weights are initialized with $U(\frac{-1}{\sqrt{\mathrm{dim}_\mathrm{in}}}, \frac{-1}{\sqrt{\mathrm{dim}_\mathrm{in}}})$ where $\mathrm{dim}_\mathrm{in}$ is the input dimension of the layer). Then, we multiply the weights of each layer with the initialization scale parameter $\alpha$ to finish the initialization. Then we train the neural network on $f$ with the whole $2^{13}$ possible samples using mini-batch SGD with batch size 256 and learning rate $10^{-4}$. We stop the training when the training loss becomes less than $10^{-4}$. At the end, we compute how much each layer's bias is contributing to the bias learned by the neural network. More precisely, we compare $w_4^TW_3^TW_2^Tb_1$, $w_4^TW_3^Tb_2$, $w_4^Tb_3$, and $b_4$ (respectively the contributing bias from the first layer to the last layer). Particularly, we plot the absolute value of these contributing biases against the initialization scale in the log-log scale in Figure \ref{fig:linear}. It can be seen that as the initialization scale decreases, all of the bias becomes captured by the bias of the last layer and the contributions of other layers' biases (including the first layer's) go to zero supporting Conjecture \ref{conj:linear}.

\begin{figure}
    \centering
    \includegraphics[width=0.7\linewidth]{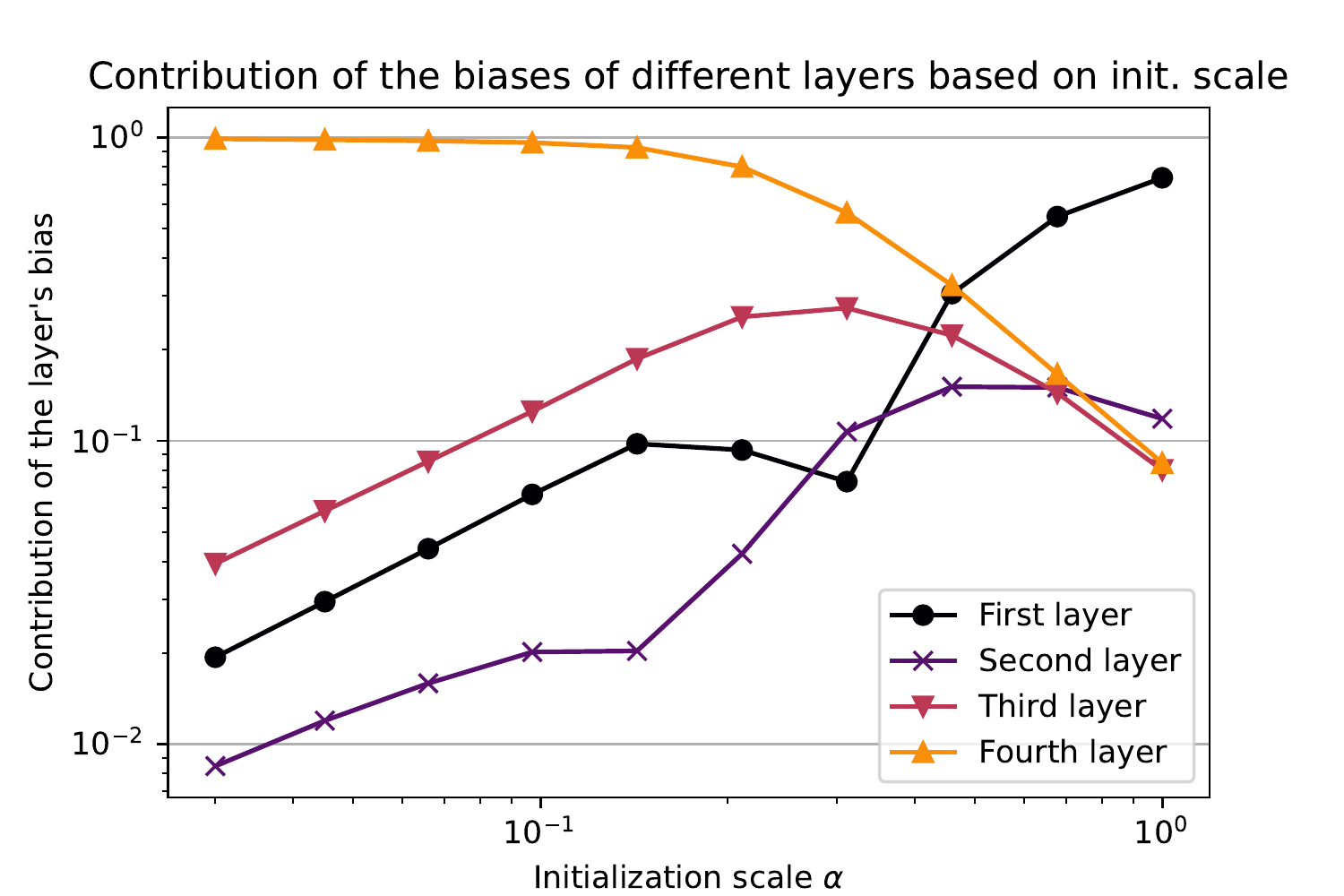}
    \caption{Contribution of each layer's bias to the bias learned by the neural network depending on the initialization scale. It can be seen that as the initialization scale decreases the bias of the neural network is dominantly learned by the last layer's bias. Further, the contribution of biases of all other layers goes to 0.}
    \label{fig:linear}
\end{figure}
\newpage
\section{Vanishing Ideals}\label{app:vanishing-ideals}

In this section, we discuss the connection between unseen domains in Boolean settings and algebraic geometry and vanishing ideals. We refer interested readers to the work of \citet{dummit2004abstract} for broader coverage of this topic. 
First, we recall some basic properties of rings and fields. A ring is a set with two binary operations, the addition $+$ and the multiplication $*$ where $*$ may not have an inverse. A field is a ring such that all nonzero elements have an inverse.
For example, 
$\mZ$ with addition and multiplication is a ring but not a field. Whereas $\mR$ and $\mC$ are examples of fields. Here we will mostly work with polynomial rings with $d$ variables. Note that $\mR[x_1, x_2, x_3, \cdots,x_d]$ is of special interest to us since any Boolean function $f\colon\{\pm 1\}^d \to \mR$ can be represented by a polynomial $p(x) \in \mR[x_1, x_2, x_3, \cdots,x_d]$ thanks to its Fourier-Walsh expansion. Particularly, we focus on polynomial rings 
$R = \mathbb{K}[x_1,x_2, \cdots x_d]$ where $\mathbb{K}$ is a field. We start by recalling a few definitions.
\begin{definition} [Ideal]
Let $R$ be a commutative ring. $I \subseteq R$ is an ideal if
\begin{itemize}
    \item $(I,+)$ is a group, and
        \item for all $r \in R$ and $i \in I$ we have $ri \in I$.
    \end{itemize}
\end{definition}
Having defined ideals, note that ideals can be generated from a $G \subseteq R$. 
\begin{definition}
Consider a commutative ring $R$ and let $G \subseteq R$. The ideal generated by $G$ denoted by $\langle G \rangle$ is the smallest ideal that contains $G$. Particularly, if $G = \{g_1, g_2, \cdots, g_n \}$ is finite, we have
\begin{align}
    \langle G \rangle = \langle g_1, g_2, \cdots, g_n \rangle = \{ \sum_{i=1}^n r_ig_i | \forall r_1, r_2, \ldots, r_n \in R \}.
\end{align}
\end{definition}
For example, for $R = \mR[x_1, x_2]$, we have  $\langle x_1-1,x_1x_2 + 5 \rangle = \{ p(x_1-1) + q(x_1x_2 + 5) | p,q \in  \mR[x_1, x_2]\}$. Another important notion is the notion of quotients, which is similar to the modulo operator. 
The following definition will make it more rigorous.

\begin{definition}[Quotient]
    Let $R$ be a commutative ring and $I$ an ideal of $R$.  Quotient $R/I$ is defined as elements of the form $r +I$ with $r \in R$ such that $r + I = r' + I$ if $r-r' \in I$. Furthermore, for any for $r + I, r' +I \in R/I$, addition $+$ and multiplication $\cdot$ for $R/I$ are defined as 
    \begin{itemize}
        \item $(r+I) + (r' + I) = r+r' + I$, and
        \item $(r+I) \cdot (r' + I) = rr' + I$.
    \end{itemize}
Also, for $r' \in R/I$, any element $r \in R$ satisfying $r-r' \in I$ is called a representative of $r'$. $R/I$ as defined above is indeed a ring.
\end{definition}



Consider the following ideal $I_\Omega = \langle x_1^2-1,x_2^2-1, \cdots, x_d^2 - 1 \rangle$ of $\mR[x_1, x_2, x_3, \cdots,x_d]$ for Boolean functions. Note that for each binary bit $x_i$ we have $x_i^2-1=0$. Therefore, the Fourier-Walsh transform is a bijection between   $\mR[x_1, x_2, x_3, \cdots,x_d]/I_\Omega$ and the set of Boolean functions.

Now we are ready to define vanishing ideals. Given a set of points $S \subseteq \mathbb{K}^d$ where $\mathbb{K}$ is a field, we are interested in the set of polynomials that are zero on $S$.  In the case of generalization on the unseen domain $\mathcal{U} \subseteq \Omega$, we are interested in the functions that vanish on $ \mathcal{U}^c = \Omega \setminus \mathcal{U}$, as they are $0$ on the training set and give a class of interpolators on $\mathcal{U}^c$.

\begin{definition}[Vanishing ideals]
    For a field $\mathbb{K}$ and $S \subseteq \mathbb{K}^d$, vanishing ideal $I(S)$ of $S$ is defined as
    \begin{align*}
        I(S) \coloneqq \{ f \in \mathbb{K}[x_1, \ldots, x_d] | f(x) = 0 \text{ for all $x \in S$} \}.
    \end{align*}
\end{definition}

Note that $I_\Omega$ is indeed the vanishing ideal of $\Omega$, i.e., $I(\Omega) = I_\Omega = \langle x_1^2-1,x_2^2-1, \cdots, x_d^2 - 1 \rangle$. Furthermore, for any $S \subseteq \{\pm 1\}^d$, we have $I_\Omega \subseteq I(S)$ and thus $I(S)$ can be written as $I(S) = \langle v_1,v_2,\ldots,v_n \rangle+ I_\Omega$ for some $n \in \mN$ and Boolean functions $v_1, \ldots, v_n$. For example, consider canonical holdout 
$\mathcal{U}=\{x \in \{\pm 1\}^d|x_1=-1\}$; in this case we get $I(\mathcal{U}^c) = \langle x_1 - 1 \rangle + I_\Omega$.
We could also do an `inverse operation': given an ideal or set of functions, find all the points which are zero under the elements of the ideal.

\begin{definition}
    For a field $\mathbb{K}$  and $G \subseteq R=\mathbb{K}[x_1, \ldots, x_d]$, and $I = \langle G \rangle$, we define $V(G) = V(I)$ as
    \begin{align*}
        V(G) = V(I) = \{ x \in \mathbb{K}^d | f(x) = 0 \text{ for all $f \in I$}\}.
    \end{align*}
\end{definition}
Therefore, operations $V$ and $I$ give us a way to transfer some algebraic properties to geometric properties.  What we defined could be seen as part of the theory of classical algebraic geometry. In algebraic geometry, we are interested in the following type of sets $S$:
\begin{definition}
    A set $S \subseteq \mathbb{K}^d$ is called an affine variety, if there exists some ideal $I$ such that $V(I) = S$.
\end{definition}
In our case, all the $S$ are affine varieties as they are finite. For more details about algebraic geometry, please refer to the work of \citet{cox2013ideals}.

Now given an $S \subseteq \Omega$, the following lemma gives us a recipe to find $I(S)$.
\begin{lemma}
    For $S$ and $W$ two affine varieties, we have that $I(S \cup W) = I(S) \cap I(W).$ Also, for $x = (i_1, i_2, \ldots, i_d) \in \mathbb{K}^d$, we have that $I(x) = m_x = \langle x_1 - i_1 , x_2 - i_2 , \ldots, x_d - i_d \rangle$, where $m_x$ is a maximal ideal.
\end{lemma}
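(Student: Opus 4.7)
The plan is to dispatch the two claims in turn; both are standard facts of classical algebraic geometry and require only direct set-theoretic and ring-theoretic manipulations, so I do not expect any real obstacle — the proof is essentially bookkeeping.

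For the first claim $I(S \cup W) = I(S) \cap I(W)$, I would argue by double inclusion from the definition of the vanishing ideal. For the forward inclusion, if $f \in I(S \cup W)$ then $f$ vanishes on every point of $S \cup W$, and in particular on every point of $S$ and every point of $W$, so $f \in I(S)$ and $f \in I(W)$, giving $f \in I(S) \cap I(W)$. For the reverse inclusion, if $f \in I(S) \cap I(W)$ then $f$ vanishes on $S$ and on $W$, hence on $S \cup W$, and so $f \in I(S \cup W)$.

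For the second claim, I would again prove two inclusions. The inclusion $m_x \subseteq I(x)$ is immediate: each generator $x_j - i_j$ evaluates to $0$ at $x = (i_1, \ldots, i_d)$, so each generator lies in $I(x)$, and since $I(x)$ is an ideal the entire ideal $m_x$ lies in $I(x)$. For the reverse inclusion $I(x) \subseteq m_x$, the key step is a multivariate Taylor-type identity: for any $f \in \mathbb{K}[x_1, \ldots, x_d]$ one can write
\[
f(x_1, \ldots, x_d) = f(i_1, \ldots, i_d) + \sum_{j=1}^{d} (x_j - i_j)\, g_j(x_1, \ldots, x_d)
\]
for some $g_j \in \mathbb{K}[x_1, \ldots, x_d]$. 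This is obtained by rewriting $x_j = i_j + (x_j - i_j)$ in each monomial and expanding, or equivalently by iteratively applying polynomial division by $x_j - i_j$ in one variable at a time. If $f \in I(x)$, then $f(i_1, \ldots, i_d) = 0$, and the identity then shows $f \in m_x$.

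Finally, to see that $m_x$ is maximal, I would exhibit the evaluation map $\mathrm{ev}_x \colon \mathbb{K}[x_1, \ldots, x_d] \to \mathbb{K}$ sending $f \mapsto f(i_1, \ldots, i_d)$. This is a surjective ring homomorphism (constants map to themselves), and by the above characterization its kernel is exactly $m_x$. By the first isomorphism theorem, $\mathbb{K}[x_1, \ldots, x_d]/m_x \cong \mathbb{K}$, which is a field, and so $m_x$ is maximal. No step here is delicate; the only mildly nontrivial piece is the Taylor-style identity, and that just requires writing out each monomial in shifted coordinates.
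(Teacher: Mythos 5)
Your proof is correct. The paper states this lemma without proof, treating it as standard material from commutative algebra and citing Dummit and Foote for background, so there is no paper-internal argument to compare against; the double-inclusion argument for $I(S\cup W)=I(S)\cap I(W)$, the Taylor-style expansion $f = f(i) + \sum_j (x_j-i_j)g_j$ to get $I(x)\subseteq m_x$, and the evaluation-map/first-isomorphism-theorem argument for maximality are exactly the standard textbook proofs the authors are implicitly invoking.
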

 Since in our case $S$ is finite, one can apply this lemma a multitude of times to find $I(S)$. Moreover, this ideal only vanishes on the elements of $S$ and not on any other element in $\Omega \setminus S$.

\begin{example}
Suppose we work with $d=2$, and we only allow the set $V := \{ (-1,-1) , (1,1) \}$. We will have that $I(V):= \langle  x_1 +1 ,x_2 + 1 \rangle \cap \langle x_1 - 1,x_2 - 1 \rangle$. By doing the calculations or using an algebra program (e.g., SageMath) we find that
\begin{align*}
    I(V) := \langle x_1 - x_2 \rangle + I_\Omega.
\end{align*}   
\end{example}
So in general, for a certain $S \subseteq \Omega$, we would like to express, $I(S)$ as $\langle v_1, \ldots, v_n\rangle + I_\Omega$ for some desirable Boolean functions $v_1, v_2, v_3, \ldots, v_n$. In fact,  there are known algorithms that find a basis for an ideal \citep{moller1982construction}.

Before relating what we have defined to unseen domains, note that in our case the conditions only depend on a subset of the variables. Without loss of generality, suppose our conditions only depend on the  first $k$ coordinates. Mathematically, that means  $\mathcal{U} = \mathcal{U}_k \times \{-1,1\}^{d-k}$, where $\mathcal{U}_k \subseteq \{-1,1\}^k$. Hence, we have $\mathcal{U}^c = \mathcal{U}_k^c \times \{-1,1\}^{d-k}$. The following lemma allows us to calculate $I(\mathcal{U}^c)$ based on $I(\mathcal{U}^c_k)$.
\begin{lemma}
    Suppose that $\mathcal{U}^c = \mathcal{U}^c_k \times \{-1,1\}^{d-k}$ for some $\mathcal{U}^c_k \subseteq \{-1,1\}^k$, if $I(\mathcal{U}^c_k) = \langle v_1,v_2, \ldots , v_n \rangle + \langle x_1^2 -1, \ldots, x_k^2 - 1 \rangle$ for Boolean functions $v_1,v_2,\ldots, v_n$, we have
    \begin{equation*}
        I(\mathcal{U}^c) = \langle v_1,v_2, \ldots , v_n \rangle + I_\Omega.
    \end{equation*}
\end{lemma}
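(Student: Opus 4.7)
My plan is to prove the two ideal inclusions separately, with the reverse inclusion being the straightforward one and the forward inclusion requiring a reduction to multilinear form.

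For the inclusion $\langle v_1,\ldots,v_n\rangle + I_\Omega \subseteq I(\mathcal{U}^c)$, the argument is essentially a direct check. Each $v_i(x_1,\ldots,x_k)$ is a polynomial in only the first $k$ variables and vanishes on $\mathcal{U}^c_k$ by hypothesis; viewed as an element of $\mathbb{R}[x_1,\ldots,x_d]$, it therefore vanishes on any point $(y,z) \in \mathcal{U}^c_k \times \{\pm 1\}^{d-k} = \mathcal{U}^c$. Similarly every $x_j^2 - 1$ vanishes on all of $\{\pm 1\}^d \supseteq \mathcal{U}^c$. Any $R$-linear combination of such generators thus lies in $I(\mathcal{U}^c)$.

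For the harder inclusion $I(\mathcal{U}^c) \subseteq \langle v_1,\ldots,v_n\rangle + I_\Omega$, I would first use the generators $x_j^2 - 1 \in I_\Omega$ to polynomial-divide any $f \in I(\mathcal{U}^c)$ and produce a multilinear representative $\tilde f$ of $f$ modulo $I_\Omega$. Since the difference $f - \tilde f$ already lies in $I_\Omega$, it suffices to show $\tilde f \in \langle v_1,\ldots,v_n\rangle + I_\Omega$. I then decompose $\tilde f$ along the tail coordinates as
\begin{equation*}
\tilde f(x_1,\ldots,x_d) = \sum_{T \subseteq \{k+1,\ldots,d\}} f_T(x_1,\ldots,x_k)\, \chi_T(x),
\end{equation*}
where each $f_T \in \mathbb{R}[x_1,\ldots,x_k]$ is itself multilinear and $\chi_T(x) = \prod_{j \in T} x_j$.

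The key step is to show $f_T \in I(\mathcal{U}^c_k)$ for every $T$. Fix any $y \in \mathcal{U}^c_k$; then $\tilde f(y,z) = 0$ for all $z \in \{\pm 1\}^{d-k}$ because $(y,z) \in \mathcal{U}^c$. Viewed as a function of $z$, this is $\sum_T f_T(y)\,\chi_T(z) = 0$, and since the Fourier–Walsh characters $\{\chi_T\}_{T \subseteq \{k+1,\ldots,d\}}$ form an orthonormal basis for functions on $\{\pm 1\}^{d-k}$, each coefficient $f_T(y)$ must vanish. As $y \in \mathcal{U}^c_k$ was arbitrary, $f_T \in I(\mathcal{U}^c_k)$. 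Invoking the hypothesis, we can write $f_T = \sum_i a_{T,i} v_i + \sum_{j \leq k} b_{T,j}(x_j^2 - 1)$ with coefficients in $\mathbb{R}[x_1,\ldots,x_k]$. Substituting back and regrouping yields $\tilde f \in \langle v_1,\ldots,v_n\rangle + I_\Omega$, which completes the inclusion.

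The main obstacle I anticipate is being careful about the division-to-multilinear-form step and about tracking ideals in the two different polynomial rings $\mathbb{R}[x_1,\ldots,x_k]$ and $\mathbb{R}[x_1,\ldots,x_d]$; the generators $v_i$ and $x_j^2 - 1$ for $j \leq k$ must be reinterpreted as elements of the larger ring, and one must note that the remaining relations $x_j^2 - 1$ for $j > k$ are supplied by $I_\Omega$. Once this bookkeeping is in place, the Fourier–Walsh linear-independence argument is clean and gives exactly the required membership.
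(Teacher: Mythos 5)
The paper states this lemma without giving a proof, so there is nothing in the paper to compare your argument against. Your proof is correct and complete: the easy inclusion follows from direct evaluation of the generators on $\mathcal{U}^c$; the hard inclusion correctly reduces to a multilinear representative, decomposes it as $\tilde f = \sum_{T \subseteq \{k+1,\ldots,d\}} f_T(x_1,\ldots,x_k)\,\chi_T(x)$, uses linear independence of the tail characters $\{\chi_T\}$ on $\{\pm 1\}^{d-k}$ to conclude $f_T \in I(\mathcal{U}^c_k)$ for every $T$, and then substitutes the hypothesized generating expression for $f_T$ and regroups. The bookkeeping you flag --- lifting $v_i$ and $x_j^2-1$ ($j \le k$) from $\mathbb{R}[x_1,\ldots,x_k]$ into $\mathbb{R}[x_1,\ldots,x_d]$, and noting that $x_j^2-1$ for $j>k$ is already supplied by $I_\Omega$ --- is exactly the care required, and you handle it correctly.
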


Now having defined the vanishing ideals and quotients, we explain how they relate to our setting. In our setting, we are given $\mathcal{U} \subset \Omega =  \{-1, 1 \}^d$ representing the unseen domain, and a Boolean function $f$ that we wish to learn, which could be seen as an element of $R=\mR[x_1, \ldots, x_d]$. As we finish training, we will converge to a solution $f_{\mathrm{sol}}$ which is an interpolator of  $f$ on $\mathcal{U}^c$. This means that $f - f_{\mathrm{sol}}$ vanishes on $\mathcal{U}^c$ and so $f - f_{\mathrm{sol}} \in I(\mathcal{U}^c)$. Hence, $f + I(\mathcal{U}^c) = f_{\mathrm{sol}} + I(\mathcal{U}^c)$, which means that $f_{\mathrm{sol}}$ is a representative of the class $f + I(\mathcal{U}^c)$ in the ring $R/I(\mathcal{U}^c)$. Here we are interested in the minimum degree-profile interpolator, and  our goal is to classify given a $f$ and $\mathcal{U}$, the minimum degree-profile representatives of $f + I(\mathcal{U}^c)$ in the ring $R / I(\mathcal{U}^c)$. This gives us an overview of how our settings can be related to algebraic notions.

\subsection{Minimum Degree-Profile Interpolators}

We are generally interested in finding the minimum degree-profile interpolators. One way to do this is as follows: given a Boolean function $f\colon \{-1,1\}^d \to \mR$ which we suppose depends only on variables $x_1, \ldots, x_P$ for some integer $P$ and an unseen set $\mathcal{U} \subseteq \Omega $, we find Boolean functions $v_1, \ldots, v_n$ which only depend on the variables $x_1, \ldots, x_P$ such that $I(\Omega \setminus \mathcal{U}) = \langle v_1,v_2, \ldots, v_n \rangle + I_{\Omega}$. We know that minimum degree interpolator $f_{\mathrm{MDI}}$ is of the form
\begin{equation*} \label{eq:min_prof_gen_form}
   f_{\mathrm{MDI}} =  f + g_1 v_1 + \dots g_n v_n,
\end{equation*}
for some  Boolean functions $g_1, \ldots, g_n$. Now note that if we look at the equation above through the lens of Fourier-Walsh expansion, we realize that coefficients of ${f}_{\mathrm{MDI}}$ are linear combinations of Fourier coefficients of $g_1, \ldots, g_n$. One can use this structure to minimize the elements of the degree-profile one by one since each element of the degree-profile is a quadratic expression in Fourier coefficients of $g_1, \ldots, g_n$. Therefore, one can solve these second-degree optimization problems to calculate the unique MD interpolator.

The process presented above is quite long, but there are some cases for which it is easier to find the minimum degree-profile interpolator. We will present some examples below.

\begin{example} [Generalized canonical holdout]
    Given a point in $ \{-1, 1 \}^k$ that is $ i =  (i_1, \ldots, i_k) \in \{-1, -1 \}^k$, for $\mathcal{U} = ( \{-1, 1 \}^k \setminus \{i \} ) \times \{-1, 1 \}^{d-k}$ and for any Boolean function $f$, the minimum degree-profile interpolator can be found as follows: we first notice that $I(\Omega \setminus \mathcal{U}) = \langle x_1 - i_1 , \ldots, x_k - i_k \rangle + I_\Omega$. And so given $f$, the minimum degree-profile interpolator corresponds to $f_{\mathrm{MDI}}(x_1, \ldots, x_k, x_{k+1}, \ldots, x_d) = f(i_1, \ldots, i_k, x_{k+1}, \ldots, x_d)$.
\end{example}

\begin{example} 
    For $\mathcal{U} = \{(-1,-1), (1,1)\} \times \{-1,1\}^{d-2}$ and for any Boolean function $f$, the MD interpolator can be computed by noticing that $I(\Omega \setminus \mathcal{U}) = \langle x_1+x_2 \rangle + I_\Omega$. Hence, given an $f$ and in order to find the MD interpolator one should replace any $x_1$ found by $\frac{1}{2}(x_1 - x_2)$ and all $x_2$ by $\frac{1}{2}(x_2 - x_1)$.
\end{example}
Here is another case where it is easy to find the MD interpolator. We further present a proof for why it is the MD interpolator in this case.
\begin{lemma}
Let $i = (i_1, \ldots, i_k) \in \{ -1, 1 \}^k$ be any point. For any Boolean function $f$ and $\mathcal{U} = i \times \{-1,1\}^{d-k}$, we have that $f$ has a minimum degree-profile interpolator given by replacing all $x_1x_2\cdots x_k$ found by another polynomial $g'(x_1,\ldots ,x_k)$ which can be determined.
\end{lemma}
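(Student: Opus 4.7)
The plan is to identify the Boolean vanishing ideal of the seen set, write every interpolator in the form $f+hv$, and then minimize the degree-profile block-by-block via a single decoupling identity. I would first set $v(x)=\prod_{j=1}^{k}(1+i_jx_j)=\sum_{S\subseteq[k]}\chi_S(i)\chi_S(x)$ and check directly that $v$ vanishes on $\Omega\setminus\mathcal{U}$ with $v(i)=2^k$. Combining the Boolean identity $v^{2}=2^{k}v$ (so $v/2^{k}$ is the projector onto the indicator of $\mathcal{U}$) with a dimension count inside $\mR[x_1,\ldots,x_d]/I_\Omega$ shows that $I(\Omega\setminus\mathcal{U})$ has dimension $2^{d-k}$ and coincides with the principal ideal $(v)$. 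Consequently every interpolator of $f$ can be written as $f+hv$ for some Boolean $h$, and the problem reduces to choosing $h$ that minimizes the degree-profile of $f+hv$.

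Next I would exploit a block structure. Every $R\subseteq[d]$ factors uniquely as $R=T\sqcup S$ with $T\subseteq[d]\setminus[k]$ and $S\subseteq[k]$, and any Boolean $h$ decomposes as $h=\sum_T\chi_T\,h_T(x_{[k]})$. A short Fourier computation using the expansion of $v$ and the identity $\chi_S(i)\chi_U(i)=\chi_{S\triangle U}(i)$ yields the key identity
\begin{equation*}
[\chi_T\chi_S](hv)\;=\;\chi_S(i)\,h_T(i),\qquad T\subseteq[d]\setminus[k],\;S\subseteq[k].
\end{equation*}
Thus within block $T$ the only quantity that adding $hv$ can adjust is the single scalar $\alpha_T:=h_T(i)$, and these scalars may be prescribed independently across blocks (e.g.\ by taking each $h_T$ constant).

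I would then perform a lex-min induction from the top degree downward. In block $T$ the new coefficient of $\chi_T\chi_S$ equals $\hat f(T\cup S)+\chi_S(i)\alpha_T$, and the unique top-degree contribution of that block lies at degree $|T|+k$ (the $S=[k]$ term). At each degree $n$ (descending), the blocks with $|T|=n-k$ contribute a brand-new top-degree coefficient; since different blocks are independent and $\alpha_T\mapsto(\text{top coeff.})$ is affine-surjective, the unique lex-optimal choice is forced at every step to be $\alpha_T=-\chi_{[k]}(i)\,\hat f(T\cup[k])$. After all $\alpha_T$ are fixed no freedom remains; the resulting $f_{\mathrm{MDI}}$ satisfies $\hat f_{\mathrm{MDI}}(R)=0$ for every $R\supseteq[k]$, and any deviation at the largest-$|T|$ block where it occurs strictly enlarges the top-degree sum of squares.

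Finally, to extract the closed form, I would write $f=A+B\cdot\chi_{[k]}$ with $A$ containing no $\chi_{[k]}$ factor and $B=\sum_T\hat f(T\cup[k])\chi_T$. The algebraic identity $\chi_{[k]}\cdot v=\prod_j(x_j+i_j)=\sum_{S\subseteq[k]}\chi_{[k]\setminus S}(i)\,\chi_S$ (using $x_j^2=1$) then gives
\begin{equation*}
f_{\mathrm{MDI}}=A+B\cdot g'(x_1,\ldots,x_k),\qquad g'(x):=-\sum_{S\subsetneq[k]}\chi_{[k]\setminus S}(i)\,\chi_S(x),
\end{equation*}
which is exactly the rule ``replace every occurrence of $x_1x_2\cdots x_k$ by $g'$''; one can also verify that $g'$ is the unique degree-$\le k{-}1$ polynomial agreeing with $\chi_{[k]}$ on $\{\pm1\}^k\setminus\{i\}$. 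The main obstacle I expect is establishing the clean decoupled Fourier identity in the second step: once the coefficient shift is shown to factor as $\chi_S(i)\,h_T(i)$, the lex-min argument reduces to a short induction and the closed form to a one-line algebraic rearrangement.
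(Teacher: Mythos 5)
Your proof is correct and takes a genuinely different route from the paper's. The paper first guesses the replacement rule, then proves optimality by contradiction: it takes the highest-degree changed coefficient $\chi_M$, decomposes $M=T\cup R$ and $q=s\chi_R+q'$, and shows by a direct Fourier computation that the coefficient of $\chi_T$ in $s\cdot\prod_j(x_j+i_j)$ is proportional (by the factor $\prod_{n\in[k]\setminus T}i_n$) to the coefficient of $\chi_{[k]}$ — so if $\chi_{[k]\cup R}$ is untouched, $\chi_M$ is untouched, a contradiction. You instead isolate the same algebraic fact as an upfront decoupling identity, $[\chi_T\chi_S](hv)=\chi_S(i)\,h_T(i)$, and from it run a direct, constructive lex-min descent over degrees: at each degree $n$ the only newly free scalars are $\alpha_T=h_T(i)$ with $|T|=n-k$, each appearing in exactly one square $(\hat f(T\cup[k])+\chi_{[k]}(i)\alpha_T)^2$, so the lex-optimal choice is forced and uniqueness falls out for free. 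This is arguably cleaner: it derives the form of $f_{\mathrm{MDI}}$ and the closed form of $g'$ rather than verifying a guess, and it makes the block structure (one scalar degree of freedom per $T\subseteq[d]\setminus[k]$, i.e.\ $2^{d-k}$ free parameters, matching $\dim I(\mathcal{U}^c)/I_\Omega$) explicit. What the paper's route buys is brevity once the candidate is in hand, avoiding the bookkeeping of the descending induction. Both proofs ultimately hinge on the same identity $\prod_{n\in A\Delta B}i_n=\prod_{n\in A}i_n\prod_{n\in B}i_n$; you simply package it so that the whole coefficient shift is read off in one line.
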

This is an expected result, we provide nonetheless a formal proof. 
\begin{proof}
We have $I = I(\Omega \setminus \mathcal{U}) = \langle (x_1 + i_1)(x_2 + i_2) \cdots (x_k + i_k) \rangle + I_\Omega$. By expanding $(x_1 + i_1)(x_2+ i_2) \cdots (x_k + i_k)$, we get an expression of the form
\begin{align*}
    x_1x_2\cdots x_k + g(x_1,x_2,\ldots,x_k) 
\end{align*}
with $g(x_1,x_2,\ldots,x_k)$ containing all the possible monomials consisting of $x_1, \ldots,x_k$ of degree strictly less than $k$ with coefficients being $1$ or $-1$.  Consider the polynomial $f_{\mathrm{MDI}}$, by replacing all the $x_1x_2\cdots x_k p(x_{k+1}, \ldots, x_d)$ that appears in $f$ by $-g(x_1,x_2,\ldots,x_k) p(x_{k+1}, \ldots, x_d)$. We claim that $f_{\mathrm{MDI}}$ is the minimum degree-profile interpolator. In fact, suppose that this is not the case, so there exists a polynomial $q$  such that
\begin{equation*}
    (f_{\mathrm{MDI}} + q(x_1 + i_1)(x_2 + i_2) \cdots (x_k + i_k) ) \text{ modulo $I_\Omega$}
\end{equation*}
is not equal to and has a lower degree-profile than $f_{\mathrm{MDI}}$. For this to happen, we need at least one monomial of $f_{\mathrm{MDI}}$ to be (partly) replaced by the same degree or  lower degree alternatives. Among all such monomials, we consider the highest degree one, $\chi_M = \prod_{i\in M} x_i$. We assume that $M = T \cup R$ such that $T \subseteq [k]$ and $R \cap [k] = \emptyset$. Note that monomials that contained $x_1x_2\cdots x_k$ are already replaced, hence, $T \neq [k]$. We write $q$ in the form of $q(x) = s(x_1, \ldots, x_k)\chi_R + q'(x)$ where  $q'(x)$ does not contain any monomial of the form $\chi_R\chi_{T'}$ for $T' \subseteq [k]$. Note that by our assumption $ q(x)(x_1 + i_1)\cdots (x_k + i_k)$, and thus, $ s(x_1, \ldots, x_k)\chi_R(x_1 + i_1)\cdots (x_k + i_k)$ must have generated $\beta \chi_M$, for some $\beta \neq 0$. Note that $\chi_M$ is the highest degree monomial (partly) replaced by $q$. Thus, $\chi_{[k] \cup R}$, is not generated by $q$, otherwise, the degree-profile would have been increased. In other words, $ s(x_1, \ldots, x_k)\chi_R(x_1 + i_1)\cdots (x_k + i_k)$ must have generated $\beta \chi_M$ ($\beta \neq 0$) and not $\chi_{[k] \cup R}$. We now show that such a thing is impossible and reaches a contradiction. Assume that $s(x_1, \dots, x_k) = \sum_{U \subseteq [k]} \alpha_U \chi_U$. Notice we can remove the $\chi_R$ part from the question. Thus, we can consider the equivalent statement that $ s(x_1, \ldots, x_k)(x_1 + i_1)\cdots (x_k + i_k)$  does not generate $x_1\cdots x_k$ while it generates $\beta \chi_T$. Now we compute the coefficients of $\chi_T$ and $\chi_{[k]} = x_1\cdots x_k$ in $ s(x_1, \ldots, x_k)(x_1 + i_1)\cdots (x_k + i_k)$. We have 
\begin{align*}
    &s(x)(x_1 + i_1)\cdots (x_k + i_k) =  s(x)(\sum_{V \subseteq [k]} \chi_V \prod_{n \in [k] \setminus V} i_n) \nonumber\\
    &= (\sum_{U \subseteq [k]} \alpha_U \chi_U)(\sum_{V \subseteq [k]} \chi_V \prod_{n \in [k] \setminus V} i_n) \nonumber \\
    &= (\sum_{U \subseteq [k]} \alpha_U \prod_{n \in U} i_n)\chi_{[k]} + \cdots + (\sum_{U \subseteq [k]} \alpha_U  \prod_{n \in [k] \setminus (T \Delta U)} i_n)\chi_T + \cdots,
\end{align*}
and the coefficient of $\chi_T$ is $\sum_{U \subseteq [k]} \alpha_U  \prod_{n \in [k] \setminus (T \Delta U)} i_n = \beta$. Using $[k] \setminus (U \Delta T) = U \Delta ([k] \setminus T)$, we have 
\begin{align*}
    \beta &= \sum_{U \subseteq [k]} \alpha_U  \prod_{n \in [k] \setminus (T \Delta U)} i_n = \sum_{U \subseteq [k]} \alpha_U  \prod_{n \in ([k] \setminus T) \Delta U} i_n \\
    &=  (\prod_{n \in [k] \setminus T} i_n)(\sum_{U \subseteq [k]} \alpha_U \prod_{n \in  U} i_n) = 0,
\end{align*}
where we used the fact that $i_j^2=1 \;\; \forall j$. Hence, $\beta = 0$ which is a contradiction, showing that it is not possible to reduce the degree-profile. 
\end{proof}

\vskip 0.2in
\bibliography{references}

\end{document}